\documentclass[11pt,letterpaper]{article}


\usepackage{amsmath,amssymb,amsthm}
\PassOptionsToPackage{nameinlink}{cleveref}
\usepackage{deepthink}
\usepackage{cleveref}
\usepackage{subcaption}
\usepackage{tcolorbox} 
\usepackage[utf8]{inputenc} 
\usepackage[T1]{fontenc}    
\usepackage{url}
\usepackage{amsmath,amsthm,amssymb,amsbsy,amsfonts,amscd,bm}
\usepackage{xcolor}
\usepackage{color}
\usepackage{graphicx}
\graphicspath{{./figs/}}
\usepackage{algorithm}
\usepackage{algorithmic}
\usepackage{comment}
\usepackage{multirow}
\usepackage{enumitem}
\usepackage{fancyhdr}
\usepackage{authblk}
\usepackage{subcaption}
\usepackage{wrapfig}
\usepackage[toc, page]{appendix}
\usepackage[numbers]{natbib} 

\newtheorem{lemma}{Lemma}
\newtheorem{definition}{Definition}
\newtheorem{assum}{Assumption}

\newtheorem{theorem}{Theorem}

\theoremstyle{remark}



\newcommand{\R}{\mathbb{R}}



\newcommand{\e}{\begin{equation}}
\newcommand{\ee}{\end{equation}}
\newcommand{\en}{\begin{equation*}}
\newcommand{\een}{\end{equation*}}
\newcommand{\eqn}{\begin{eqnarray}}
\newcommand{\eeqn}{\end{eqnarray}}
\newcommand{\bmat}{\begin{bmatrix}}
\newcommand{\emat}{\end{bmatrix}}

\DeclareMathAlphabet\mathbfcal{OMS}{cmsy}{b}{n}




\newcommand{\mtx}[1]{\boldsymbol{#1}}






\newcommand{\diag}{\operatorname{diag}}

%








\hypersetup{
    colorlinks=true,%
    citecolor=blue,%
    filecolor=blue,%
    linkcolor=blue,%
    urlcolor=blue
}

\newcommand{\mO}{\mtx{O}}

\setcounter{MaxMatrixCols}{20}

\graphicspath{{./figs/}}

\newlength{\imgwidth}
\setlength{\imgwidth}{3.125in}

\newboolean{twoColVersion}
\setboolean{twoColVersion}{false}
\newcommand{\twoCol}[2]{\ifthenelse{\boolean{twoColVersion}} {#1} {#2} }



\newtheorem{proposition}{\bf{Proposition}}


\long\def\comment#1{}


















\newcommand{\bx}{\boldsymbol{x}}

\newcommand{\bz}{\boldsymbol{z}}
\newcommand{\bA}{\boldsymbol{A}}

\newcommand{\bI}{\boldsymbol{I}}

\newcommand{\bU}{\boldsymbol{U}}
\newcommand{\bV}{\boldsymbol{V}}
\newcommand{\bW}{\boldsymbol{W}}
\newcommand{\bX}{\boldsymbol{X}}
\newcommand{\bY}{\boldsymbol{Y}}

\long\def\red#1{\bgroup\color{red}#1\egroup}

\definecolor{mich-blue}{HTML}{0027CC}
\definecolor{mich-blue-high}{HTML}{0027CC}
\definecolor{red-high}{HTML}{CA2020}
\definecolor{green-high}{HTML}{20A520}
\definecolor{mich-maize}{HTML}{FFCB05}
\definecolor{law-stone}{HTML}{655A52}
\definecolor{burton-beige}{HTML}{9B9A9D}
\definecolor{arch-ivy}{HTML}{7E732F}

 \colorlet{color1}{gray!15}

\setcounter{topnumber}{4}
\setcounter{bottomnumber}{2}
\setcounter{totalnumber}{6}


\title{Understanding Deep Representation Learning via Layerwise Feature Compression and Discrimination}

\newcommand{\jointfirst}{\textsuperscript{\dag}}
\newcommand{\corrauth}{\textsuperscript{\ddag}}

\authorblock{
  \href{https://peng8wang.github.io/}{\textbf{Peng Wang}}\jointfirst\corrauth\textsuperscript{1},
  \href{https://heimine.github.io/}{\textbf{Xiao Li}}\jointfirst\textsuperscript{1},
  \href{https://canyaras.com/}{\textbf{Can Yaras}}\textsuperscript{1},
  \href{https://zhihuizhu.github.io/}{\textbf{Zhihui Zhu}}\textsuperscript{2},
  \href{https://web.eecs.umich.edu/~girasole/}{\textbf{Laura Balzano}}\textsuperscript{1},
  \href{https://weihu.me/}{\textbf{Wei Hu}}\textsuperscript{1},
  \href{https://qingqu.engin.umich.edu/}{\textbf{Qing Qu}}\corrauth\textsuperscript{1}
}

\affiliation{
  \textsuperscript{1}University of Michigan \quad $\cdot$ \quad \textsuperscript{2}Ohio State University
}

\authornote{
  \jointfirst\ Joint first author \quad
  \corrauth\ Corresponding author
}

\abstracttext{
\noindent Over the past decade, deep learning has proven highly effective at learning meaningful features from raw data. However, it remains an open question how deep networks perform hierarchical feature learning across layers. We investigate this by examining the structure of intermediate features. Motivated by our empirical finding that linear layers mimic the role of deep layers in nonlinear networks, we study how deep linear networks (DLNs) transform input into output by analyzing the per-layer features in multi-class classification. To this end, we define metrics for within-class compression and between-class discrimination of intermediate features. Through theoretical analysis, we show that when the input data is nearly orthogonal and the network weights are minimum-norm, balanced, and approximately low-rank, feature evolution follows a simple, quantitative pattern from shallow to deep layers: each layer compresses within-class features at a geometric rate and discriminates between-class features at a linear rate, both with respect to layer depth. To our knowledge, this is the first quantitative characterization of feature evolution in hierarchical DLN representations. Extensive experiments validate our theory and reveal a similar pattern in deep nonlinear networks, consistent with recent empirical studies. Finally, we show the practical value of these results for transfer learning.
}

\keywords{Deep representation learning, Low-dimensional structures, Linear discriminative representation, Feature compression and discrimination }

\date{\today}
\correspondence{\href{mailto:pengw@um.edu.mo}{pengw@um.edu.mo},\quad \href{mailto:xlxiao@umich.edu}{xlxiao@umich.edu}}
\resources{\href{https://heimine.github.io/PNC_DLN//}{Project Page} $\cdot$ \href{https://github.com/Heimine/PNC_DLN}{Code Repository}}


\headerlogo{}{https://deepthink-umich.github.io}

\begin{document}

\makeDeepthinkHeader



\vspace{-0.2in}
\begin{figure}[h]
    \centering
    \begin{subfigure}{0.49\textwidth}
    \includegraphics[width = 0.8\linewidth]{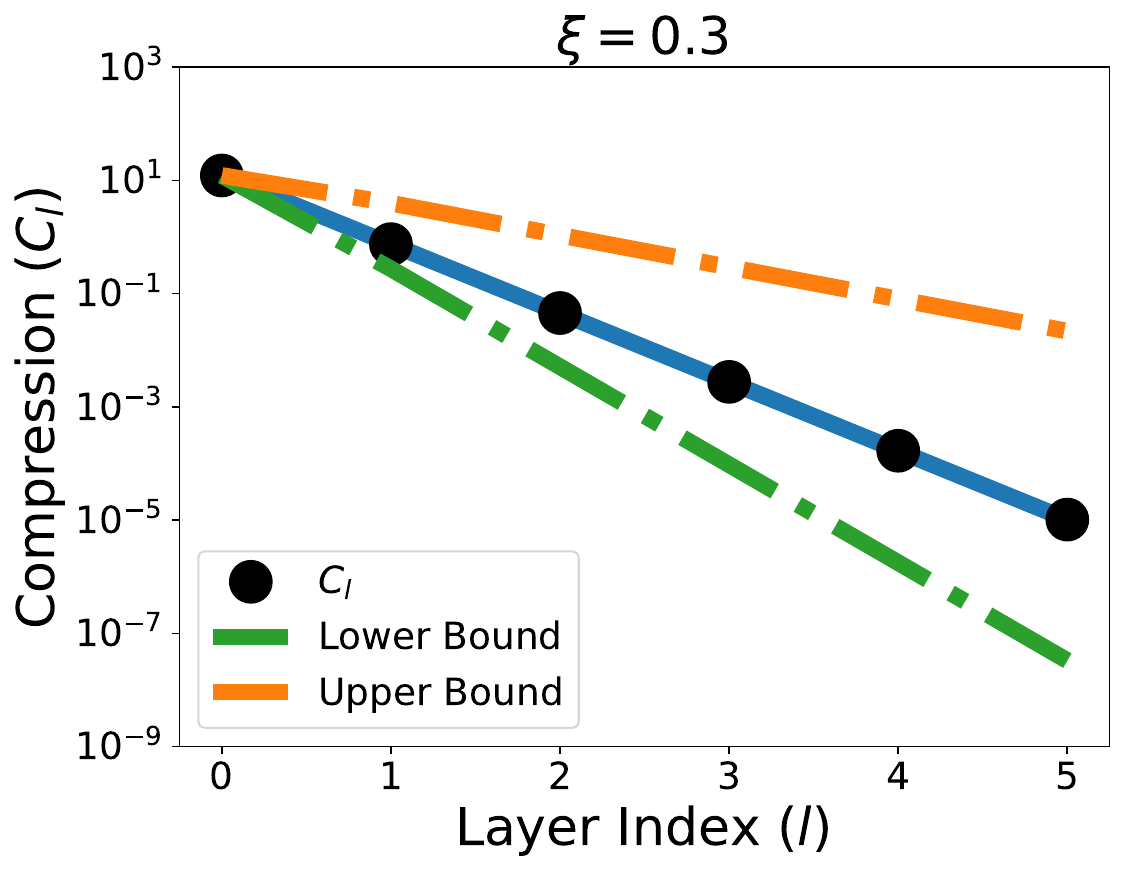}
    \caption{\footnotesize Within-class compression $C_l = \frac{\mathrm{Tr}(\bm{\Sigma}_W^l)}{\mathrm{Tr}(\bm{\Sigma}_B^l)}$} 
    \end{subfigure} 
    \begin{subfigure}{0.49\textwidth}
    \includegraphics[width = 0.8\linewidth]{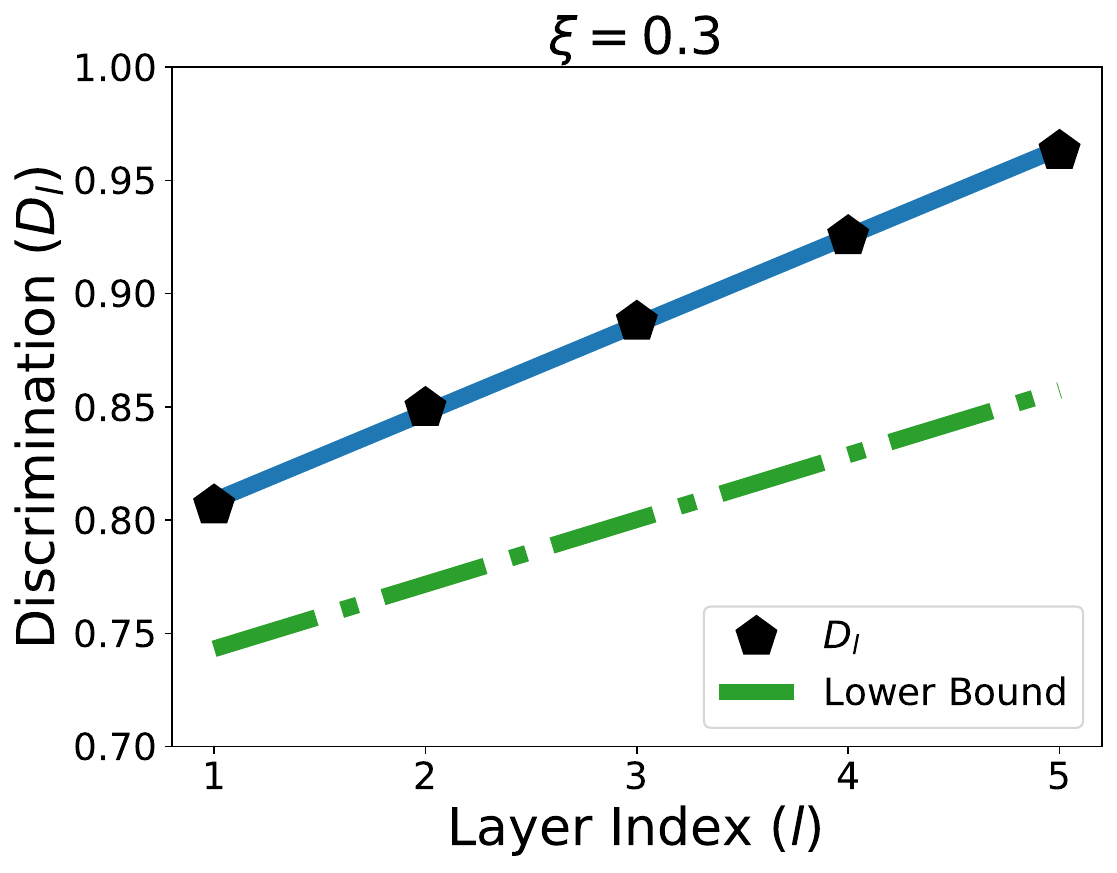}
    \caption{\footnotesize Between-class discrimination $D_l = 1 - \max_{k \neq k'} \frac{\langle \bm{\mu}_k^l, \bm{\mu}_{k'}^l \rangle}{\|\bm{\mu}_k^l\| \|\bm{\mu}_{k'}^l\|}$} 
    \end{subfigure} 
    \vspace{-0.1in}
    \caption{\footnotesize \textbf{Theoretical bounds on feature compression and discrimination.} Here, $\bm{\Sigma}_W^l$ and $\bm{\Sigma}_B^l$ denote the within-class and between-class covariance of features at layer $l$, and $\bm{\mu}_k^l$ denotes the mean feature of the $k$-th class. For a 6-layer DLN trained with $\xi=0.3$, the within-class compression metric $C_l$ (left) tightly matches its bounds in \eqref{eq2:compres}, while the between-class discrimination metric $D_l$ (right) follows the bound in \eqref{eq:discri}.}
\label{fig:check_bound_teaser}
\end{figure}




\newpage
\tableofcontents

\newpage
\section{Introduction}\label{sec:intro} 

In the past decade, deep learning has exhibited remarkable success across a wide range of applications in engineering and science \citep{lecun2015deep}, such as computer vision \citep{he2016deep,simonyan2014very}, natural language processing \citep{sutskever2014sequence,vaswani2017attention}, and health care \citep{esteva2019guide}, to name a few. It is commonly believed that one major factor contributing to the success of deep learning is its ability to perform {\em hierarchical feature learning}: deep networks can leverage their hierarchical architectures to extract meaningful and informative features\footnote{In deep networks, the (last-layer) feature typically refers to the output of the penultimate layer, which is also called {\em representation} in the literature.} from raw data \citep{allen2023backward,krizhevsky2012imagenet}. Despite recent efforts to understand deep networks, the underlying mechanism of how deep networks perform hierarchical feature learning across layers still remains a mystery, even for classical supervised learning problems, such as multi-class classification. Gaining deeper insight into this question will offer theoretical principles to guide the design of network architectures \citep{he2023law}, shed light on generalization and transferability \citep{li2022principled}, and facilitate network training \citep{xie2022hidden}.

\paragraph{Empirical results on feature expansion and compression.} Towards opening the black box of deep networks, extensive empirical research has been conducted in recent years by investigating outputs at each layer of deep networks. An intriguing line of research has empirically investigated the role of different layers in feature learning; see, e.g., \cite{alain2016understanding,ansuini2019intrinsic,chen2022layer,masarczyk2023tunnel,recanatesi2019dimensionality,zhang2022all}. In general, these empirical studies demonstrate that the initial layers expand the intrinsic dimension of features to make them linearly separable, while the subsequent layers compress the features progressively; see \Cref{fig:intro1}. For example, in image classification tasks, \cite{alain2016understanding} observed that the features of intermediate layers are increasingly linearly separable as we reach the deeper layers. Recent works \citep{ansuini2019intrinsic,recanatesi2019dimensionality}  studied the evolution of the intrinsic dimension of intermediate features across layers in trained networks using different metrics. They both demonstrated that the dimension of the intermediate features first blows up and subsequently goes down from shallow to deep layers. More recently, \cite{masarczyk2023tunnel} delved deeper into the role of different layers and concluded that the initial layers create linearly separable intermediate features, while later layers compress these features progressively.  


\begin{figure*}[t]
    \begin{subfigure}{0.48\textwidth}
    \includegraphics[width = 0.95\linewidth]{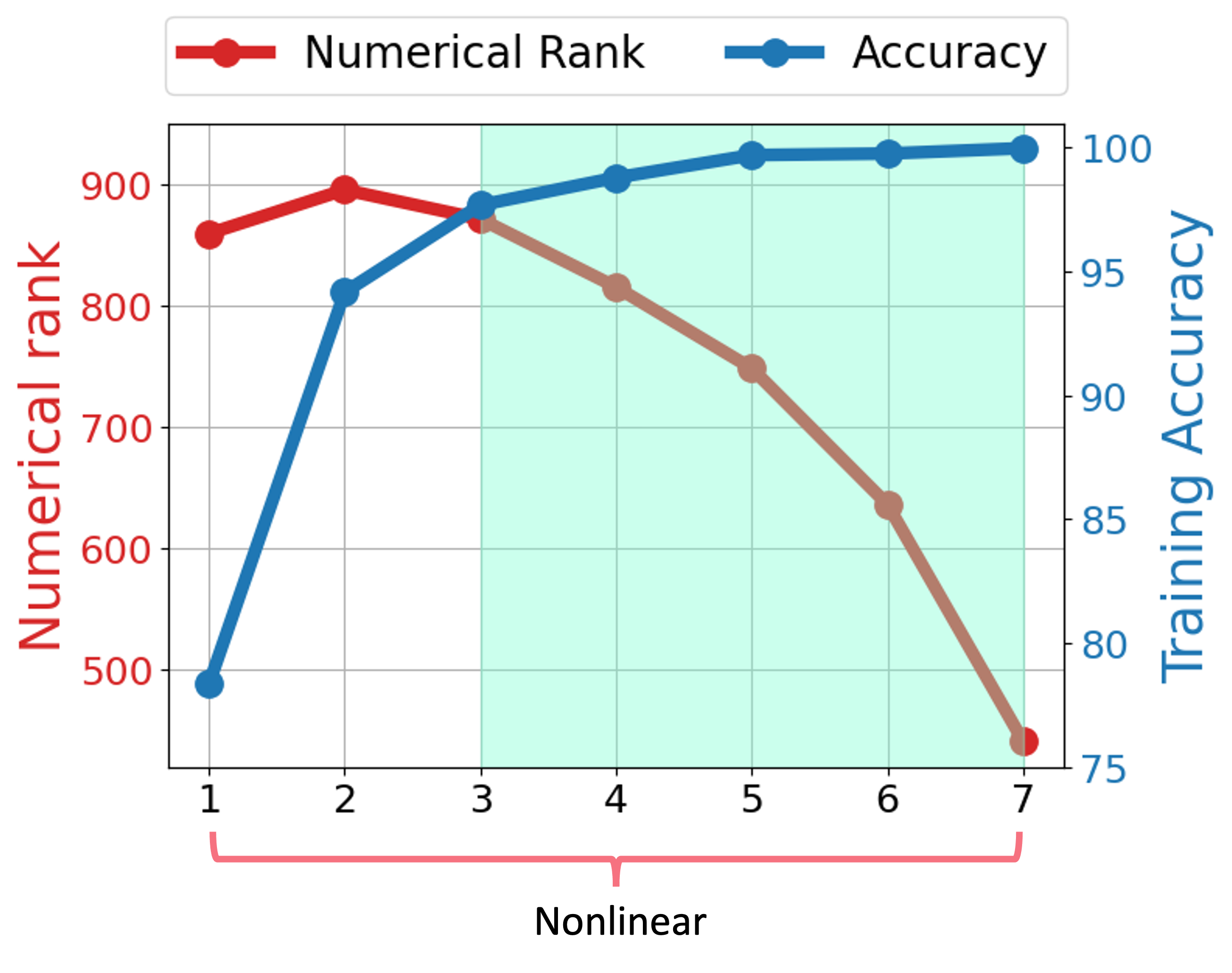}
    \caption{A 8-layer MLP network with ReLU} 
    \end{subfigure} 
    \begin{subfigure}{0.48\textwidth}
    \includegraphics[trim={0 0 0 1cm}, width = 0.94\linewidth]{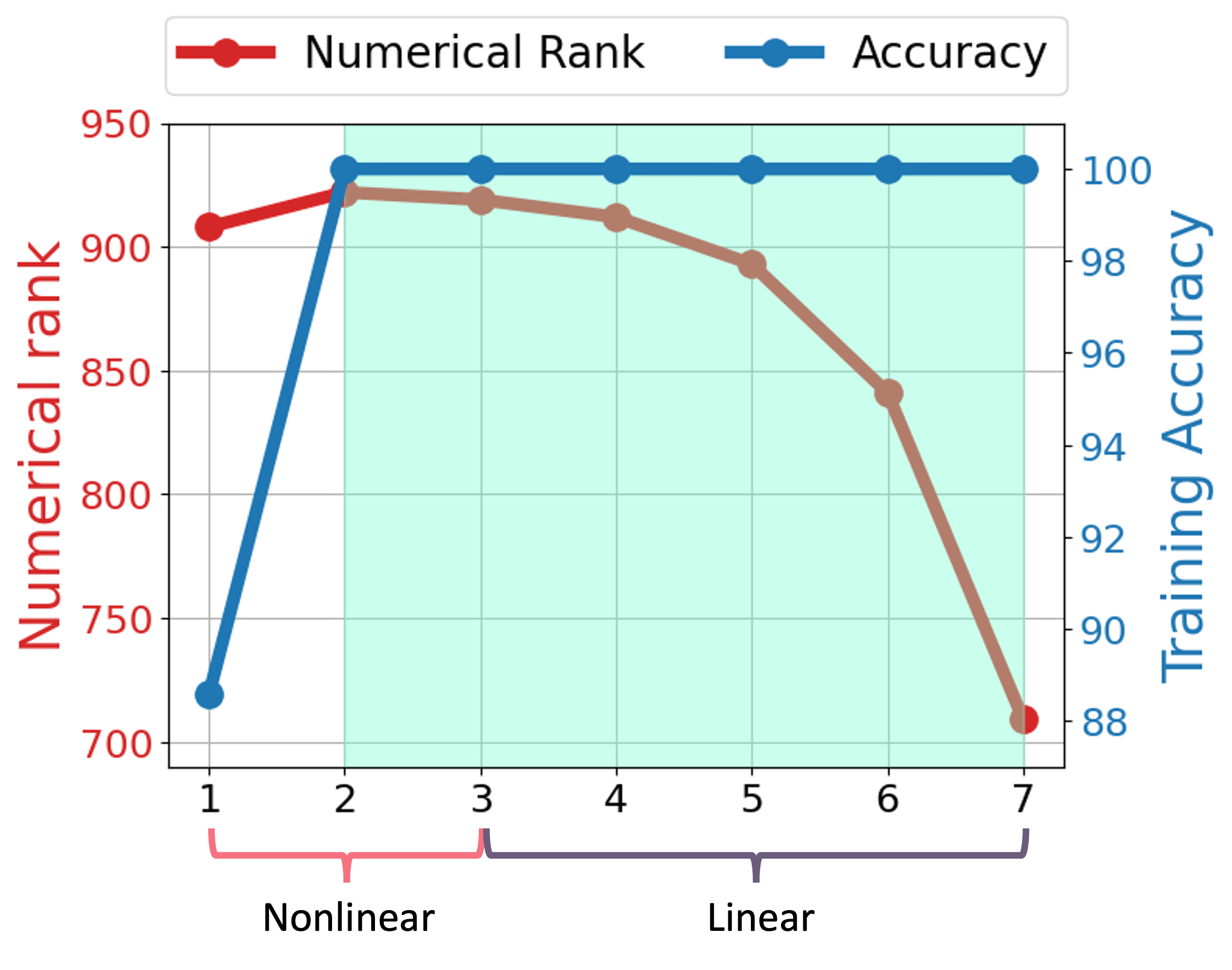}
    \caption{A 8-layer hybrid network} 
    \end{subfigure} 
   \caption{\textbf{Illustration of numerical rank and training accuracy across layers.} We train two networks with different architectures on the CIFAR-10 dataset: (a) A 8-layer multilayer perceptron (MLP) network with ReLU activation, (b) A hybrid network consisting of a 3-layer MLP with ReLU activation followed by a 5-layer linear network. For each figure, we plot the numerical rank of the features of each layer and the training accuracy obtained by applying linear probing to the output of each layer, both against the number of layers. The green shading indicates that the features at these layers are {\em approximately} linearly separable, as evidenced by the near-perfect accuracy achieved by a linear classifier. The definition of numerical rank and additional experimental details are deferred to \Cref{subsubsec:resemb_1}.} 
    \label{fig:intro1}
\end{figure*}

\paragraph{Empirical results on feature compression and discrimination.} Meanwhile, recent works \citep{papyan2020prevalence,Han2021,fang2021exploring,zhu2021geometric} have provided systematic studies on the structures of intermediate features. They revealed a fascinating phenomenon termed neural collapse (NC) during the terminal phase of training deep networks and across many different datasets and model architectures. Specifically, NC refers to a training phenomenon in which the last-layer features from the same class become nearly identical, while those from different classes become maximally linearly separable. In other words, deep networks learn within-class compressed and between-class discriminative features. Building upon these studies, a more recent line of work investigated the NC properties at each layer to understand how features are transformed from shallow to deep layers; see, e.g., \cite{ben2022nearest,he2023law,hui2022limitations,galanti2022implicit,rangamani2023feature}. In particular, \cite{he2023law} empirically showed that a progressive NC phenomenon, governed by a law of data separation, occurs from shallow to deep layers. 
\cite{rangamani2023feature} empirically showed that similar NC properties emerge in intermediate layers during training, where within-class variance decreases relative to the between-class variance as layers go deeper.

\begin{figure*}[t]
    \begin{subfigure}{1.0\textwidth}
    \includegraphics[width = 0.24\linewidth]{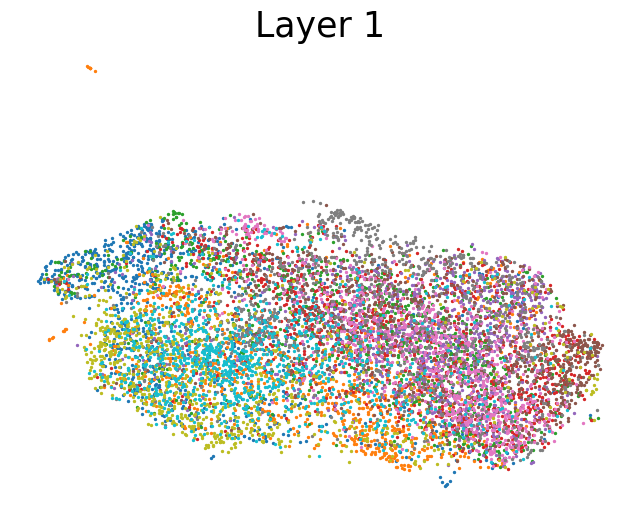}
    \includegraphics[width = 0.24\linewidth]{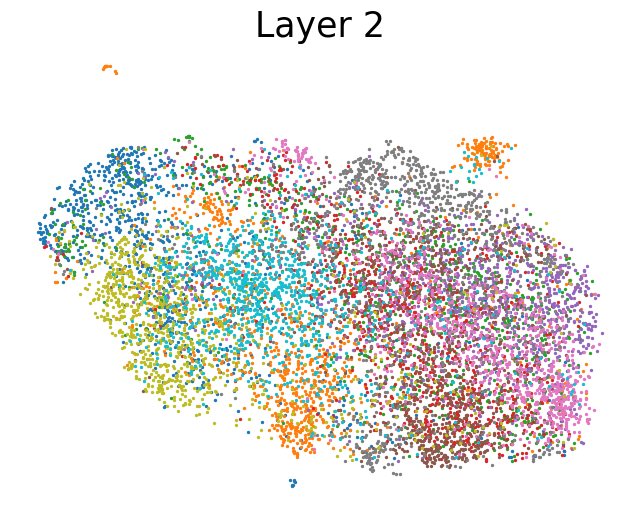}
    \includegraphics[width = 0.24\linewidth]{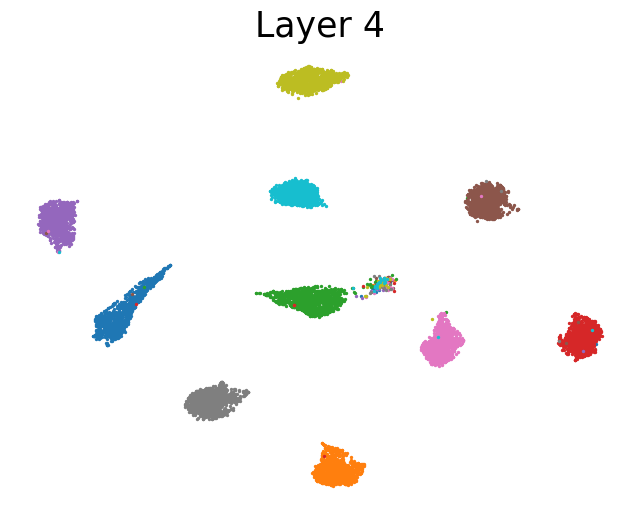}
    \includegraphics[width = 0.24\linewidth]{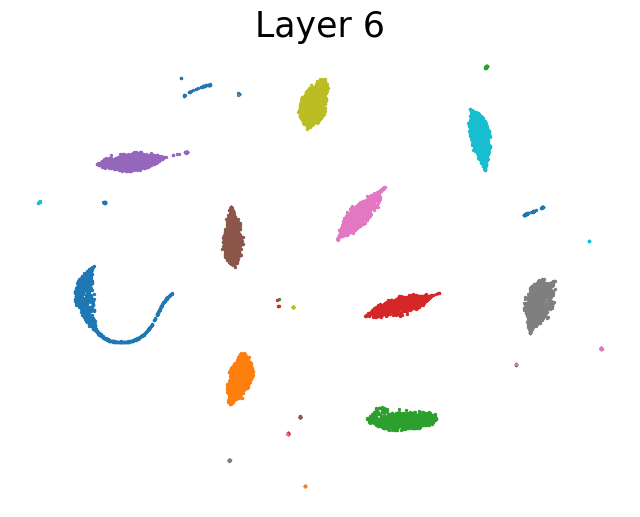}
    \caption{A 8-layer nonlinear MLP} 
    \end{subfigure} 

    \vspace{1mm}
    \begin{subfigure}{1.0\textwidth}
    \includegraphics[width = 0.24\linewidth]{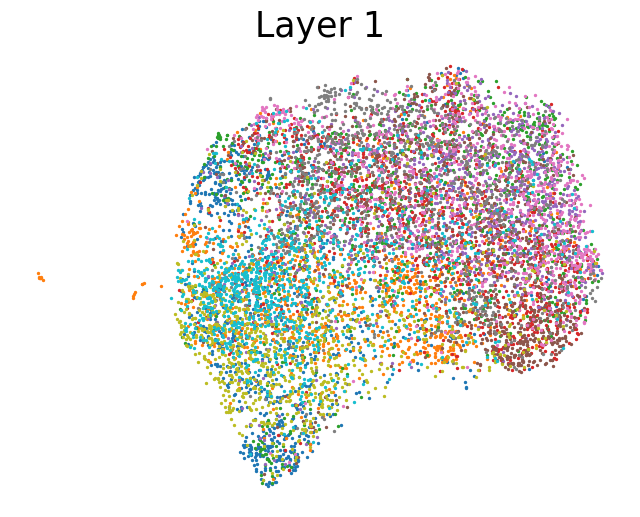}
    \includegraphics[width = 0.24\linewidth]{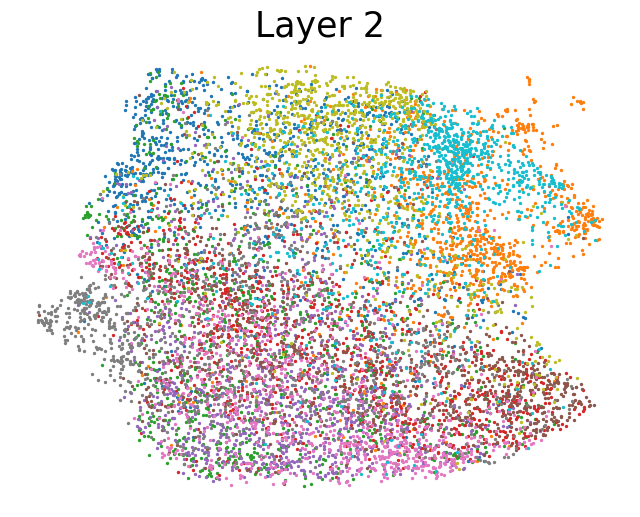}
    \includegraphics[width = 0.24\linewidth]{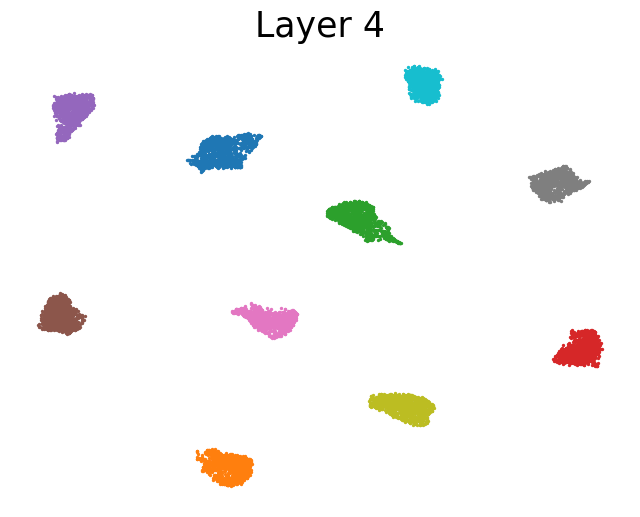}
    \includegraphics[width = 0.24\linewidth]{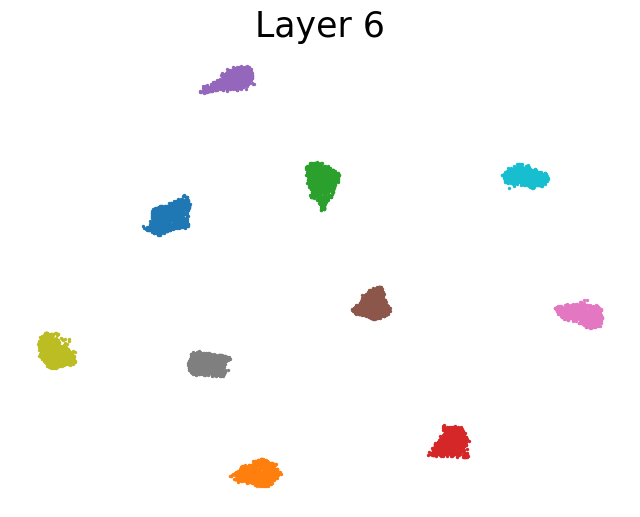}
    \caption{A hybrid network with 3-layer MLP + 5-layer DLN} 
    \end{subfigure} 

    \caption{\textbf{Visualization of feature compression \& discrimination from shallow to deep layers.} We consider the same setup as in \Cref{fig:intro1}.  For each network, we visualize the outputs of layers 1, 2, 4, and 6 on the CIFAR-10 dataset using the 2-dimensional UMAP plot \citep{mcinnes2018umap}. Additional experimental details are deferred to \Cref{subsubsec:resemb_1}.} 
    \label{fig:umap}
\end{figure*} 

In summary, extensive empirical results demonstrate that after feature expansion by initial layers, {\em deep networks progressively compress features within the same class and discriminate features from different classes from shallow to deep layers}; see \Cref{fig:umap}. This characterization provides valuable insight into how deep networks transform data into output across layers in classification tasks. Moreover, this insight sheds light on designing more advanced network architectures, developing more efficient training strategies, and achieving better interpretability. However, to the best of our knowledge, no theoretical framework has yet been established to explain this empirical observation of progressive feature compression and discrimination. In this work, we take a first step towards bridging this gap by providing a theoretical analysis based on deep linear networks (DLNs).  

\paragraph{Why study DLNs? Linear layers mimic deep layers in nonlinear networks for feature learning.} Even though DLNs lack the strong expressive power of nonlinear networks, they possess comparable abilities for feature compression and discrimination to those observed in the deeper layers of nonlinear networks, as indicated in Figures \ref{fig:intro1} and \ref{fig:umap}. By evaluating the training accuracy and the numerical rank of intermediate features in both a nonlinear network and a hybrid network\footnote{For the hybrid network, we introduce nonlinearity in the first few layers and follow them with linear layers.}, we observe that the initial-layer features in both networks are almost linearly separable, evidenced by nearly perfect training accuracy achieved through linear probing. This phenomenon is further illustrated by the feature visualization in \Cref{fig:umap}. Meanwhile, the linear layers in the hybrid network mimic the role of their counterpart in the nonlinear network by performing feature compression and discrimination, as evidenced by the decreasing feature rank in \Cref{fig:intro1} and the increasing separation of different-class features in \Cref{fig:umap} across layers in both types of networks.

Broadly speaking, DLNs have been recognized as valuable prototypes for studying nonlinear networks, as they resemble certain behaviors of their nonlinear counterparts \citep{alain2016understanding,ansuini2019intrinsic,masarczyk2023tunnel,recanatesi2019dimensionality} while maintaining simplicity \citep{arora2018optimization,gidel2019implicit,saxe2019mathematical}. For instance, \cite{huh2021low} empirically demonstrated the presence of a low-rank bias at both initialization and after training for both linear and nonlinear networks. \cite{saxe2019mathematical} showed that a DLN exhibits a striking hierarchical progressive differentiation of structures in its internal hidden representations, resembling patterns observed in their nonlinear counterparts.  

\paragraph{The role of depth in DLNs: improving generalization, feature compression, and training speed.} Although stacking linear layers in DLNs ultimately results in an end-to-end linear transformation from input to output, the overparameterization in these DLNs distinguishes them from a basic linear operator: increasing the depth of DLNs can significantly improve their generalization capabilities, enhance feature compression, and facilitate network training.
Specifically, recent works have demonstrated that linear over-parameterization by depth (i.e., expanding one linear layer into a composition of multiple linear layers) in deep nonlinear networks yields better generalization performance across different network architectures and datasets \citep{guo2020expandnets,huh2021low,kwon2023compressing}. This is corroborated by our experiments in \Cref{fig:why_dln}, where increasing the depth of linear layers of a hybrid network leads to improved test accuracy. Moreover, our results in \Cref{fig:test-init-dataset} suggest that increasing the depth of DLN also leads to improved feature compression. 
We refer interested readers to \cite{nichani2021empirical} for further discussion on the role of depth in DLNs.

\begin{figure*}[t]
    \begin{subfigure}{0.48\textwidth}
    \includegraphics[width = 0.98\linewidth]{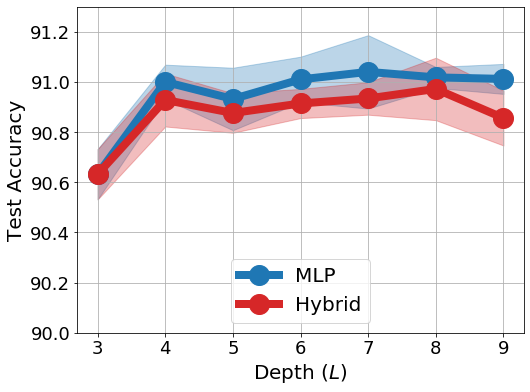} \vspace{-0.05in}
    \caption{FashionMNIST} 
    \end{subfigure} 
    \begin{subfigure}{0.48\textwidth}
    \includegraphics[width = 0.95\linewidth]{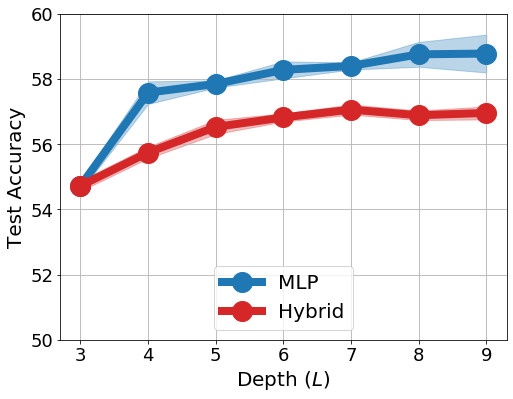} \vspace{-0.05in}
    \caption{CIFAR-10} 
    \end{subfigure}
   \caption{\textbf{Depth of DLNs lead to better generalization performance.} We train hybrid networks consisting of a 2-layer MLP with ReLU activation followed by $(L-2)$ linear layers on the FashionMNIST and CIFAR-10 datasets, respectively. As a reference, we also train nonlinear networks comprised exclusively of MLP layers. We plot the test accuracy against the different number of layers averaged over $5$ different runs. It is observed that adding either linear layers or MLP layers can improve generalization performance. More experimental details are deferred to \Cref{subsubsec:linear_gene}.}
    \label{fig:why_dln} 
\end{figure*}

\begin{figure}[t]
    \begin{subfigure}{0.48\textwidth}
    \includegraphics[width = 0.8\linewidth]{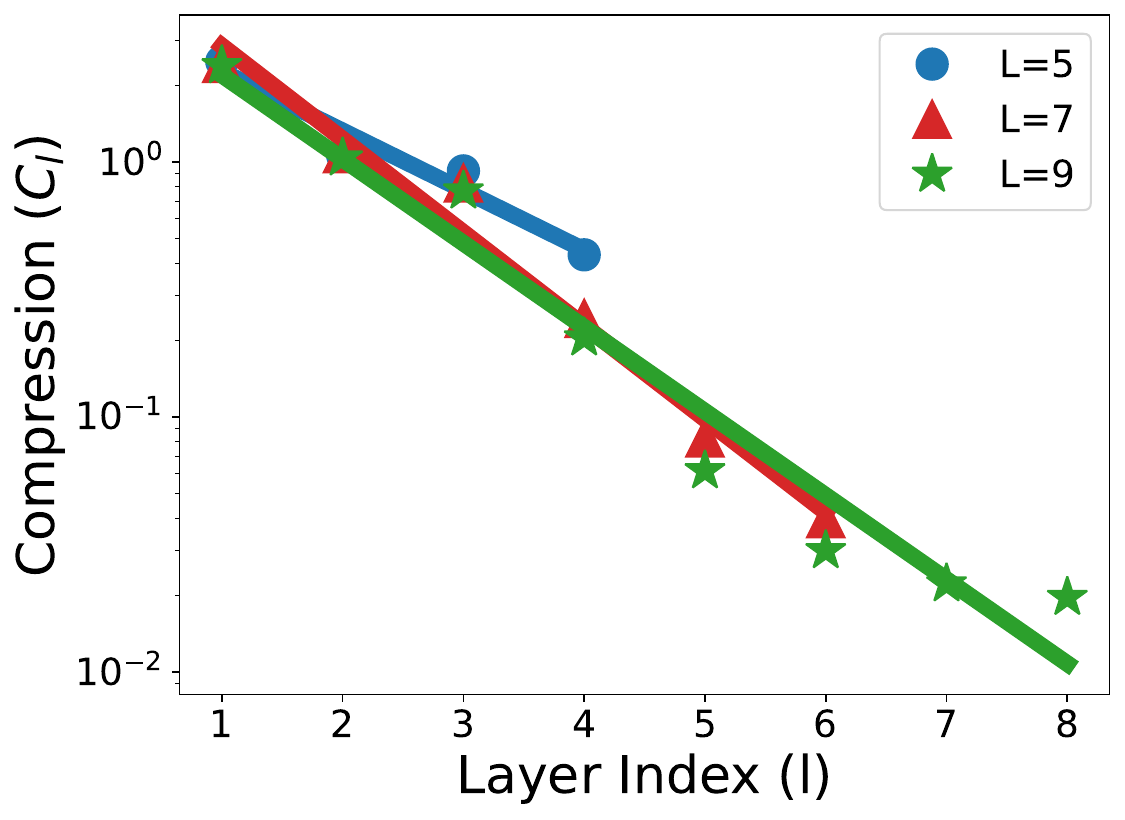}
    \caption{FashionMNIST} 
    \end{subfigure} 
    \begin{subfigure}{0.48\textwidth}
    \includegraphics[width = 0.8\textwidth]{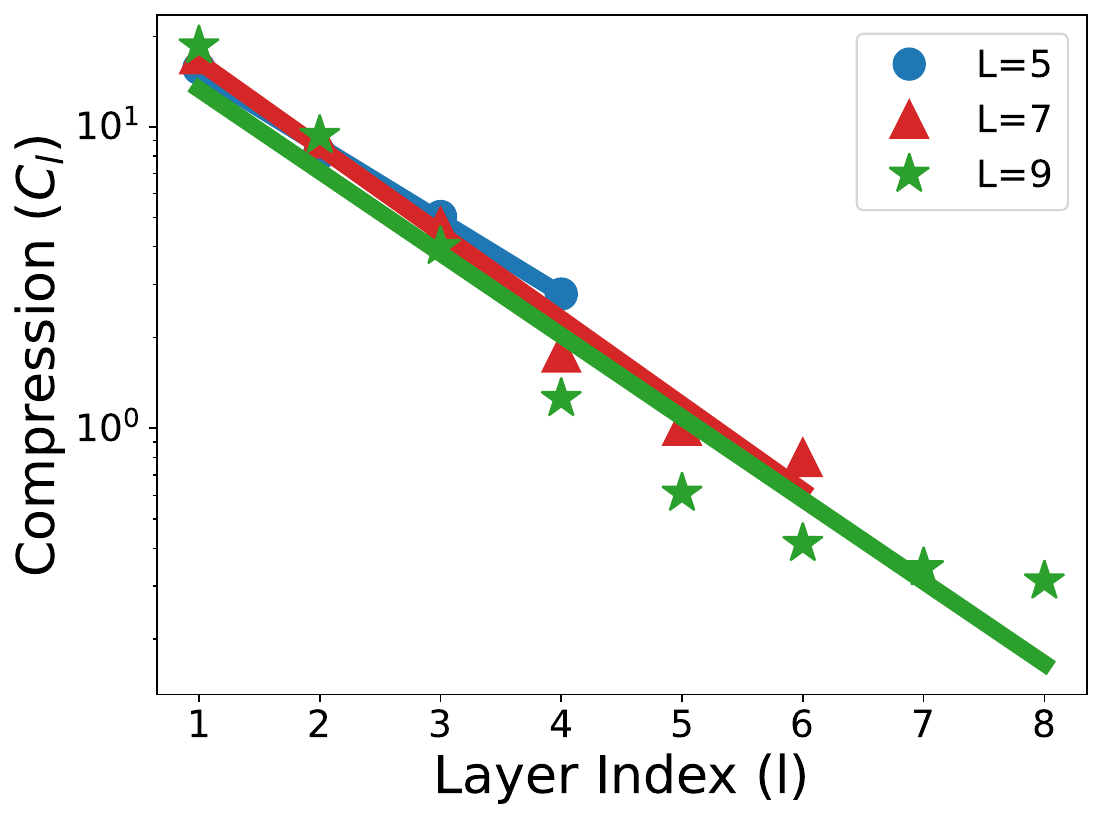}
    \caption{CIFAR-10}
    \end{subfigure} 
    \caption{\textbf{Progressive feature compression on DLNs trained with default initialization and real datasets.} Using the DLNs trained in \Cref{fig:why_dln}, we plot the within-class compression metrics $C_l$ (see Definition \ref{def:nc1}) against the layer indices. It is observed that progressive linear decay still (approximately) happens without the orthogonal initialization and datasets described in Assumption~\ref{AS:1}.   }
    \label{fig:test-init-dataset}
\end{figure} 

\subsection{Our Contributions}

In this work, we study hierarchical representations of deep networks for multi-class classification problems. Towards this goal, we explore how a DLN transforms input data into a one-hot encoding label matrix from shallow to deep layers by investigating features at each layer. To characterize the structures of intermediate features, we define metrics to measure within-class feature compression and between-class feature discrimination at each layer, respectively (see Definition \ref{def:nc1}). We establish a unified framework to analyze these metrics and reveal a simple and quantitative pattern in the evolution of features from shallow to deep layers: 
\begin{tcolorbox}\centering 
\emph{The compression metric \textbf{decays at a geometric rate}, while the discrimination metric \textbf{increases at a linear rate} with respect to the number of layers.}
\end{tcolorbox}
More specifically,  we rigorously prove the above claim under the following assumptions: 
\begin{itemize}[leftmargin=*]
    \item \emph{Assumption on the training data.} We study a $K$-class classification problem with balanced training data, where each class contains $n$ training samples. We assume that the training samples are $\theta$-nearly orthonormal for a constant $\theta \in [0,1/4)$, and the feature dimension is larger than the total number of samples so that they are linearly separable (see Assumption \ref{AS:1}).
    
    \item \emph{Assumption on the trained weights.} We assume that an $L$-layer DLN is trained such that its weights are minimum-norm, $\delta$-balanced, and $\varepsilon$-approximately low-rank, due to the implicit bias of gradient descent, where $\delta,\varepsilon \in (0,1)$ are constants (see Assumption \ref{AS:2}).
\end{itemize}
We discuss the validity of our assumptions in \Cref{subsec:discuss}. Based upon these assumptions, we show in Theorem~\ref{thm:NC} that the ratio of the within-class feature compression metric between the $(l+1)$-th layer and the $l$-th layer is $O(\varepsilon^2/n^{1/L})$, implying a geometric decay from shallow to deep layers. Moreover, we show that the between-class discrimination metric increases linearly with respect to (w.r.t.) the number of layers, with a slope of $O((\theta+4\delta)/L)$. To the best of our knowledge, this is the first quantitative characterization of feature evolution in hierarchical representations of DLNs in terms of feature compression and discrimination. Notably, this analytical framework can be extended to nonlinear networks \citep{jacot2025wide}. Finally, we substantiate our theoretical findings in \Cref{sec:exp} on both synthetic and real-world datasets, demonstrating that our claims hold for DLNs and also manifest empirically in nonlinear networks. 

\paragraph{Significance of our results.} In recent years, there has been a growing body of literature studying hierarchical feature learning to open the black box of deep networks. These studies include works on neural tangent kernel \citep{Jacot2018,huang2020dynamics}, intermediate feature analysis \citep{alain2016understanding,masarczyk2023tunnel,rangamani2023feature}, neural collapse \citep{dang2023neural,he2023law,Sukenik2023,tirer2022extended}, and learning dynamic analysis \citep{allen2023backward,bietti2022learning,Damian2022}, among others. We refer readers to \Cref{subsec:prior-arts} for a more comprehensive discussion. Our work contributes to this emerging area by showing that each layer of deep networks plays an equally important role in hierarchical feature learning, which compresses within-class features at a geometric rate and discriminates between-class features at a linear rate w.r.t. the number of layers. This provides a simple and precise characterization of how deep networks transform data hierarchically from shallow to deep layers. It also addresses an open question about neural collapse and offers a new perspective justifying the importance of depth in feature learning; see the discussion in \Cref{subsec:thm}. 
Moreover, our result explains why projection heads \citep{chen2020simple}, which usually refer to one or several MLP layers added between the feature layer and final classifier during pretraining and discarded afterwards, can improve the performance of transfer learning on downstream tasks \citep{li2022principled,kornblith2021why,galanti2022on}. In summary, our result provides important guiding principles for deep learning practices related to interpretation, architecture design, and transfer learning.  

\paragraph{Differences and connections to the existing literature.} Finally, we highlight the differences and connections between our work and two closely related recent works \citep{he2023law,saxe2019mathematical} as follows:
\begin{itemize}[leftmargin=*]
    \item First, \cite{he2023law} empirically showed that a {\em progressive NC} phenomenon governed by a law of data separation occurs from shallow to deep layers. Specifically, they observed that in trained over-parameterized nonlinear networks for classification problems, a metric of data separation decays at a geometric rate w.r.t. the number of layers. This is similar to our studied within-class feature compression at a geometric rate, albeit with different metrics (see the remark after Definition~\ref{def:nc1}). However, they do not provide any theoretical explanation for the progressive NC phenomenon. While our research focuses on DLNs and may not provide a complete understanding of phenomena in nonlinear networks, our theoretical analysis offers insights into the empirical behavior of deeper nonlinear layers. This is supported by our findings presented in \Cref{fig:intro1}, which demonstrate that linear layers can replicate the function of deep nonlinear layers in terms of feature learning. 
    
    \item Second, \cite{saxe2019mathematical} reveals that during training, nonlinear neural networks exhibit a hierarchical progressive differentiation of structure in their internal representations---a phenomenon they refer to as {\em progressive differentiation}. While both their study and ours reveal and justify a progressive separation phenomenon based on DLNs, the focus of each study is entirely different yet highly complimentary to each other. Specifically, we investigate how a trained neural network separates data according to their class membership from shallow to deep layers {\em after training}, while they investigate how weights of a neural network change w.r.t. training time and their impact on class differentiation {\em during training}.  
This distinction can be illustrated by the example in \cite{saxe2019mathematical}. Suppose a neural network is trained to classify eight items: sunfish, salmon, canary, robin, daisy, rose, oak, and pine. We study how these eight items are represented by the neural networks from shallow to deep layers in the trained neural network. In comparison, \cite{saxe2019mathematical} explain why animals versus plants are first distinguished at the initial stage of training, then birds versus fish, then trees versus flowers, and finally individual items, throughout the training process.
\end{itemize} 

\subsection{Notation and Paper Organization} 

\paragraph{Notation.} Let $\R^n$ be the $n$-dimensional Euclidean space and $\|\cdot\|$ be the Euclidean norm. Given a matrix $\bm A \in \R^{m \times n}$, we use $\bm a_i$ to denote its $i$-th column; we use $\|\bm A\|_F$ to denote the Frobenius norm of $\bm A$; we use $\sigma_{\max}(\bm A)$ (or $\|\bm A\|$), $\sigma_i(\bm A)$, and $\sigma_{\min}(\bm A)$ to denote the largest, the $i$-th largest, and the smallest singular values, respectively. 
 We use $\bm A^\dagger$ to denote the pseudo-inverse of a matrix $\bm A$. Given $L \in \mathbb{N}$, we use $[L]$ to denote the index set $\{1,\cdots,L\}$. 
Let $\mathcal{O}^n = \{\bm{Z} \in \R^{n \times n}: \bm{Z}^T\bm{Z} = \bI\}$ denote the set of all $n \times n$ orthogonal matrices.  We denote the Kronecker product by $\otimes$.  
Given weight matrices $\bW_1,\dots,\bW_L$, let $\bW_{l:1} := \bW_l\cdots\bW_1$ denote a matrix multiplication from $\bW_l$ to $\bW_1$  for all $l\in [L]$. 

\paragraph{Organization.} The rest of the paper is organized as follows. In \Cref{sec:prelim}, we introduce the basic problem setup. In \Cref{sec:main}, we present the main results and discuss their implications, with proofs provided in Section \ref{sec:proof} in the appendix. In \Cref{subsec:prior-arts}, we discuss the connections of our results to related works.
In \Cref{sec:exp}, we validate our theoretical claims and investigate nonlinear networks through numerical experiments. Finally, we conclude and discuss future directions in \Cref{sec:conclusion}. We defer all the auxiliary technical results to the appendix. 

\section{Preliminaries}\label{sec:prelim}

In this section,  we first formally introduce the problem of training DLNs for solving multi-class classification problems in \Cref{subsec:setup}, and then present the metrics for measuring within-class compression and between-class discrimination of features at each layer in \Cref{subsec:metrics}. 
 
\subsection{Problem Setup}\label{subsec:setup} 

\paragraph{Multi-class classification problem.} We consider a $K$-class classification problem with training samples and labels $\{(\bm{x}_{k,i},\bm{y}_k)\}_{ i \in [n_k], k \in [K] }$, where $\bx_{k,i} \in \mathbb R^{d}$ is the $i$-th sample in the $k$-th class, and $\bm{y}_k \in \mathbb R^K$ is an one-hot label vector with the $k$-th entry being $1$ and $0$ elsewhere. We denote by $n_k$ the number of samples in the $k$-th class for each $k \in [K]$. Here, we assume that the number of samples in each class is the same, i.e., $n_1 = \cdots = n_K = n$. This assumption is commonly used in recent studies on deep representation learning for classification problems \citep{zhu2021geometric,yaras2022neural,zhou2022optimization,zhou2022all}.
Moreover, we denote the total number of samples by $N = nK$. Without loss of generality, we arrange the training samples in a class-by-class manner such that 
\begin{align}\label{eq:X Y}
\bm X \;=\; \left[\bm x_{1,1},\ldots,\bm x_{1,n_1},\ldots,\bm x_{K,1},\ldots, \bm x_{K,n_K} \right] \in \R^{d\times N},\quad \bm Y \;=\; \bm{I}_K \otimes \bm{1}_n^T \in \R^{K\times N},
\end{align}
where $\otimes$ denotes the Kronecker product. For convenience, we also use $\bm x_i$ to denote the $i$-th column of $\bm X$.  

\paragraph{DLNs for classification problems.} In this work, we consider an $L$-layer ($L \geq 2$) linear network $ f_{\bm \Theta}(\cdot): \R^{d} \rightarrow \R^{K}$, parameterized by $\bm \Theta = \{\bW_l\}_{l=1}^L$ with input $\bm x \in \R^{d}$, i.e.,
\begin{equation}\label{eq:DLN}
    f_{\bm \Theta} (\bx) \;:=\; \bW_L \cdots \bW_1 \bx \;=\; \bW_{L:1}\bm x ,
\end{equation}
where $\bm W_1 \in \R^{d_1 \times d}$, $\bm W_l \in \R^{d_{l} \times d_{l-1}}$ for $l=2,\dots,L-1$, and $\bm W_L \in \R^{K \times d_{L-1}}$ are the weight matrices. The last-layer weight $\bm W_L$ is referred to as the linear {\em classifier} and the output of the $l$-th layer as the $l$-th layer \emph{feature} for all $l \in [L-1]$. As discussed in Section \ref{sec:intro}, DLNs are often used as prototypes for studying practical deep networks \citep{hardt2016identity,kawaguchi2016deep,laurent2018deep,lu2017depth}. We train an $L$-layer linear network to learn weights $\bm \Theta = \{\bW_l\}_{l=1}^L$ via minimizing the mean squared error (MSE) loss between $f_{\bm \Theta} (\bx)$ and $\bm y$ over the training data, i.e.,
\begin{align}\label{eq:obj}
\min_{\bm{\Theta}} \ell(\bm{\Theta}) 
\;=\; \frac{1}{2}\sum_{k=1}^K\sum_{i=1}^{n_k}\left\|  \bW_{L:1}\bx_{k,i} - \bm{y}_k \right\|^2 \;=\; \frac{1}{2}\| \bW_{L:1}\bm X - \bm Y\|_F^2. 
\end{align}
Before we proceed, we make some remarks on this problem. 
\begin{itemize}[leftmargin=*]
\item First, the cross-entropy (CE) loss is arguably the most popular loss function used to train neural networks for classification problems \citep{zhu2021geometric}. However, recent studies \citep{hui2020evaluation,zhou2022optimization} have demonstrated through extensive experiments that the MSE loss achieves performance comparable to or even superior to that of the CE loss across various tasks. 

\item Second, while DLNs may appear simple, the loss landscape of the objective function as defined in \eqref{eq:obj} is highly nonconvex, leading to highly nonlinear learning dynamics in gradient descent (GD) \citep{chen2025complete,achour2024loss}. Notably, the nonlinear GD dynamics in DLNs closely mirror that in their nonlinear counterparts \citep{lampinen2018analytic,saxe2019mathematical}. 

\item Finally, DLNs tend to be over-parameterized, with width of networks $d_l$ and feature dimensions $d$ exceeding the training samples $N$. In this setting, Problem \eqref{eq:obj} has infinitely many solutions that can achieve zero training loss, i.e., $\bm W_{L:1} \bm X = \bm Y$. Nevertheless, many studies have examined the convergence behavior of GD by closely examining its learning trajectory, revealing that GD---when initialized appropriately---exhibits an implicit bias towards {\em minimum norm solutions} \citep{bartlett2020benign,min2021explicit} with approximately {\em balanced} and {\em low-rank} weights \citep{arora2018convergence,min2021explicit}, which will be discussed further in \Cref{subsec:discuss}.
\end{itemize}

\subsection{The Metrics of Feature Compression and Discrimination}\label{subsec:metrics}

In this work, we focus on studying feature structures at each layer in a trained DLN. Given the weights $\bm \Theta = \{\bW_l\}_{l=1}^L$ satisfying $\bm W_{L:1} \bm X = \bm Y$, the weights $\{\bm W_l\}_{l=1}^L$ of the DLN transform the input data $\bm X$ into the membership matrix $\bm Y$ at the final layer. However, the hierarchical structure of the DLN prevents us from gaining insight into the underlying mechanism of how it transforms the input data into output from shallow to deep layers. To unravel this puzzle, we probe the features learned at intermediate layers. In our setting, we write the $l$-th layer's feature of an input sample $\bx_{k,i}$ as
\begin{align}\label{eq:zl}
\bm{z}_{k,i}^l = \bW_l \dots \bW_1 \bx_{k,i} = \bW_{l:1} \bm{x}_{k,i},\ \forall l = 1,\dots,L-1,
\end{align}
and we denote $\bz_{k,i}^0 = \bx_{k,i}$. For $l=0,1,\dots,L-1$, let $\bm{\Sigma}_W^l$ and $\bm{\Sigma}_B^l$ respectively denote the sum of the within-class and between-class covariance matrices for the $l$-th layer, i.e.,
\begin{align} 
   & \bm{\Sigma}_W^l := \frac{1}{N} \sum_{k=1}^K \sum_{i=1}^{n_k} \left( \bz_{k,i}^l - \bm \mu_k^l \right)\left( \bz_{k,i}^l - \bm \mu_k^l \right)^T,\label{eq:Sigma W} \\
   & \bm{\Sigma}_B^l := \frac{1}{N}\sum_{k=1}^K n_k\left(  \bm \mu_k^l - \bm \mu^l \right)\left( \bm \mu_k^l - \bm \mu^l \right)^T,  \label{eq:Sigma B}
\end{align}
where 
\begin{align}\label{eq:mu}
\bm \mu_k^l := \frac{1}{n_k} \sum_{i=1}^{n_k} \bz_{k,i}^l,\quad \bm \mu^l := \frac{1}{K} \sum_{k=1}^{K} \bm \mu_k^l
\end{align}
denote the mean of the $l$-th layer's features in the $k$-th class and the global mean of the $l$-th layer's features, respectively. 
Equipped with the above setup, we can measure the compression of features within the same class and the discrimination of features between different classes using the following metrics.    

\begin{definition}[Intermediate layer-wise feature compression and discrimination]\label{def:nc1}
For all $l = 0,1,\dots, L-1$, we say that 
\begin{align}\label{eq:nc1}
\quad C_l = \frac{\mathrm{Tr}(\bm{\Sigma}_W^l)}{\mathrm{Tr}(\bm{\Sigma}_B^l)}\quad \text{and}\quad D_l = 1 - \max_{k \neq k^\prime} \frac{ \langle \bm \mu_k^l, \bm \mu_{k^\prime}^l\rangle }{\|\bm \mu_k^l\|\|\bm \mu_{k^\prime}^l\|} 
\end{align}
are the metrics of within-class compression and between-class discrimination of intermediate features at the $l$-th layer, respectively. 
\end{definition}
From now on, we will use these two metrics to study the evolution of features across layers. Intuitively, the features in the same class at the $l$-th layer are more compressed if $C_l$ decreases, while the features from different classes are more discriminative if $D_l$ increases. Before we proceed, let us delve deeper into the rationale behind each metric.  

\paragraph{Discussion on the metric of feature compression.}  The study of feature compression has recently caught great attention in both supervised \citep{yu2020learning,fang2021exploring,papyan2020prevalence} and unsupervised \citep{shwartz2023compress} deep learning. For our definition of feature compression in \eqref{eq:nc1}, the numerator $\mathrm{Tr}(\bm{\Sigma}_W^l)$ of the metric $C_l$ measures how well the features from the same class are compressed towards the class mean at the $l$-th layer. More precisely, the features of each class are more compressed around their respective means as $\mathrm{Tr}(\bm{\Sigma}_W^l)$ decreases. The denominator $\mathrm{Tr}(\bm{\Sigma}_B^l)$ prevents reporting spuriously small values of $C_l$ in near-collapse cases where all features approach a single mean, while simultaneously serving as a normalization factor that renders the metric invariant to feature scaling. Specifically, given some weights $\bm W_1,\dots,\bm W_l$ with $C_l = \mathrm{Tr}(\bm \Sigma_W^l)/\mathrm{Tr}(\bm \Sigma_B^l)$, if we scale them to $t\bm W_1,\dots,t\bm W_l$ for some $t > 0$, then $C_l$ does not change, while the corresponding numerator becomes $t^l\mathrm{Tr}(\bm{\Sigma}_W^l)$. It should be noted that this metric and similar ones have been studied in recent works \citep{kothapalli2023neural,rangamani2023feature,tirer2023perturbation,yaras2023law}. For instance, \cite{tirer2023perturbation} employed this metric to measure the variability of within-class features to simplify theoretical analysis. Moreover, a similar metric $\mathrm{Tr}(\bm{\Sigma}_W^l \bm{\Sigma}_B^{l^\dagger})$ has been used to characterize within-class variability collapse in recent studies on neural collapse in terms of last-layer features \citep{fang2021exploring,papyan2020prevalence,rangamani2023feature,yaras2022neural, zhu2021geometric}. Our studied metric $C_l$ can be viewed as its simplification.  Additionally, \cite{he2023law} employed $\mathrm{Tr}(\bm{\Sigma}_W^l \bm{\Sigma}_B^{l^\dagger})$ to measure how well the data are separated across intermediate layers. Due to their similarity, $C_l$ can also serve as a metric for measuring data separation.   


\paragraph{Discussion on the metric of feature discrimination.} It is worth pointing out that learning discriminative features has a long history, tracing back to unsupervised dictionary learning \citep{donoho2003optimally,arora2014new,sun2016complete,Qu2020Geometric,zhai2020complete}. For our definition of feature discrimination, $\arccos\left( \langle \bm \mu_k^l, \bm \mu_{k^\prime}^l\rangle /(\|\bm \mu_k^l\|\|\bm \mu_{k^\prime}^l\|)\right)$ computes the angle between class means $\bm\mu_k^l$ and $\bm\mu_{k^\prime}^l$.
This, together with  \eqref{eq:nc1}, indicates that $D_l$ measures the feature discrimination by calculating the smallest angles among feature means of all pairs. Moreover, we can equivalently rewrite $D_l$ in \eqref{eq:nc1} as
\begin{align*}
D_l = \frac{1}{2}\min_{k\neq k^\prime} \left\| \frac{\bm \mu_k^l}{\|\bm \mu_k^l\|} - \frac{\bm \mu_{k^\prime}^l}{\|\bm \mu_{k^\prime}^l\|} \right\|^2. 
\end{align*}
This indicates that $D_l$ computes the smallest distance between normalized feature means. According to these two interpretations, features between classes become more discriminative as $D_l$ increases. 
Recently, \cite{masarczyk2023tunnel} considered a variant of the inter-class variance $\sum_{k=1}^K \sum_{k^\prime \neq k}\|\bm \mu_k - \bm \mu_{k'}\|^2 $ to measure linear separability of representations of deep networks.  

\section{Main Results}\label{sec:main}


In this section, we present our main theoretical results based on \Cref{sec:prelim}, first describing the theorem in \Cref{subsec:thm} and then discussing the assumptions in \Cref{subsec:discuss}.

\subsection{Main Theorem}\label{subsec:thm}

Before we present the main theorem, we make the following assumptions on the input data and the weights of the DLN in \eqref{eq:DLN}.  

\begin{assum}\label{AS:1}
For the data matrix $\bm X \in \R^{d\times N}$, the data dimension is no smaller than the number of samples, i.e., $d \ge N$. Moreover, the data is $\theta$-nearly orthonormal, i.e., there exists an $\theta \in [0,1/4)$ such that 
\begin{align}\label{eq:orth}
\left|\|\bm x_{i}\|^2 - 1 \right| \le \frac{\theta}{N},\    \left|\langle \bm x_{i}, \bm x_{j}\rangle \right| \le \frac{\theta}{N},\ \text{for all}\ 1 \le i \neq j \le N, 
\end{align}
where $\bm x_i$ denotes the $i$-th column of $\bm X$. 
\end{assum}

\paragraph{Discussion on Assumption \ref{AS:1}.} We make this assumption primarily to simplify our analysis. It can be relaxed with a more refined analysis, and in practice, it may even be violated in empirical scenarios. Here, the condition $d \ge N$ guarantees that $\bm X$ is linearly separable in the sense that there exists a linear classifier $\bm W \in \R^{K\times d}$ such that $\bm W \bm x_{k,i} = \bm y_k$ for all $i,k$. We should point out that the same condition has been studied in \citep{chatterji2023deep,chatterji2022interplay,frei2023implicit}, and similar linear separability conditions have been widely used for studying implicit bias of gradient descent \citep{nacson2019convergence,phuong2020inductive,soudry2018implicit}. Notably, this condition also holds for nonlinear networks in the sense that the intermediate features generated by the initial layers exhibit linear separability as shown in \cite{alain2016understanding,ansuini2019intrinsic,masarczyk2023tunnel,recanatesi2019dimensionality} (see Figure \ref{fig:intro1}, where the near-perfect accuracy at intermediate layers shows that the features at that layer are already linearly separable.).  In addition, nearly orthonormal data is commonly used in the theoretical analysis of learning dynamics for training neural networks; see, e.g., \cite{boursier2022gradient,frei2023implicit,phuong2020inductive}. In particular, this condition holds with high probability for well-conditioned Gaussian distributions \citep{frei2023implicit}, and it generally applies to a broad class of subgaussian distributions, as demonstrated in Claim 3.1 of \cite{hu2020surprising}.

\paragraph{Implicit bias of GD.} Since the DLN in Problem \eqref{eq:obj} is over-parameterized, it has infinitely many solutions satisfying $\bm W_{L:1}\bm X = \bm Y$. However, GD for training networks typically has an implicit bias towards certain solutions with benign properties \citep{arora2019implicit,Gunasekar2017,Ji2019,min2021explicit,Shah2020,soudry2018implicit}. In particular, prior work has demonstrated that, with assumptions on network initialization and the dataset, gradient flow tends to favor solutions with minimum norms and balanced weights; see, e.g., \cite{min2021explicit,chatterji2023deep,arora2018optimization,du2018algorithmic}. Recent studies also reveal that GD primarily updates a minimal invariant subspace of the weight matrices, thereby preserving the approximate low-rankness of the weights across all layers \citep{huh2021low,yaras2023law}. Based on these findings, we assume that the trained weights $\bm \Theta$ satisfy the following benign properties to investigate how trained deep networks hierarchically transform input data into labels.  

\begin{assum}\label{AS:2}
For an $L$-layer DLN with weights $\bm{\Theta} = \{\bW_l\}_{l=1}^L$ described in \eqref{eq:DLN} with $d_l = d > 2K$ for all $l \in [L-1]$, the weights $\bm{\Theta}$ satisfy \\ 
(i) {\em Minimum-norm solution:}
\begin{align}\label{eq:min norm}
\bW_{L:1} = \bY(\bm X^T\bm X)^{-1}\bX^T. 
\end{align} 
(ii) {\em $\delta$-Balancedness:} There exists a constant $\delta > 0$ such that 
\begin{align}\label{eq:bala}
    \bW_{l+1}^T\bW_{l+1} = \bW_l\bW_l^T, \forall l \in [L-2],\ \|\bW_{L}^T\bW_{L} - \bW_{L-1}\bW_{L-1}^T\|_F \le   \delta. 
\end{align} 
(iii) {\em $\varepsilon$-Approximate low-rank weights:} There exist positive constants  $\varepsilon \in (0,1)$ and $\rho \in [0,\varepsilon)$ such that for all $l \in [L-1]$, 
\begin{align}\label{eq:singu}
 \varepsilon - \rho \le \sigma_i(\bW_l) \le \varepsilon,\ \forall i=K+1,\dots,d-K. 
\end{align}
\end{assum}
We defer the discussion of Assumption~\ref{AS:2} to \Cref{subsec:discuss}, where we provide both theoretical and empirical evidence in support of it.  It is worth noting that Assumption~\ref{AS:2} serves as an initial framework for understanding progressive feature behavior and can be relaxed to more general conditions. For example, a follow-up work \cite[Theorem 3.1]{jacot2025wide} proves related but narrower results under weakened versions of the conditions in Assumption \ref{AS:2} (e.g., \eqref{eq:bala} holds only approximately). Building upon the above assumptions, we now present our main theorem on hierarchical representations in terms of feature compression and discrimination.

\begin{theorem}\label{thm:NC}
Consider a $K$-class classification problem on the training data $(\bm X,\bm Y) \in \mathbb R^{d \times N} \times \R^{K\times N}$, where the matrix $\bm X$ satisfies Assumption \ref{AS:1} with parameter $\theta$. Suppose that we train an $L$-layer DLN with weights $\bm \Theta = \left\{ \bm W_l \right\}_{l=1}^L$ such that $\bm \Theta$ satisfies Assumption \ref{AS:2}, with the parameters $(\delta,\rho,\varepsilon)$ of weight balancedness and low-rankness satisfying 
\begin{align}\label{eq:delta2}
\delta \le \min\left\{ \frac{(2n)^{1/L}}{30L^2},  \frac{n^{1/L}}{128\sqrt{K}}, \frac{1}{16\sqrt{K}}\right\},\ \varepsilon \le \min\left\{\frac{n^{\frac{1}{2L}}}{4}, 1 \right\},\ \rho \le \frac{\varepsilon}{2L \sqrt{n}}. 
\end{align} 
(i) {\em Progressive within-class feature compression:} For $C_{l}$ in Definition \ref{def:nc1}, it holds that
\begin{align}
&  \frac{c\varepsilon^2 }{\kappa(2n)^{1/L}} \le \frac{C_1}{C_0} \le \frac{2\kappa\varepsilon^2}{c(n/2)^{1/L}}, \label{eq1:compres}\\
&  \frac{c\varepsilon^2 }{\kappa(2n)^{1/L}} \le \frac{C_{l+1}}{C_{l}} \le  \frac{\kappa\varepsilon^2}{c(n/2)^{1/L}},\ \forall l \in [L-2],\label{eq2:compres}
\end{align}
where  
\begin{align*}
c = \frac{(n-3)K-1}{(n-1)K+1},\ \kappa = \frac{1+n^{-\Omega(1)}}{1-n^{-\Omega(1)}}. 
\end{align*}
(ii) {\em Progressive between-class feature discrimination:} For $D_{l}$ in Definition \ref{def:nc1}, for all $l \in [L-1]$, we have 
\begin{align}\label{eq:discri}
D_l \ge 1 - 32 \left( \theta + 4\delta \right) \left(2-\frac{l+1}{L}\right) - n^{-\Omega(1)}. 
\end{align}
\end{theorem} 
We defer the detailed proof to \Cref{sec:proof}. As we discussed in \Cref{sec:intro}, numerous empirical studies have been conducted to explore the feature structures of intermediate and final layers in deep networks, particularly concerning feature compression and discrimination; see, e.g., \cite{alain2016understanding, ansuini2019intrinsic, ben2022nearest, chen2022layer, galanti2022implicit, he2023law, hui2022limitations, masarczyk2023tunnel, rangamani2023feature, recanatesi2019dimensionality, zhang2022all}. However, there remains a scarcity of theoretical analysis to elucidate their observations. Despite the acknowledged limitations of DLNs, our work takes the first step towards rigorously developing a unified framework to quantify feature compression and discrimination across different layers. Specifically, Theorem~\ref{thm:NC} shows that, given input data satisfying Assumption~\ref{AS:1} and a trained DLN with weights satisfying Assumption~\ref{AS:2}, features within the same class are compressed at a {\em geometric rate} on the order of ${\varepsilon^2}/{n^{1/L}}$ when $n$ is sufficiently large, while the features between classes are discriminated at a {\em linear rate} on the order of $1/L$ across layers. In \Cref{fig:progressive NC}, we also show empirical support for these findings in \Cref{fig:progressive NC} (top figures), where we simulate data and train a linear network according to our assumptions and plot the metrics at each layer. This phenomenon also appears in nonlinear networks, as shown in  \Cref{fig:progressive NC} (bottom figures), where we train a nonlinear MLP with the same synthetic training data and plot the metrics at each layer. In the following, we discuss the implications of our main result.

\begin{figure}[t]
\begin{center}
    \begin{subfigure}{1.0\textwidth}
        \centering
        \includegraphics[trim = 8 8 8 8 ,width=.4\textwidth]{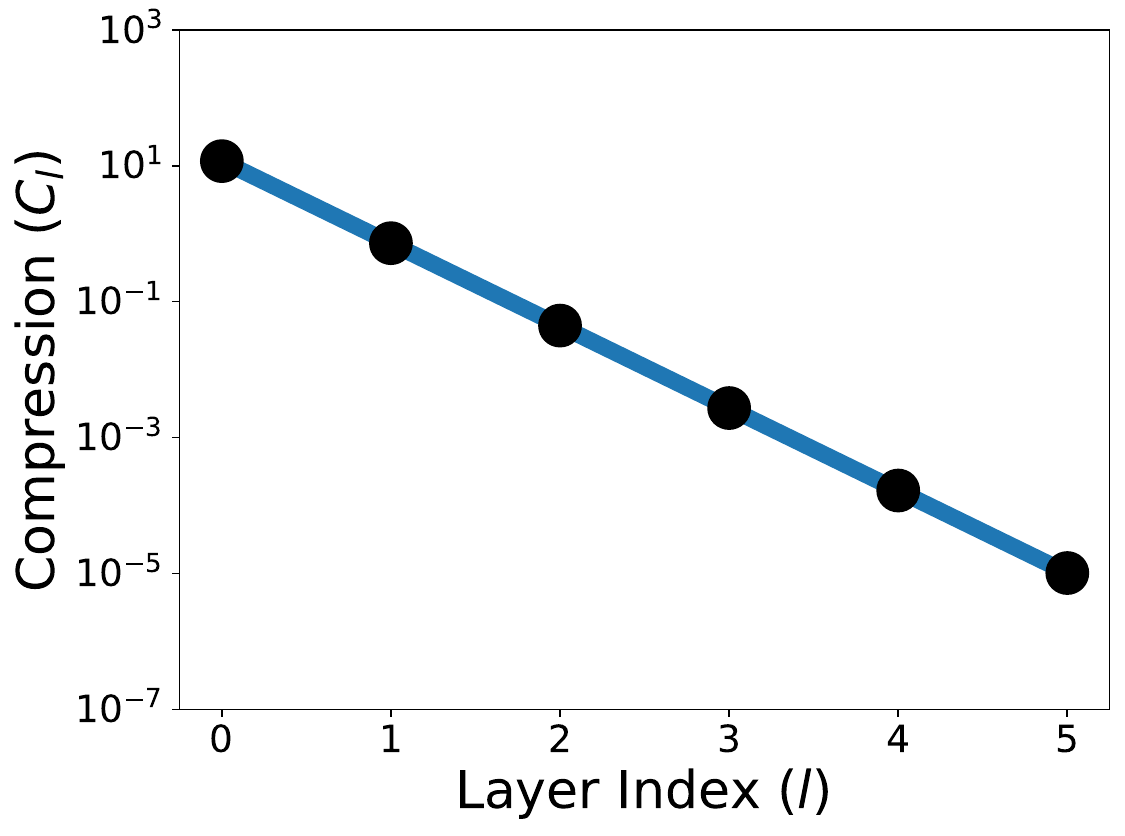}
        \hspace{0.5in}
        \includegraphics[trim = 8 8 8 8,width=.4\textwidth]{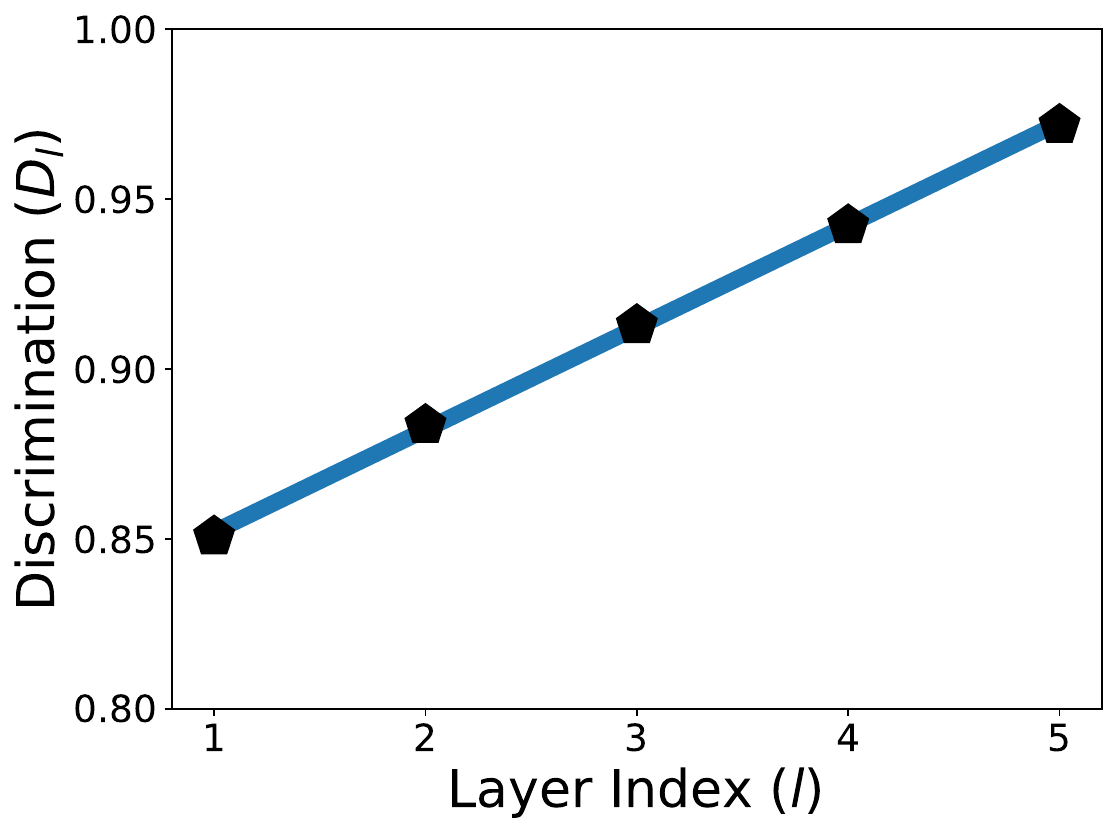}
        \caption{A $6$-layer \textbf{linear} network} 
    \end{subfigure} 

    \vspace{0.1in}
    \begin{subfigure}{1.0\textwidth}
        \centering
        \includegraphics[trim = 8 8 8 8 ,width=.4\textwidth]{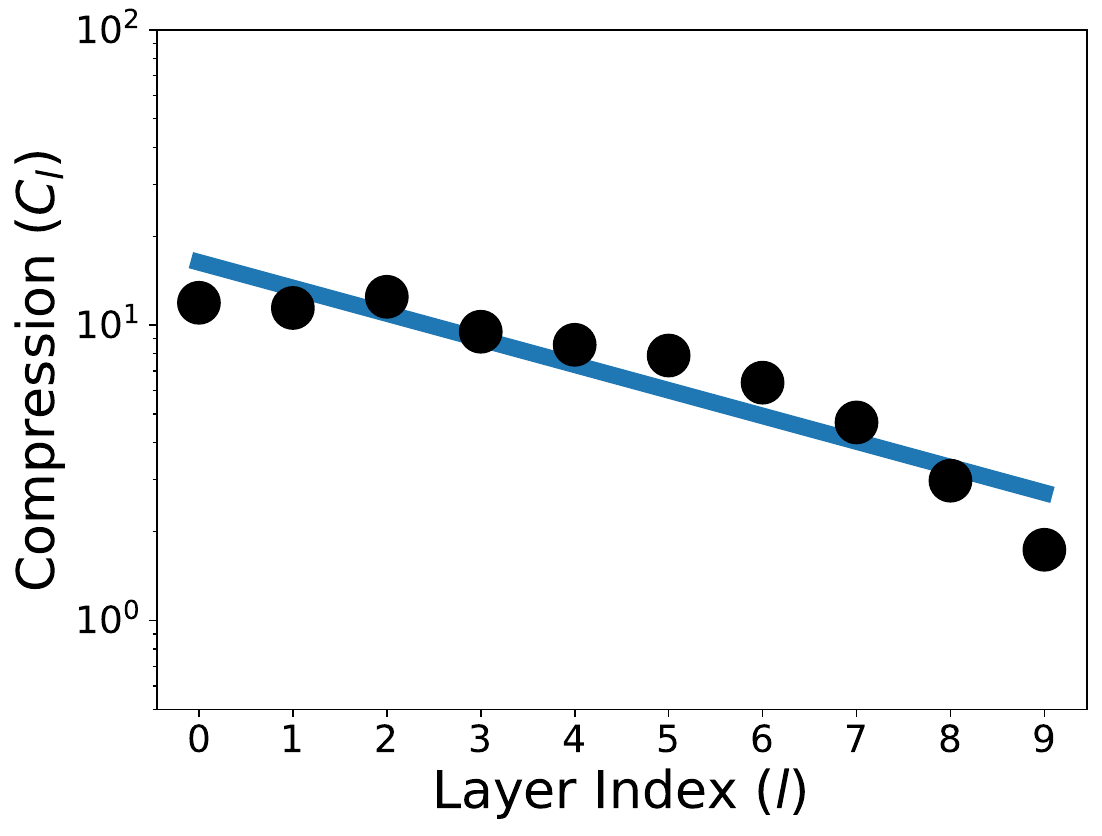}
        \hspace{0.5in} 
        \includegraphics[trim = 8 8 8 8,width=.4\textwidth]{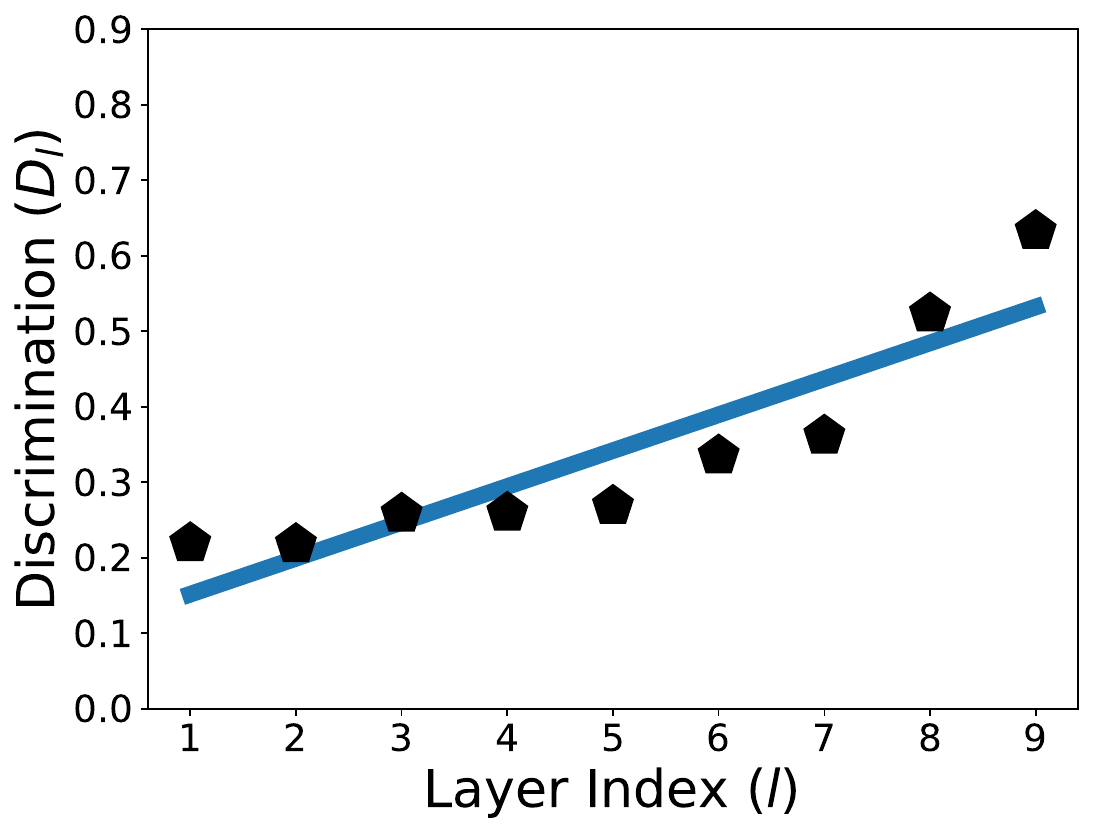}
        \caption{A $10$-layer \textbf{nonlinear} network} 
    \end{subfigure} 
\end{center}\vspace{-0.1in} 
\caption{\textbf{Progressive feature compression and discrimination on both linear and nonlinear networks.} We plot the feature compression and discrimination metrics defined in \eqref{eq:nc1} for $l=1,\dots,L-1$ on both the linear network (top row) and nonlinear network (bottom row). We train both networks using a nearly orthogonal dataset as described in Assumption~\ref{AS:1}, initializing the network weights satisfying \eqref{eq:init}, with an initialization scale of $\xi=0.3$. We train both networks via gradient descent until convergence. In each figure, the $x$-axis denotes the number of layers from shallow to deep, with layer-0 denoting the inputs. In the left figures, the $y$-axis denotes the compression measure $C_l$ in the logarithmic scale; In the right figures, the $y$-axis denotes the discrimination measure $D_l$. More experimental details can be found in \Cref{subsec:exp-thm}.}
\label{fig:progressive NC}  
\end{figure} 

\paragraph{From linear to nonlinear networks.} Although our result is rooted in DLNs, it provides valuable insights into the feature evolution in nonlinear networks. Specifically, the linear separability of features learned by initial layers in nonlinear networks (see \Cref{fig:intro1}) allows later layers to be effectively replaced by linear counterparts for compressing within-class features and discriminating between-class features. We empirically support this claim in \Cref{fig:intro2}, where we observe that both the linear and subsequent nonlinear layers exhibit consistent trends of feature compression and discrimination with respect to depth.
Therefore, studying DLNs helps us understand the role of the nonlinear layers after the initial layers in nonlinear networks for learning features. This understanding also sheds light on the pattern where within-class features compress at a geometric rate and between-class features discriminate at a linear rate in nonlinear networks, as illustrated in \Cref{fig:progressive NC} (bottom figures). A natural direction is to extend our current analysis framework to nonlinear networks, especially homogeneous neural networks \citep{lyu2019gradient}.

\begin{figure*}[t]
\begin{center}
       \begin{subfigure}{0.48\textwidth}
    \includegraphics[width = 0.952\linewidth]{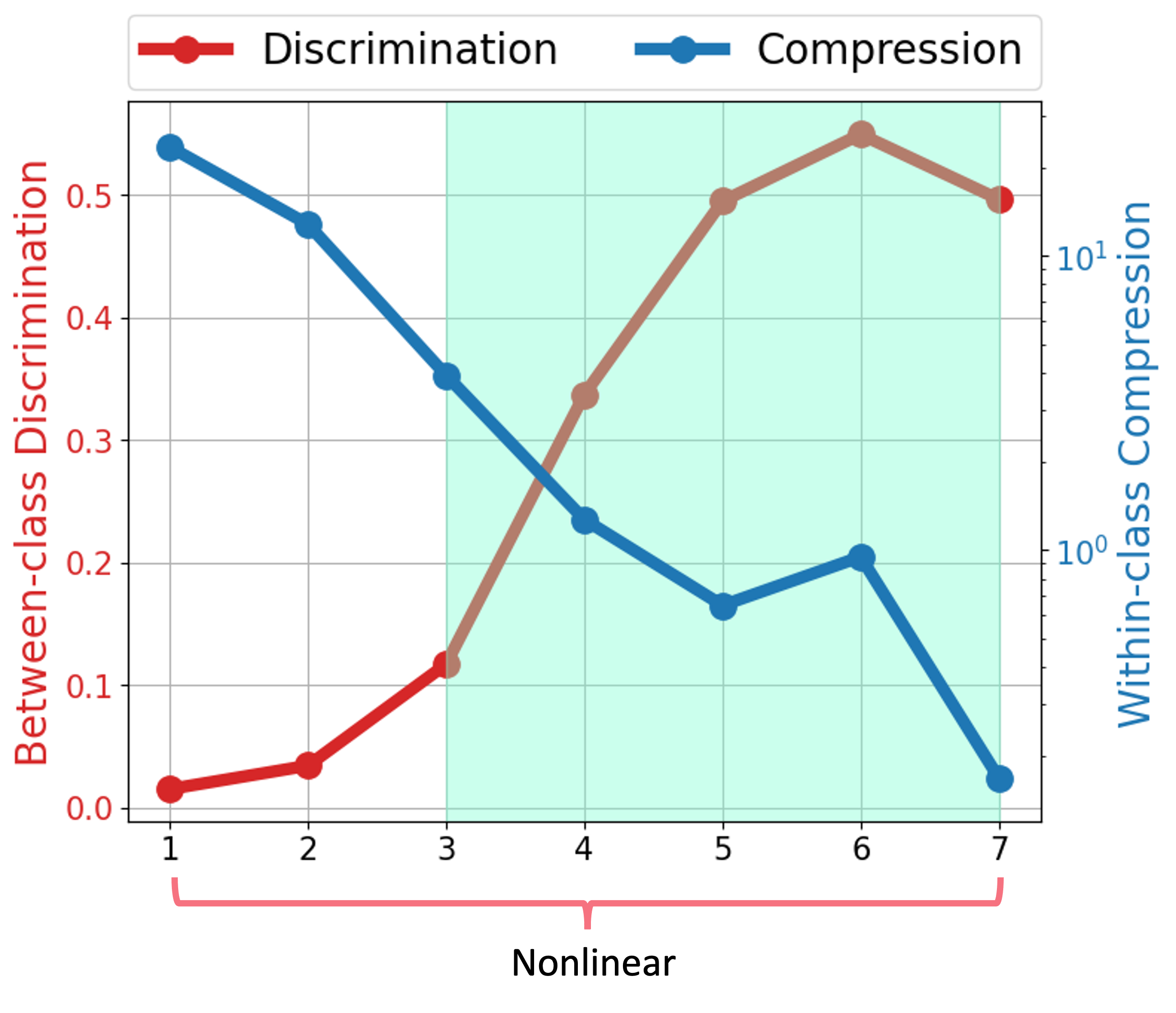}
    \caption{A 8-layer MLP network with ReLU} 
    \end{subfigure} 
    \hfill 
    \begin{subfigure}{0.47\textwidth}
    \includegraphics[width = 0.95\linewidth]{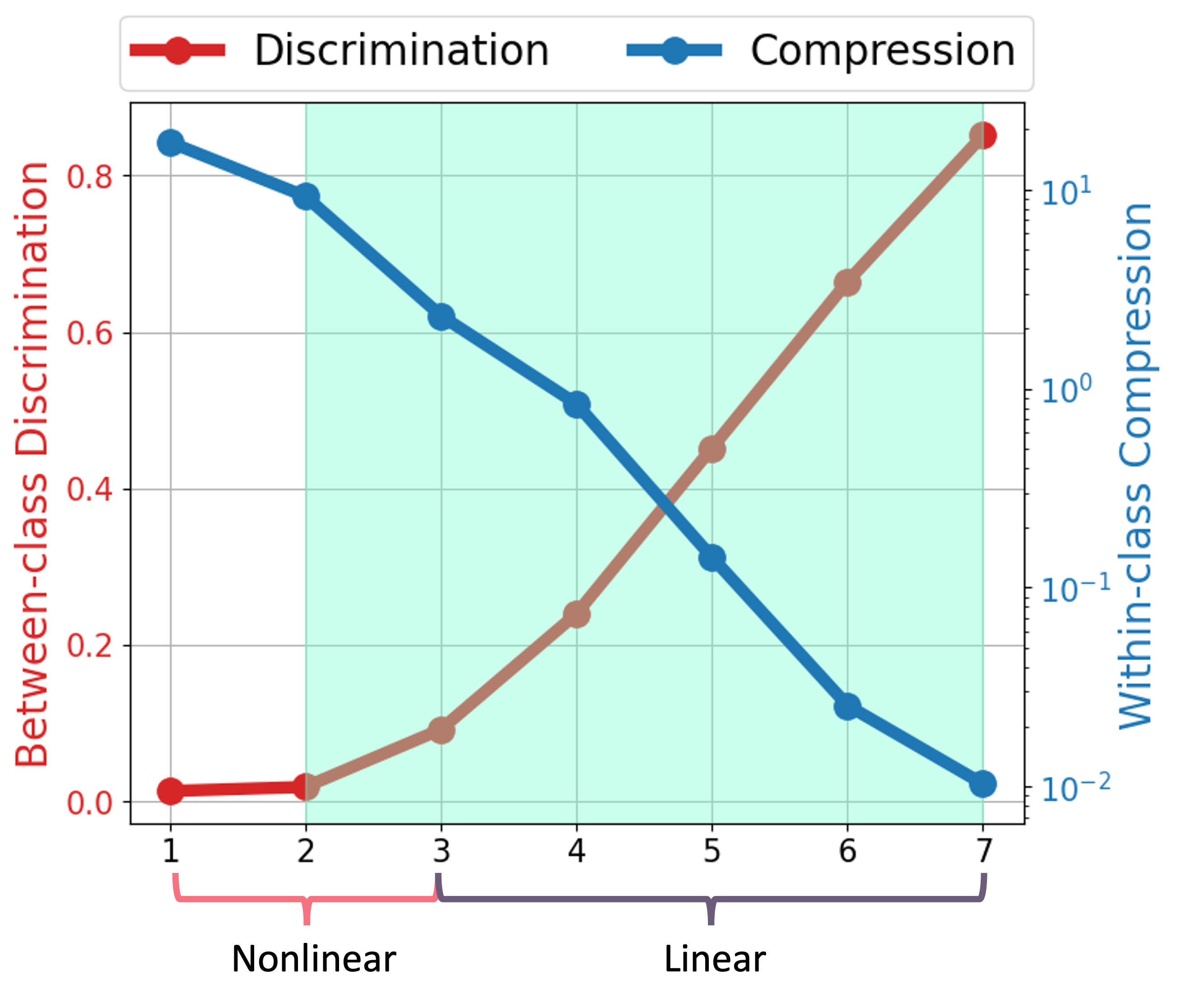}
    \caption{A 8-layer hybrid network} 
    \end{subfigure} 
\end{center}\vspace{-0.1in} 
    \caption{\textbf{Within-class compression and between-class discrimination of features of two 8-layer networks.} We use the same setup as in Figure \ref{fig:intro1} and plot the metrics of within-class compression and between-class discrimination (see Definition \ref{def:nc1}) of the two networks across layers after training, respectively. Additional experimental details are deferred to \Cref{subsubsec:resemb_1}.} 
    \label{fig:intro2}
\end{figure*}

\paragraph{Neural collapse beyond the unconstrained feature model.} One important implication of our result is that it addresses an open problem about NC. Specifically, almost all existing works assume the {\em unconstrained features model} \citep{Mixon2020,papyan2020prevalence} to analyze the NC phenomenon, where the last-layer features of the deep network are treated as free optimization variables to simplify interactions across layers; see, e.g., \cite{zhu2021geometric,wang2022linear,zhou2022all,yaras2022neural,zhou2022optimization,li2023neural}. However, a major drawback of this model is that it overlooks the hierarchical structure of deep networks as well as the structure of the input data. 
In this work, we address this issue without assuming the unconstrained feature model. Specifically, according to Theorem~\ref{thm:NC}, for a sufficiently deep linear network trained on nearly orthogonal data, the last-layer features within the same class concentrate around their class means, while the last-layer features from different classes are nearly orthogonal to each other. This directly implies that the last-layer features exhibit variability collapse and convergence to simplex equiangular right frame approximately after centralization. 

Since the submission of our manuscript, several follow-up works have investigated NC beyond the unconstrained feature model in different settings; see, e.g., \cite{hong2024beyond,wang2024progressive,jacot2025wide,kothapalli2024kernel,xu2023dynamics}. In particular, \cite{hong2024beyond} studied 2-layer and 3-layer ReLU neural networks and investigated the impact of network width, depth, data dimension, and statistical properties of input data on NC. \cite{wang2024progressive} characterized the geometric properties of intermediate layers of ResNet and proposed a conjecture on progressive feedforward collapse. Recently, \cite{jacot2025wide} studied a hybrid network with nonlinear layers followed by at least two linear layers and provided generic guarantees on progressive NC. It is worth noting that their derived conditions extend those in Assumption~\ref{AS:2} and are slightly more general. 

\paragraph{Guidance on network architecture design.} Progressive feature compression and discrimination in Theorem~\ref{thm:NC} provide guiding principles for network architecture design. Specifically, according to \eqref{eq1:compres}, \eqref{eq2:compres}, and \eqref{eq:discri}, features are more compressed within the same class and more discriminated between different classes, improving the separability of input data as the depth of the network increases. This is also supported by our experiments in \Cref{fig:test-init-dataset}. This indicates that the network should be deep enough for effective data separation in classification problems. However, it is worth noting that the belief that deeper networks are better is not always true. Indeed, it becomes increasingly challenging to train a neural network as it gets deeper, especially for DLNs \citep{glorot2010understanding}. Moreover, over-compressed features may degrade the out-of-distribution performance of deep networks as shown in \cite{masarczyk2023tunnel}. This, together with our result, indicates that neural networks should not be too deep for improved out-of-distribution generalization performance. 

\paragraph{Understanding projection heads for transfer learning.} In contrastive learning \citep{chen2020simple,chen2021exploring}, a successful empirical approach to improving transfer learning performance in downstream tasks involves the use of projection heads \citep{li2022principled}. These projection heads, typically consisting of one or several MLP layers, are added between the feature extractor and the final classifier layers during pre-training. For downstream tasks, the projection head is discarded and only the features learned by the feature extractor are utilized for transfer learning. Recent works \citep{li2022principled,kornblith2021why,galanti2022on,xie2022hidden} have established an empirical correlation between the degree of feature collapse during pre-training and downstream performance: less feature collapse leads to more transferable models. However, it remains unclear why features prior to projection heads exhibit less collapse and greater transferability. Our study addresses this question and offers a theoretical insight into the utilization of projection heads. According to Theorem~\ref{thm:NC}, it becomes apparent that features from the final layers tend to be more collapsed than the features before the projection head. This, together with the empirical correlation between feature collapse and transferability, implies that leveraging features before projection heads improves transferability. 
We provide further empirical evidence with experiments in \Cref{subsec:TL}. Moreover, the progressive compression pattern in Theorem~\ref{thm:NC} also provides insight into the phenomenon studied in \cite{yosinski2014transferable}, which suggests that deeper layers in a neural network become excessively specialized for the pre-training task, consequently limiting their effectiveness in transfer learning.

\subsection{Discussions on Assumption~\ref{AS:2}}\label{subsec:discuss}
 
In this subsection, we justify the properties of trained network weights in Assumption~\ref{AS:2} using both theoretical and experimental findings. While the solution described in Assumption~\ref{AS:2} is only one of infinitely many global optima for Problem \eqref{eq:obj}, the implicit bias of GD ensures that, with proper initialization, iterations almost always converge to the desired solution. In the following discussion, we delve into this phenomenon in greater depth. 

\paragraph{GD with orthogonal initialization.} We consider training a DLN for solving Problem \eqref{eq:obj} by GD, i.e., for all $l \in [L]$,
\begin{align}\label{eq:gd}
\bm W_l{(t+1)} = \bm W_l{(t)} - \eta \frac{\partial \ell(\bm \Theta{(t)})}{\partial \bm W_l},\ t \ge 0. 
\end{align}
Notably, when the learning rate is infinitesimally small, i.e., $\eta \to 0$, GD in \eqref{eq:gd} reduces to gradient flow. Moreover, we initialize the weight matrices $\bW_l$ for all $l \in [L]$ using $\xi$-scaled orthogonal matrices for a constant $\xi > 0$, i.e., 
\begin{align}\label{eq:init}
   \bm W_l{(0)^T} \bm W_l{(0)} = \xi^2 \bm I_d, \forall l \in [L-1], \quad \bm W_L{(0)} = \left[\xi\bm U\ \bm 0 \right],
\end{align}
where $\bm U \in \mathcal{O}^K$. It is worth noting that orthogonal initialization is widely used to train deep networks, which can speed up the convergence of GD; see, e.g., \cite{pennington2018emergence,xiao2018dynamical,Hu2020}.  

\paragraph{Theoretical justification of Assumption~\ref{AS:2}.} Theoretically, we can prove the conditions outlined in Assumption \ref{AS:2} when the gradient flow is trained on a square and orthogonal data matrix using the results in \cite{arora2018optimization,yaras2023law,yaras2024compressible,kwon2023compressing}.  

\begin{proposition}\label{prop:AS}
Suppose that the data matrix $\bm X \in \R^{d\times N}$ is square, i.e., $d=N$, and satisfies Assumption \ref{AS:1} with $\theta = 0$. Suppose in addition that we apply GD \eqref{eq:gd} to solve Problem \eqref{eq:obj} with $\eta \to 0$ and the initialization in \eqref{eq:init}. If GD converges to a global optimal solution, then \eqref{eq:min norm}, \eqref{eq:bala}, and \eqref{eq:singu} hold with $\delta =  \xi^2\sqrt{d-K}$, $\varepsilon = \xi$, and $\rho=0$. 
\end{proposition}
The proof of this proposition is deferred to \Cref{subsec:pf prop}. Although we can only prove Assumption~\ref{AS:2} in this restrictive setting, we see empirically that it seems to hold in more general settings. We substantiate this claim by drawing support from empirical evidence and existing findings in the literature, as elaborated in the following.

\paragraph{Empirical justifications of Assumption~\ref{AS:2}.} Here, we run GD \eqref{eq:gd} using the initiation \eqref{eq:init} on nearly orthogonal data satisfying Assumption~\ref{AS:1}. We refer the reader to \Cref{subsec:theory_and_assump} for the experimental setup. After training, we plot the metrics of minimum norm residual, balancedness residual, and variance of singular values (see the legends in \Cref{fig:assump_2_all})  against depth (resp. width) in  \Cref{fig:assump_2_all}(a) (resp. \Cref{fig:assump_2_all}(b)). It is observed that the magnitudes of these metrics are very small for different depths and widths of neural networks. This indicates that Assumption~\ref{AS:2} approximately holds for GD in a broader setting. 
 
\begin{figure*}[t]
    \begin{subfigure}{0.5\textwidth}
    \includegraphics[width = 0.9\linewidth]{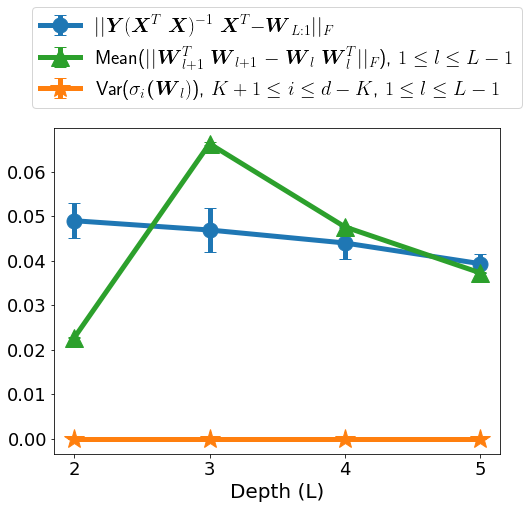}
    \caption{Fix width $d=400$, change depth $L$} 
    \end{subfigure} 
    \begin{subfigure}{0.5\textwidth}
    \includegraphics[width = 0.9\textwidth]{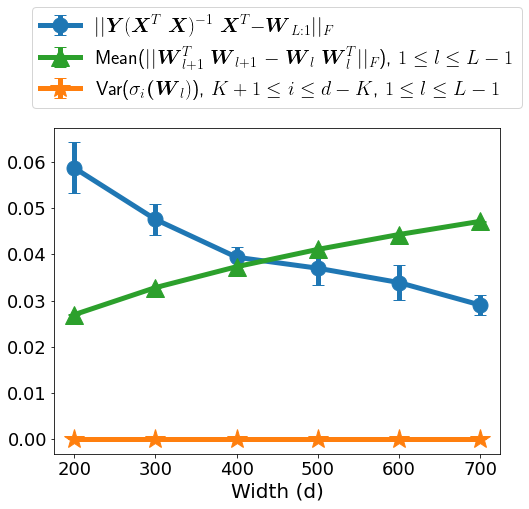}
    \caption{Fix depth $L=5$, change width $d$}
    \end{subfigure} 
    \caption{\textbf{Assumption~\ref{AS:2} holds approximately among different network configurations.} We train DLNs using the orthogonal initialization with varying widths and depths. For each network width $d$ and depth $L$, we show the minimum-norm residual, the average weight balancedness residual of all weights, and the variance of the singular values of all weights (see \Cref{metric:bias} for details) for the DLNs that have been trained until convergence. Each data point represents the mean value of 10 runs with different random seeds, with error bars indicating standard deviation. Here, we plot these metrics in (a) with fixed network width $d$ and varying depth $L$ and (b) with fixed network depth $L=5$ and varying network width $d$. More experimental details are deferred to \Cref{subsec:exp-assump}.}
    \label{fig:assump_2_all}
\end{figure*}

\paragraph{Prior arts on implicit bias support Assumption~\ref{AS:2}.} Moreover, Assumption~\ref{AS:2} is also well supported by many related results in the literature.
\begin{itemize}[leftmargin=*]
    \item \textbf{Minimum-norm solutions and balanced weights.} It has been extensively studied in the literature that GD with proper initialization converges to minimum-norm and balanced solutions, especially in the setting of gradient flow \citep{arora2018optimization,chatterji2023deep,du2018algorithmic,min2021explicit}. Specifically, \cite{du2018algorithmic,arora2018optimization} proved that for linear networks, the iterates of gradient flow satisfy
\begin{align}\label{eq:gf}
\frac{d}{dt}\left( \bm W_{l}(t)\bm W_{l}(t)^T - \bm W_{l+1}(t)^T\bm W_{l+1}(t) \right) = \bm 0,\ \forall l \in [L-1],
\end{align}
where $t$ denotes a continuous time index. This, together with \eqref{eq:init}, implies that \eqref{eq:bala} holds with $\delta=\xi\sqrt{d-K}$. Moreover, using the result in \cite{min2021explicit}, if we initialize the weights of two-layer networks as in \eqref{eq:init} with $\bm W_2(0)\bm W_1(0) \in \mathrm{span}(\bm X)$, gradient flow always yields the minimum-norm solution upon convergence. However, the conditions \eqref{eq:min norm} and \eqref{eq:bala} are rarely studied in the context of GD for training deep networks. In this context, we empirically verify these two conditions as shown in \Cref{fig:assump_2_all} under different network configurations.
\item \textbf{Approximate low-rankness of the weights.}  Recently, numerous studies have demonstrated that GD exhibits a bias towards low-rank weights \citep{arora2019implicit,Gunasekar2017,yaras2023law,kwon2023compressing}. In particular, \cite{yaras2023law,kwon2023compressing} showed that when the input data $\bm X$ is orthonormal, learning dynamics of GD for DLNs in \eqref{eq:gd}, with the initialization in \eqref{eq:init}, only updates an invariant subspace of dimension $2K$ for each weight matrix across all layers, where the invariant subspace is spanned by the singular vectors associated with the $K$-largest and $K$-smallest singular values, and the rest singular values remain unchanged. More experimental demonstration can be found in \Cref{app:extra-exp}. 
\end{itemize}

\section{Relationship to Prior Arts}\label{subsec:prior-arts}

In this section, we discuss the relationship between our results and prior works on the empirical and theoretical study of deep networks.  

\paragraph{Hierarchical feature learning in deep networks.}  Deep networks, organized in hierarchical layers, can perform effective and automatic feature learning \citep{allen2023backward}, where the layers learn useful representations of the data. To better understand hierarchical feature learning, plenty of studies have been conducted to investigate the structures of features learned at intermediate layers. One line of these works is to investigate the neural collapse (NC) properties at intermediate layers. For example, \cite{tirer2022extended} extended the study of neural collapse to three-layer nonlinear networks with the MSE loss, showing that the features of each layer exhibit neural collapse. \cite{dang2023neural} generalized the study of neural collapse for DLNs with imbalanced training data, and they drew a similar conclusion to \cite{tirer2022extended} that the features of each layer are collapsed. However, these results are based on the unconstrained features model and thus cannot capture the input-output relationship of the network. Moreover, the conclusion they draw on the collapse of intermediate features is far from what we observe in practice. Indeed, it is seen for both linear and nonlinear networks in \Cref{fig:progressive NC} that the intermediate features are progressively compressed across layers rather than exactly collapsing to their means for each layer. In comparison, under the assumption that the input $\bm X$ is nearly orthogonal in Assumption~\ref{AS:1}, our result characterizes the progressive compression and discrimination phenomenon to capture the training phenomena on practical networks as demonstrated in \Cref{fig:progressive NC}. 

On the other hand, another line of work \citep{yu2020learning,chan2022redunet,ma2022principles,yu2023white} argues that deep networks prevent within-class feature compression while promoting between-class feature discrimination, contrasting with our study where features are compressed across layers. Such a difference can be attributed to the choice of loss function. In our work, we focused on the study of the commonly used MSE loss, with the goal of understanding a prevalent phenomenon in classical training of deep networks. It has been empirically shown that increasing feature compression is beneficial for improving in distribution generalization and robustness \citep{papyan2020prevalence,li2022principled,ben2022nearest,he2023law,Chen2022a}. In comparison, \cite{yu2020learning} introduced a new maximum coding rate reduction loss that is intentionally designed to prevent feature compression. 


\paragraph{Learning dynamics and implicit bias of GD for training deep networks.} Our main result in Theorem~\ref{thm:NC} is based on Assumption~\ref{AS:2} for the trained weights. As discussed in \Cref{subsec:discuss}, Assumption~\ref{AS:2} holds as a consequence of results in recent works on analyzing the learning dynamics and implicit bias of gradient flow or GD in training deep networks. We briefly review the related results as follows. For training DLNs, \cite{arora2018convergence} established linear convergence of GD based upon whitened data and with a similar setup to ours. For Gaussian initialization, \cite{Du2019} showed that GD also converges globally at a linear rate when the width of hidden layers is larger than the depth, whereas \cite{Hu2020} demonstrated the advantage of orthogonal initialization over random initialization by showing that linear convergence of GD with orthogonal initialization is independent of the depth. More recent developments can be found in \cite{gidel2019implicit,Nguegnang2021,shin2022effects} for studying GD dynamics. On the other hand, another line of works focused on studying gradient flow for learning DLNs due to its simplicity \citep{bah2022learning,Eftekhari2020,min2021explicit,Tarmoun2021,bah2022learning}, by analyzing its convergence behavior. 

Numerous studies have shown that the effectiveness of deep learning is partially due to the implicit bias of its learning dynamics, which favors some particular solutions that generalize exceptionally well without overfitting in the over-parameterized setting \citep{Belkin2019,huh2021low,neyshabur2017implicit}. To gain insight into the implicit bias of GD for training deep networks, a line of recent work has shown that GD tends to learn simple functions \citep{cao2023implicit,Gunasekar2017,Ji2018,kunin2022asymmetric,Shah2020,VallePerez2018}.  For instance, some studies have shown that GD is biased towards max-margin solutions in linear networks trained for binary classification via separable data \citep{soudry2018implicit,kunin2022asymmetric}.  In addition to the simplicity bias, another line of work showed that deep networks trained by GD exhibit a bias towards low-rank solutions \citep{Gunasekar2017,huh2021low,yaras2023law}. The works \citep{gidel2019implicit,arora2019implicit} demonstrated that adding depth to matrix factorization enhances an implicit tendency towards low-rank solutions, leading to more accurate recovery.   

\section{Experimental Results}\label{sec:exp}

In this section, we conduct various numerical experiments to validate our assumptions, verify our theoretical results, and investigate the implications of our results on both synthetic and real data sets.  All of our experiments are conducted on a PC with 8GB memory and an Intel(R) Core i5 1.4GHz CPU, except for those involving large datasets such as CIFAR and FashionMNIST, which are conducted on a server equipped with NVIDIA A40 GPUs. Our code is implemented in Python and made available at \url{https://github.com/Heimine/PNC_DLN}. Throughout this section, we will repeatedly use multilayer perception (MLP) networks, where each layer of MLP networks consists of a linear layer and a batch norm layer \citep{ioffe2015batch} followed by ReLU activation. In addition, in accordance with our theoretical analysis, the compression metric is plotted for layers $0,1,\dots,L-1$, while the discrimination metric is plotted for layers $1,\dots,L-1$.   

The remaining sections are organized as follows. In \Cref{subsec:linear_approx_nonlinear}, we provide detailed experimental setups for the results discussed in \Cref{sec:intro} and shown in Figures \ref{fig:intro2}, \ref{fig:intro_init}, and \ref{fig:intro_init_2}. In \Cref{subsec:theory_and_assump}, we provide experimental results to support Theorem~\ref{thm:NC} in \Cref{sec:main} and validate Assumption \ref{AS:2} discussed in \Cref{subsec:discuss}. Finally, we empirically explore the implications of our results in \Cref{subsec:exp_explore} beyond our assumptions. 

\subsection{Significance of Studying DLNs in Feature Learning}\label{subsec:linear_approx_nonlinear} 

\begin{figure*}[t]
    \begin{subfigure}{0.48\textwidth}
    \includegraphics[width = 0.948\linewidth]{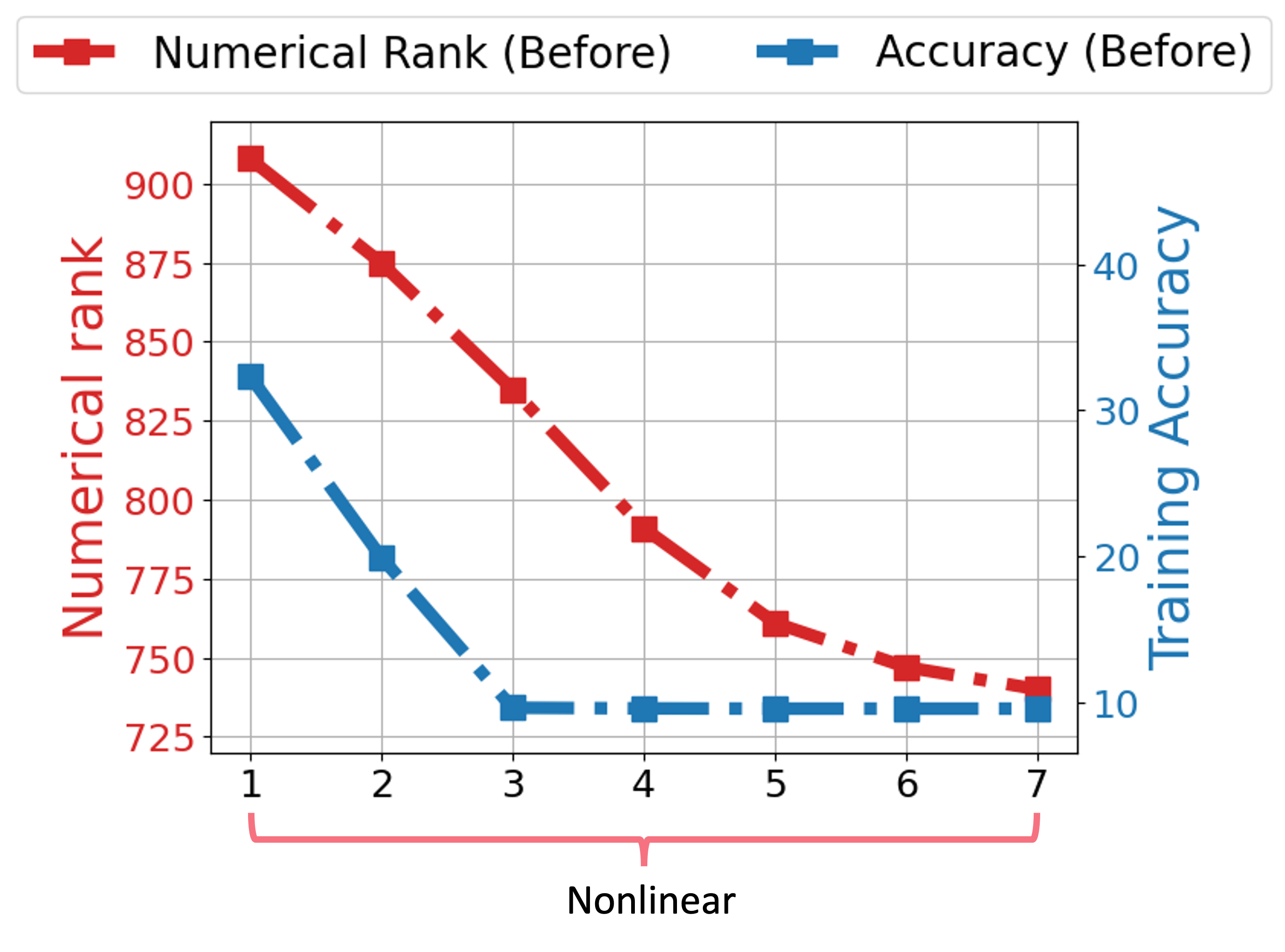}
    \caption{A 8-layer MLP network with ReLU} 
    \end{subfigure} 
    \begin{subfigure}{0.48\textwidth}
    \includegraphics[width = 0.954\linewidth]{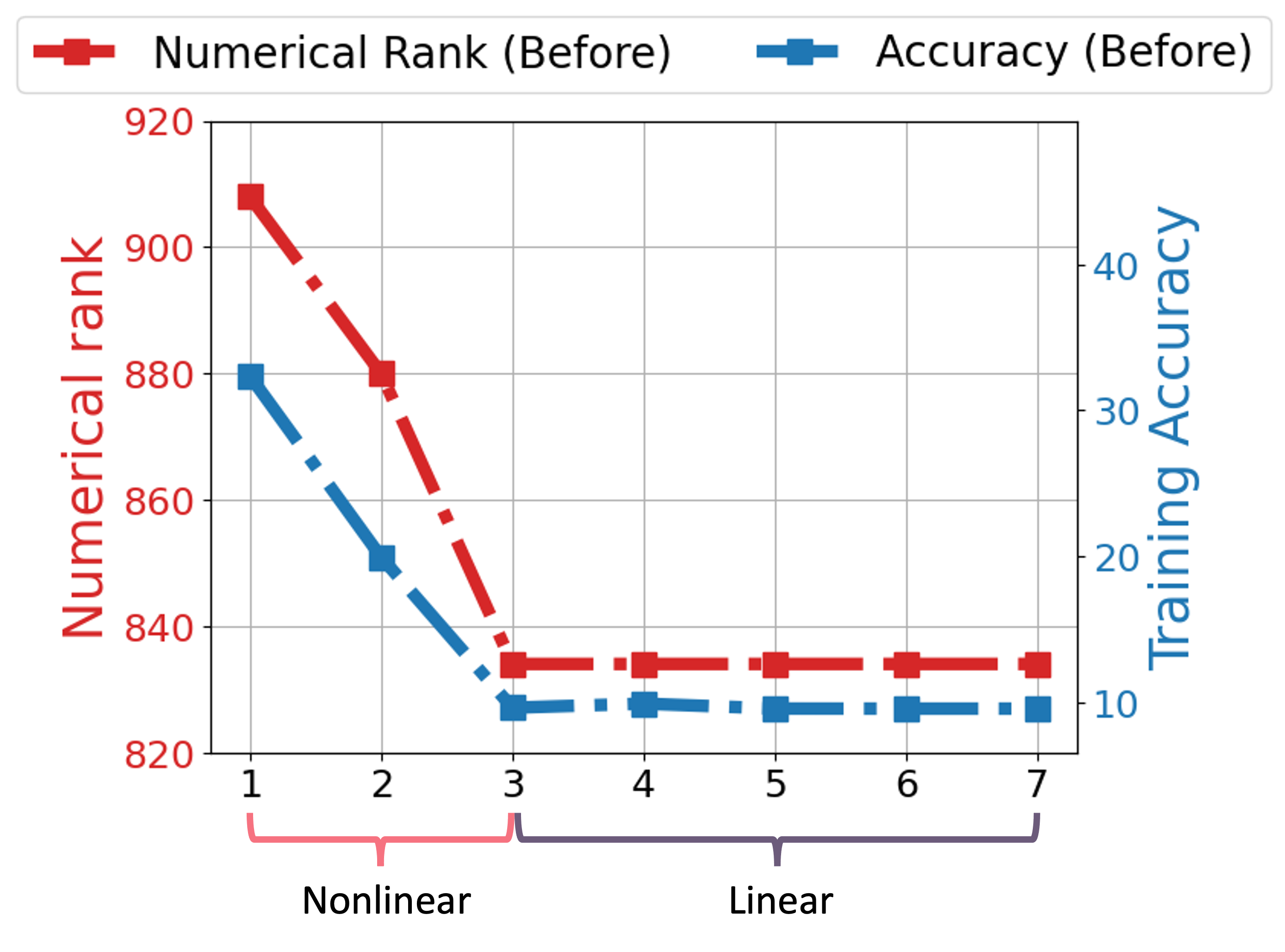}
    \caption{A 8-layer hybrid network} 
    \end{subfigure} 
    \caption{\textbf{Accuracy and numerical rank of two 8-layer networks before training.} We use the same setup as in Figures \ref{fig:intro1} and \ref{fig:intro2} but instead plot the numerical rank and training accuracy of the two networks across layers \textbf{before} training. }
    \label{fig:intro_init}
\end{figure*}

\begin{figure*}[t]
\begin{subfigure}{0.48\textwidth}
    \includegraphics[width = 0.952\linewidth]{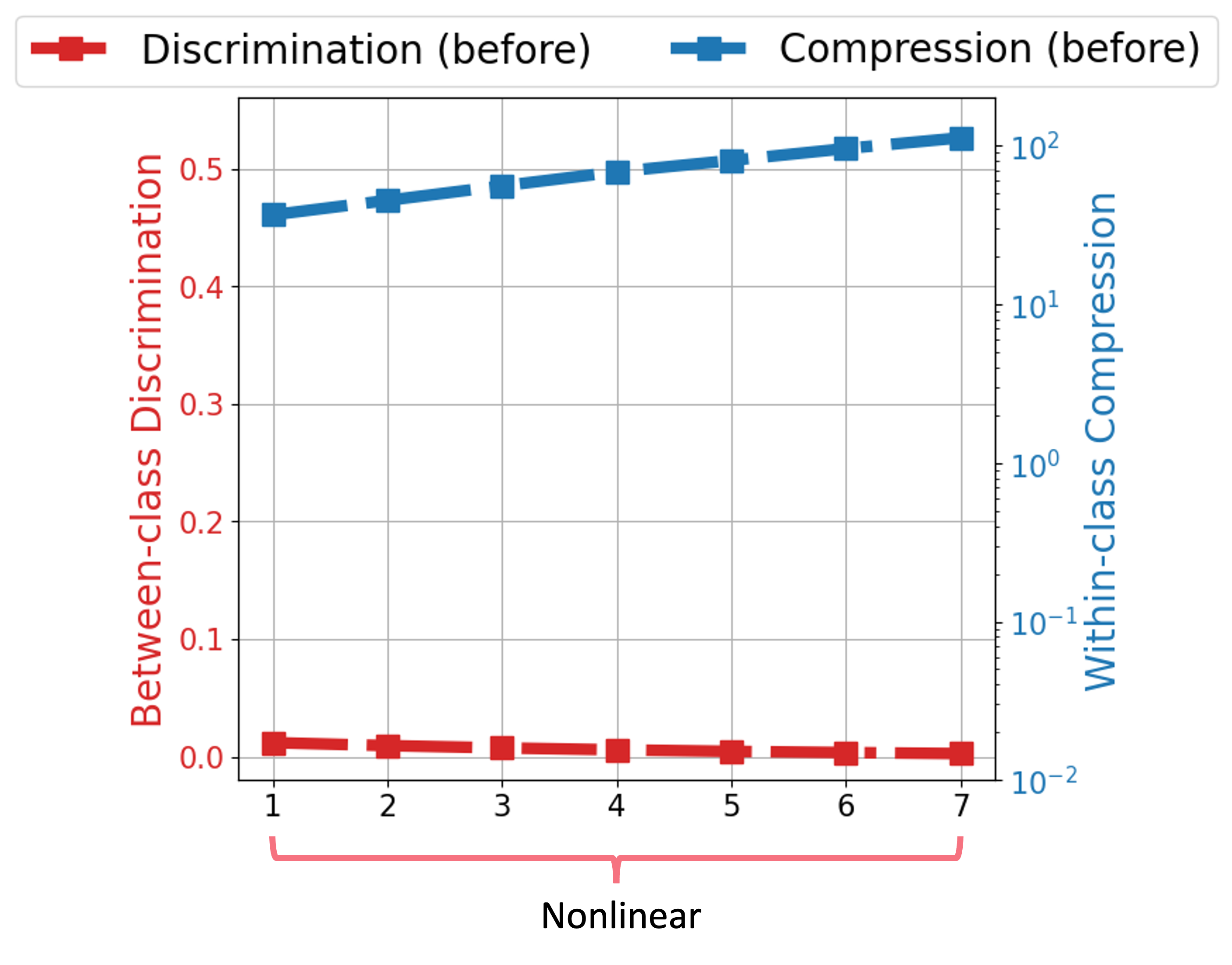}
    \caption{A 8-layer MLP network with ReLU} 
    \end{subfigure} 
    \begin{subfigure}{0.48\textwidth}
    \includegraphics[width = 0.947\linewidth]{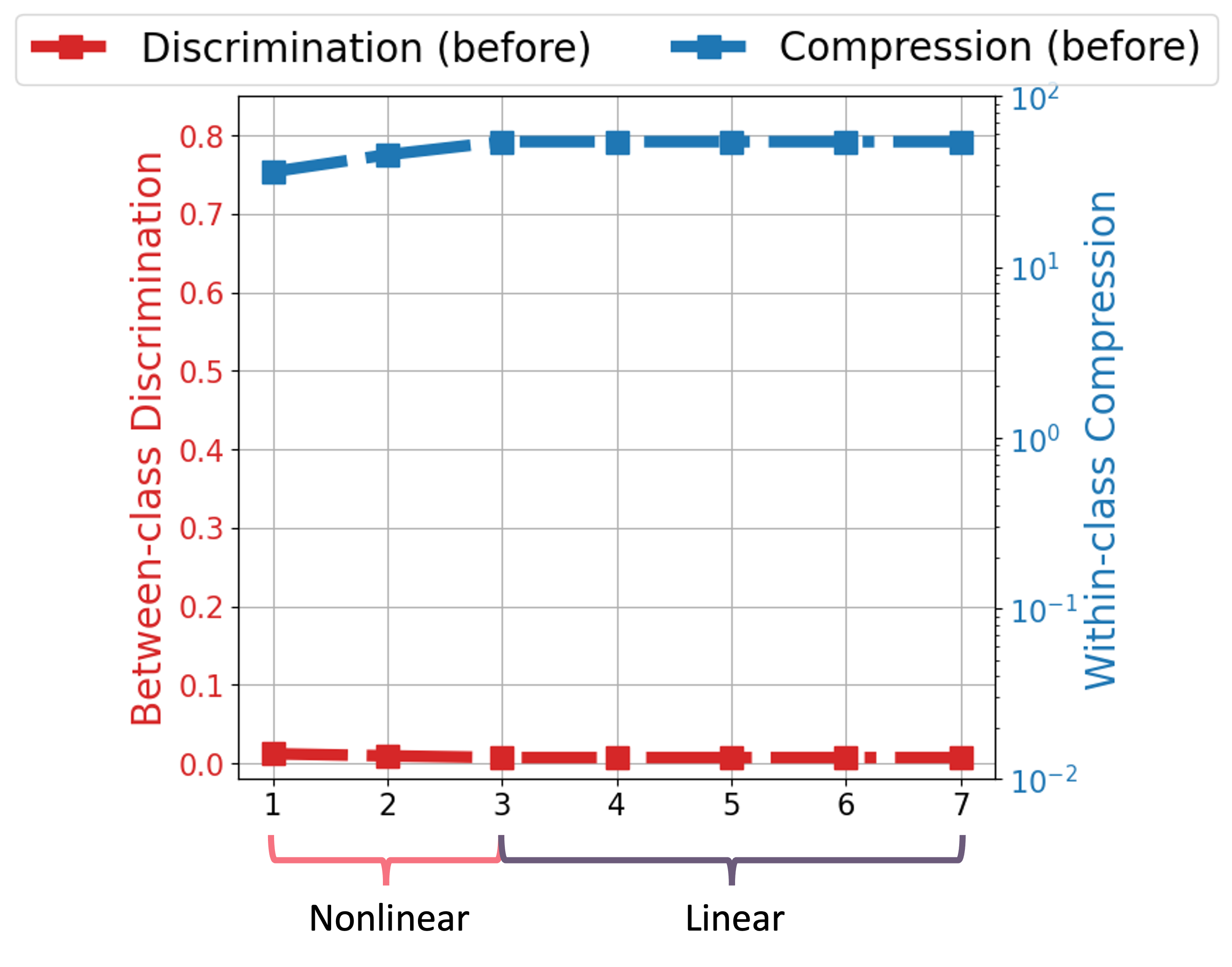}
    \caption{A 8-layer hybrid network} 
    \end{subfigure} 
    \caption{\textbf{Within-class compression and Between-class discrimination of two 8-layer networks before training.} We use the same setup as in \Cref{fig:intro1} and \Cref{fig:intro2} but instead plot the compression and discrimination statistics of the two networks across layers \textbf{before} training. }
    \label{fig:intro_init_2}
\end{figure*} 

In this subsection, our goal is to empirically demonstrate the significance of studying DLNs for feature learning as discussed in \Cref{sec:intro}. To achieve this, we investigate (i) the roles of linear layers and MLP layers at deep layers in feature learning and (ii) the importance of the depth of DLNs for generalization.  

\subsubsection{Linear Layers Mimic Deep Layers in MLPs for Feature Learning}\label{subsubsec:resemb_1}

In this subsection, we study the roles of linear layers and MLP layers in deep layers for feature learning. To begin, we provide the experimental setup for our experiments. 

\paragraph{Network architectures.} In these experiments, we construct two  8-layer networks: (a) a nonlinear MLP network and (b) a hybrid network formed by a 3-layer MLP followed by a 5-layer linear network. In both cases, the final layer ($L$-th layer) serves as a linear classifier. We set the hidden dimension $d=1024$ for all linear layers. 

\paragraph{Training dataset and training methods.} We employ the SGD optimizer to train the networks by minimizing the MSE loss on the CIFAR-10 dataset \citep{krizhevsky2009learning}. For the settings of the SGD optimizer, we use a momentum of 0.9, a weight decay of $10^{-4}$, and a dynamically adaptive learning rate ranging from $10^{-3}$ to  $10^{-5}$, modulated by a CosineAnnealing learning rate scheduler as detailed in \cite{loshchilov2017sgdr}. We use the orthogonal initialization in \eqref{eq:init} with $\xi=0.1$ to initialize the network weights. 
The neural networks are trained for 400 epochs with a batch size of 128.

\paragraph{Experiments and observations.} Now, we elaborate on the tasks conducted in Figures \ref{fig:intro1}, \ref{fig:umap}, \ref{fig:intro2}, \ref{fig:intro_init}, and \ref{fig:intro_init_2} and draw conclusions from our observations.  

\begin{itemize}[leftmargin=*]
    \item In \Cref{fig:intro1} (resp. \Cref{fig:intro_init}), we plot the numerical rank and training accuracy of the features against the layer index after training (resp. before training). Here, the numerical rank of a matrix is defined as the number of top singular values whose sum collectively accounts for more than 95\% of its nuclear norm. In terms of training accuracy, we add a linear classification layer to a given layer $l$ of the neural network and train this added layer using the cross-entropy loss to compute the classification accuracy. It is observed in \Cref{fig:intro1} that the training accuracy rapidly increases in the initial layers and nearly saturates in the deep layers, whereas the numerical rank increases in the initial layers and then decreases in the deep layers progressively in the trained networks. This observation suggests that the initial layers of a network create linearly separable features that can achieve accurate classification, while the subsequent layers further compress these features progressively.  

    \item In \Cref{fig:umap}, we employ a 2D UMAP plot \citep{mcinnes2018umap} with the default settings in the UMAP Python package to visualize the evolution of features from shallow to deep layers. It is observed that in the first layer, features do not exhibit obvious structures in terms of feature compression and discrimination; the features from the same class are more and more compressed, while the features from different classes become more and more separable from layer 2 to layer 4; this within-class compression and between-class discrimination pattern is strengthened from layer 4 to layer 6. This visualization demonstrates that features in the same class are compressed, while the features from different classes are discriminated progressively.  

    \item In \Cref{fig:intro2} (resp. \Cref{fig:intro_init_2}), we plot the metrics of within-class feature compression $C_l$ and between-class feature discrimination $D_l$ defined in Definition~\ref{def:nc1} against the layer index after training (resp. before training). 
    According to \Cref{fig:intro2}, we observe a consistent trend of decreasing $C_l$ and increasing $D_l$ against the layer index after training for both networks, albeit the hybrid network exhibits a smoother transition in $C_l$ and $D_l$ compared to the fully nonlinear network. This observation supports our result in Theorem~\ref{thm:NC}.   
\end{itemize}

\paragraph{The role of linear layers in feature learning.} Comparing \Cref{fig:intro1}(a) to \Cref{fig:intro1}(b) and \Cref{fig:intro2}(a) to \Cref{fig:intro2}(b), we conclude that linear layers play the same role as MLP layers in the deep layers of nonlinear networks in feature learning, compressing within-class features and discriminating between-class features progressively. Intuitively, this is because the representations from the initial layers are already linearly separable, hence we can replace the deeper nonlinear layers with DLNs to achieve the same functionality without sacrificing the training performance. Moreover, comparing the results after training (Figures \ref{fig:intro1} and \ref{fig:intro2}) with those before training (Figures \ref{fig:intro_init} and \ref{fig:intro_init_2}), we conclude that neural networks trained with SGD do not operate in the lazy-training regime described by the neural tangent kernel (NTK) theory, where network parameters remain close to their initialization and training behaves nearly linearly. This observation is consistent with findings from recent studies~\citep{bietti2022learning,Damian2022}.


\subsubsection{Effects of Depth in Linear Layers for Improving Generalization}\label{subsubsec:linear_gene}

In this subsection, we study the impact of depth in deep networks for generalization. To begin, we provide the experimental setup for our experiments. 

\paragraph{Network architectures.} We consider a 2-layer MLP network as our base network and construct networks of varying depths from 3 to 9 by adding linear or MLP layers to this base network. We use the PyTorch default initialization \citep{he2015delving} to initiate all weights.

\paragraph{Training dataset and training methods.} We train these networks on the FashionMNIST \citep{xiao2017fashion} and CIFAR-10 \citep{krizhevsky2009learning} datasets, using the same training approach in \Cref{subsubsec:resemb_1}, except that the initial learning rate is set as $10^{-2}$. We use $5$ different random seeds to initialize the network weights and report the average test accuracy.

\paragraph{Experiments and observations.} After training, we report the average test accuracy throughout training against the number of layers of constructed networks in \Cref{fig:why_dln}. Notably, we observe a consistent improvement in test accuracy as the networks grow deeper, regardless of whether the added layers are linear or MLP layers. This observation provides further evidence for the resemblance between DLNs and the deeper layers in MLPs. Additionally, it shows that DLNs can benefit from increased depth, similar to their counterparts in traditional nonlinear networks.

\subsection{Experimental Verification of Assumption~\ref{AS:2} and Theorem~\ref{thm:NC}}\label{subsec:theory_and_assump}

In this subsection, we conduct numerical experiments to verify our assumptions and theorem presented in \Cref{sec:main}.  Unless otherwise specified, we use the following experimental setup. 

\paragraph{Network architectures.} We study DLNs and MLPs with ReLU activation with different widths and depths specified in each figure. 

\paragraph{Training dataset.} In all experiments, we fix the number of classes $K=3$ and the number of training samples $N = 30$, with varying hidden dimensions $d\ge N$ as specified in each figure. To generate input data $\bm X \in \R^{d\times N}$ that satisfies Assumption \ref{AS:1}, we first generate a matrix $\bm A \in \R^{d\times N}$ with entries i.i.d. sampled from the standard normal distribution and a matrix $\bm B \in \R^{d\times N}$ with entries i.i.d. sampled from the uniform distribution on $[0,1]$. Then, we apply a compact SVD to $\bm A$ and set ${\bm X}_0 \in \mathbb R^{d \times N}$ as the left singular matrix. Next, we obtain a matrix $\bm N$ by normalizing $\bm B$ such that its Frobenius norm is 1. Finally, we generate $\bm X$ via $\bm X = {\bm X}_0 + \bm N$ such that $\bm X$ is nearly orthogonal satisfying Assumption~\ref{AS:1}.

\paragraph{Training method and weight initialization.} Unless otherwise specified, we train networks using full batch GD in \eqref{eq:gd}, with a fixed learning rate $\eta = 0.1$. We use orthogonal weight initialization as described in \eqref{eq:init}  with varying initialization scaling $\xi$. This initialization ensures that the weights at initialization meet the condition \eqref{eq:bala}.

\subsubsection{Experiments for Verifying Assumption~\ref{AS:2}}\label{subsec:exp-assump}

In this subsection, we conduct experiments to verify Assumption~\ref{AS:2} and corroborate the discussions in \Cref{subsec:discuss} based on the above experimental setup. Specifically, we train DLNs using orthogonal initialization in \eqref{eq:init} with $\xi=0.1$ for various network depths and widths. Given a DLN, we train it 10 times with different random seeds. In each run, we terminate training once the training loss is less than $10^{-11}$. We plot the following metrics over 10 runs against depth and width in  \Cref{fig:assump_2_all}(a) and \Cref{fig:assump_2_all}(b), respectively: 
\begin{align}\label{metric:bias}
    &\text{minimum-norm residual:} \quad ||\bY(\bm X^T\bm X)^{-1}\bX^T - \bW_{L:1}||_F, \notag \\
    &\text{averaged balancedness:}\quad \frac{1}{L-1}\sum_{l=1}^{L-1}||\bW_{l+1}^T\bW_{l+1} - \bW_l\bW_l^T||_F, \\
    & \text{variance of singular values of all weights:}\ \frac{1}{(L-1)(d-2K)} \sum_{l=1}^{L-1} \sum_{i=K+1}^{d-K} \left( \sigma_{i}(\bm W_l) - \mu \right)^2,\notag 
\end{align}
where $$\mu =  \frac{1}{(L-1)(d-2K)}\sum_{l=1}^{L-1} \sum_{i=K+1}^{d-K} \sigma_{i}(\bm W_l).$$ It can be observed from \Cref{fig:assump_2_all} that the weights of a DLN with different depths and widths trained under our settings approximately satisfy Assumption~\ref{AS:2}. This supports our discussions in \Cref{subsec:discuss}.  

\subsubsection{Experiments for Verifying Theorem~\ref{thm:NC}}\label{subsec:exp-thm}

\begin{figure*}[t]
    \begin{subfigure}{1.0\textwidth}
    \includegraphics[width = 0.325\linewidth]{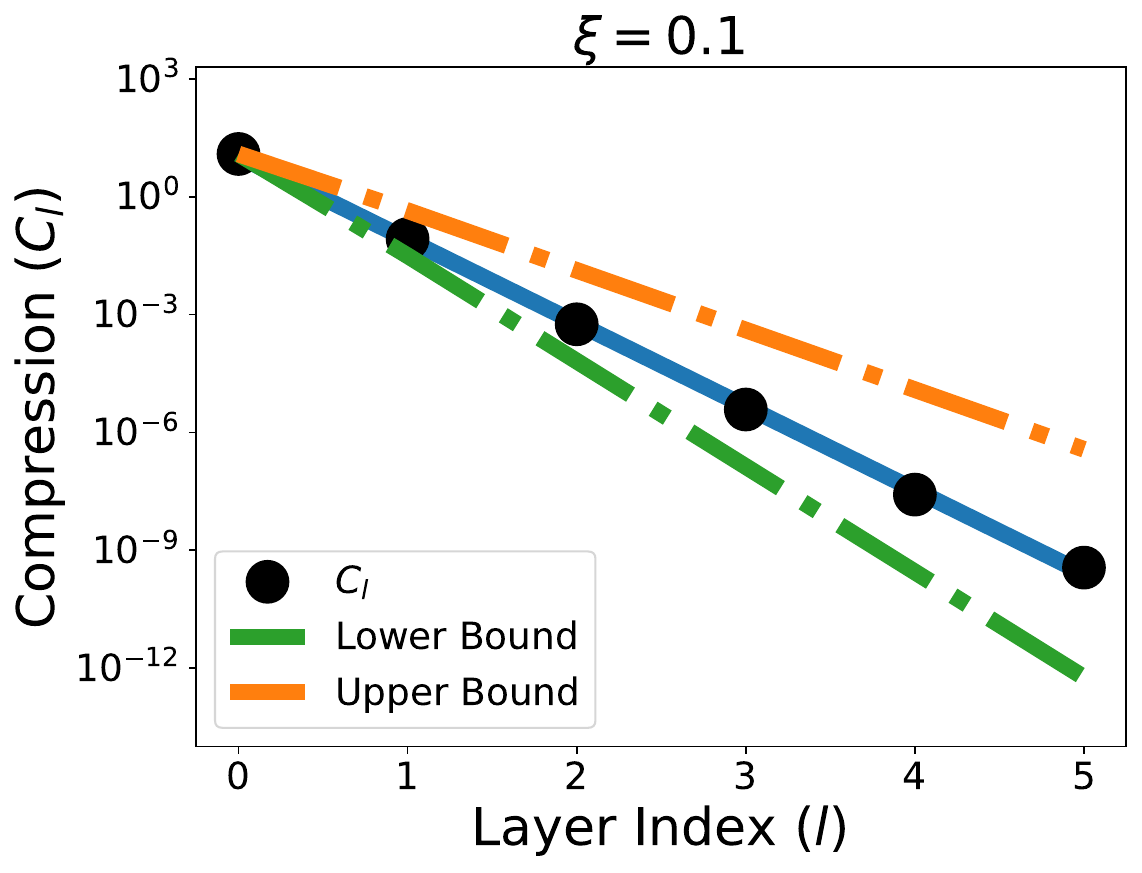}
    \includegraphics[width = 0.32\linewidth]{Figures/validate_claim/6_layer_eps03_within_unif.pdf}
    \includegraphics[width = 0.32\linewidth]{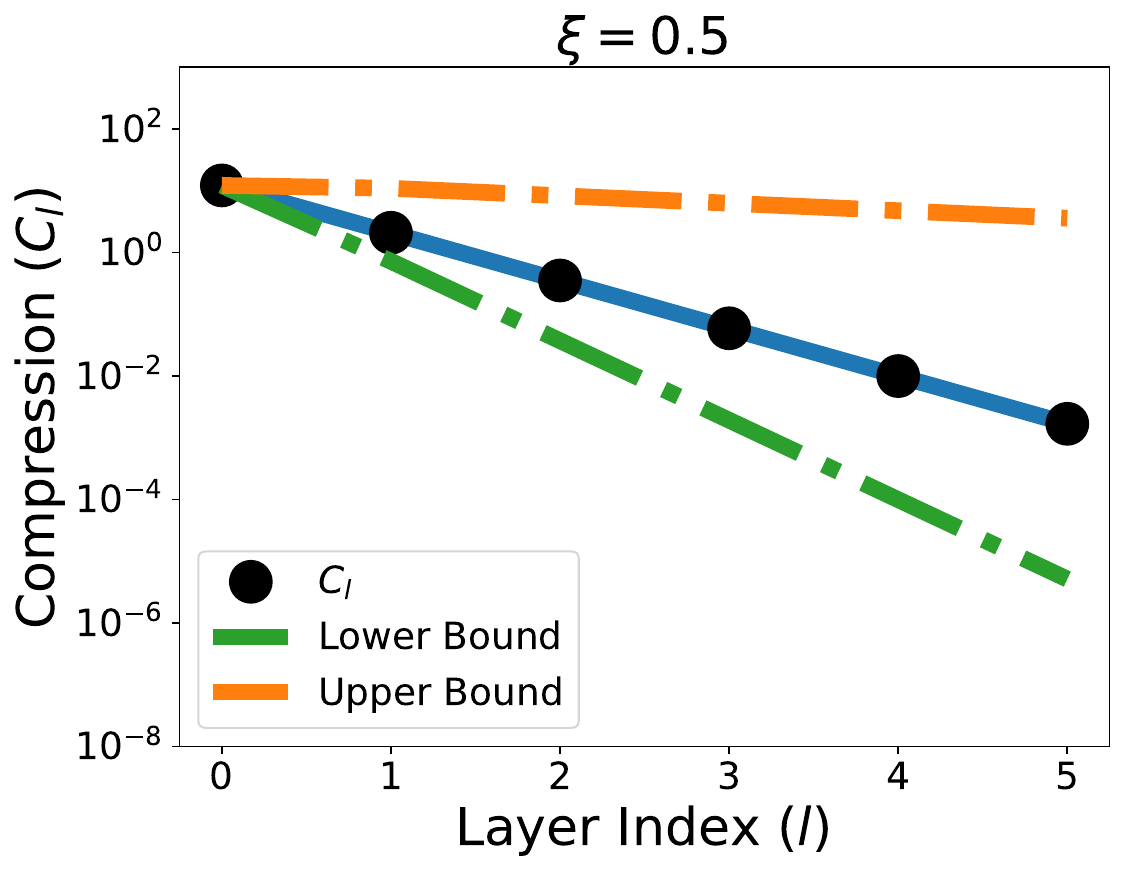}
    \caption{The metric of within-class compression $C_l$} 
    \end{subfigure} 

    \vspace{1mm}
    \begin{subfigure}{1.0\textwidth}
    \includegraphics[width = 0.32\linewidth]{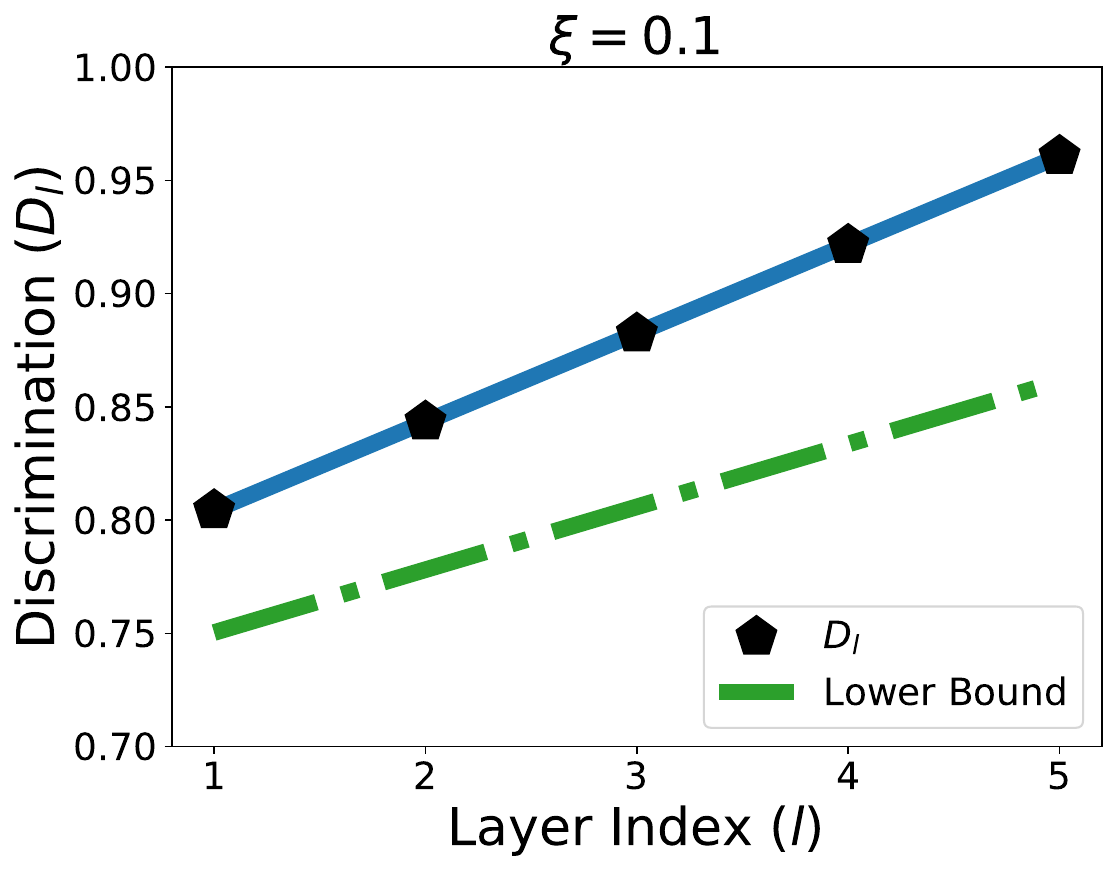}
    \includegraphics[width = 0.32\linewidth]{Figures/validate_claim/6_layer_eps03_between_unif.pdf}
    \includegraphics[width = 0.32\linewidth]{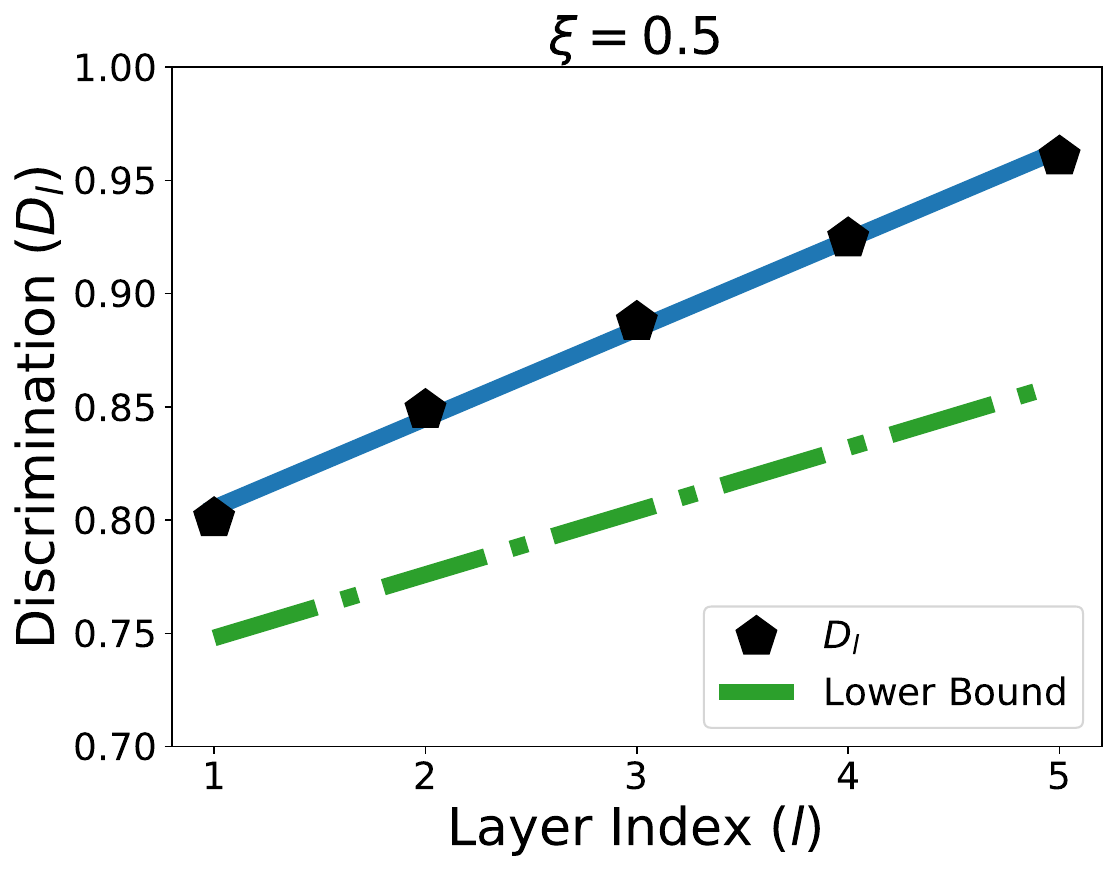}
    \caption{The metric of between-class discrimination $D_l$} 
    \end{subfigure} 

    \caption{\textbf{Visualization of the metrics of within-class compression $C_l$ and between-class discrimination across layers.} We train 6-layer DLNs with $\xi = 0.1, 0.3, 0.5$, respectively. In the top figures, we plot the metric of within-class compression $C_l$ along with the corresponding theoretical upper and lower bounds given in \eqref{eq2:compres}. In the bottom figures, we plot the metric of between-class discrimination given in \eqref{eq:discri}. }
    \label{fig:check_bound}
\end{figure*}

Now, we conduct experiments to validate Theorem~\ref{thm:NC}. Specifically, we train 6-layer DLNs using GD with varying initialization scales $\xi=0.1, 0.3, 0.5$, respectively. Then, we plot the metrics of within-class compression $C_l$ and between-class discrimination $D_l$ against layer index in \Cref{fig:check_bound}(a) and \Cref{fig:check_bound}(b), respectively. It is observed that the feature compression metric $C_l$ decreases exponentially, while the feature discrimination metric $D_l$ increases linearly w.r.t. the layer index.  In the top row of \Cref{fig:check_bound}, the solid blue line is plotted by fitting the values of $C_l$ at different layers, and the green and orange dash-dotted lines are plotted according to the lower and upper bounds in \eqref{eq2:compres}, respectively. We can observe that the solid blue line is tightly sandwiched between the green and orange dash-dotted lines. This indicates that \eqref{eq1:compres} and \eqref{eq2:compres} provide a valid and tight bound on the decay rate of the feature compression metric. According to the bottom row of \Cref{fig:check_bound}, we conclude that \eqref{eq:discri} provides a valid bound on the growth rate of the feature discrimination metric. 


Moreover, we respectively train 6-layer DLN and 10-layer MLP networks with hidden dimension $d=50$ via GD with an initialization scale $\xi=0.3$. In \Cref{fig:progressive NC}, we plot the feature compression and discrimination metrics on both DLN and MLP networks, respectively.  We observe that the exponential decay of feature compression and the linear increase of feature discrimination hold exactly in DLNs and approximately in nonlinear networks. 

\subsection{Exploratory Experiments}\label{subsec:exp_explore} 

\subsubsection{Empirical Results Beyond Theory}

In this subsection, we conduct exploratory experiments to demonstrate the universality of our result. Unless otherwise specified, we use the experimental setup outlined at the beginning in \Cref{subsec:theory_and_assump} in these experiments. Here are our observed findings: 
\begin{itemize}[leftmargin=*]
    \item \textbf{Progressive feature compression and discrimination in nonlinear deep networks.}  
    We train 8-layer and 16-layer MLP networks, respectively. After training, we plot the metrics of feature compression and discrimination defined in \eqref{def:nc1} in \Cref{fig:test-nonlinear}. It is observed that 
    the feature compression metric $C_l$ decays at an \emph{approximate} geometric rate in nonlinear networks, while the feature discrimination metric $D_l$ increases at an \emph{approximate} linear rate. Additionally, it is worth mentioning that a similar ``law of separation'' phenomenon has been reported in \cite{he2023law,li2022principled} on nonlinear networks, but their results are based upon a different metric of data separation. 
    
    \item  \textbf{Progressive feature compression and discrimination on DLNs with generic initialization.} 
    In most of our experiments and discussion in \Cref{subsec:discuss}, we mainly focused on orthogonal initialization, which simplifies our analysis due to induced weight balancedness across layers. To demonstrate the generality of our results, we also test the default initialization in the PyTorch package and train the DLN. As shown in \Cref{fig:test-init}, we can observe that the compression metric $C_l$ decays from shallow to deep layers at an \emph{approximate} geometric rate with different network depth $L$. Additionally, the discrimination metric $D_l$ increases at an {\em approximate} linear rate.

    \item \textbf{Progressive feature compression on real datasets.} We train a hybrid network on FashionMNIST and CIFAR datasets using the network architectures and training methods in \Cref{subsubsec:linear_gene} and plot the metric of within-class compression against layer index in \Cref{fig:test-init-dataset}. Although we use real datasets and PyTorch default initialization, we can still observe that the within-class compression metric $C_l$ decays progressively at an approximate geometric rate. This further demonstrates the universality of our studied phenomenon. Moreover, the results in \Cref{fig:test-init-dataset} also illustrate the role of depth, where the decay rates are approximately the same across all settings, and thus deeper networks lead to more feature compression.
  
\end{itemize}

Beyond these findings, we conduct additional experiments on the SST-5 text dataset \citep{socher2013recursive} and the ImageNet \citep{deng2009imagenet} dataset to further support our claims. These results are reported in \Cref{app:extra-exp-modality}. 

\begin{figure*}[t]
    \begin{subfigure}{0.49\textwidth}
    \includegraphics[width = 0.49\linewidth]{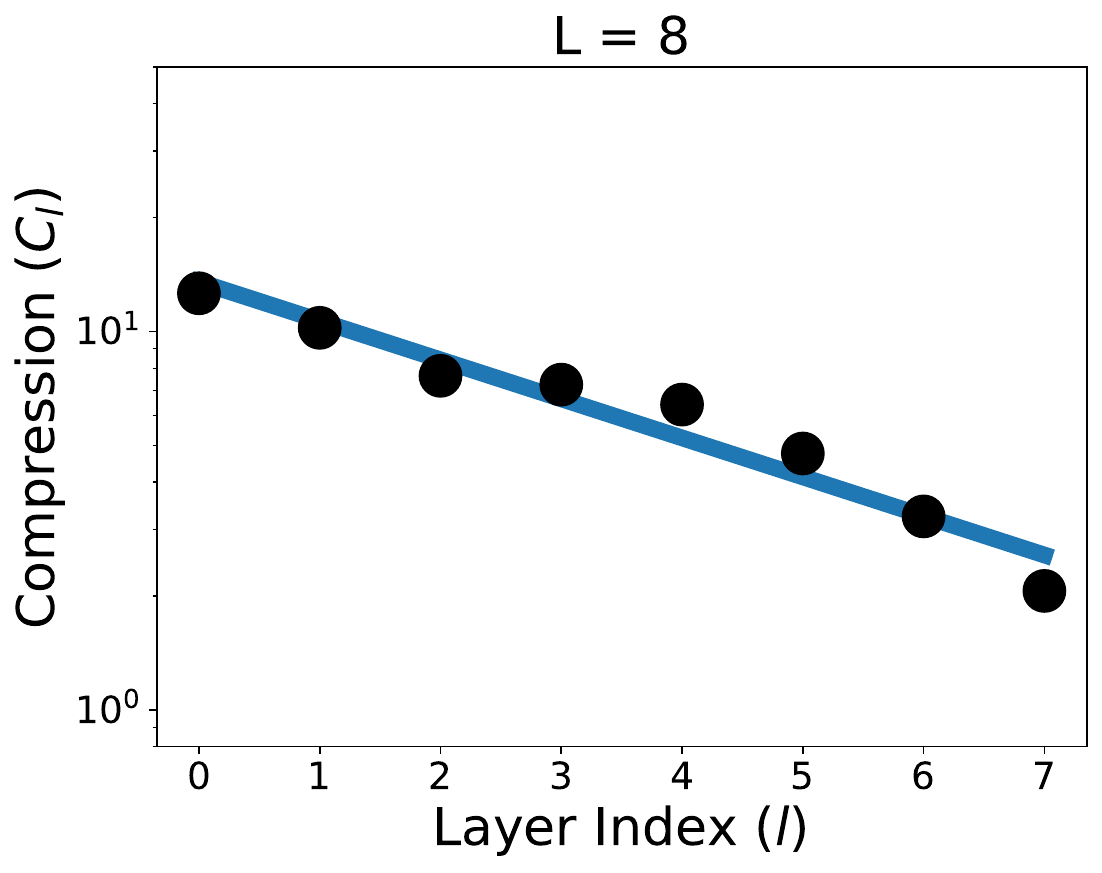}
    \includegraphics[width = 0.49\linewidth]{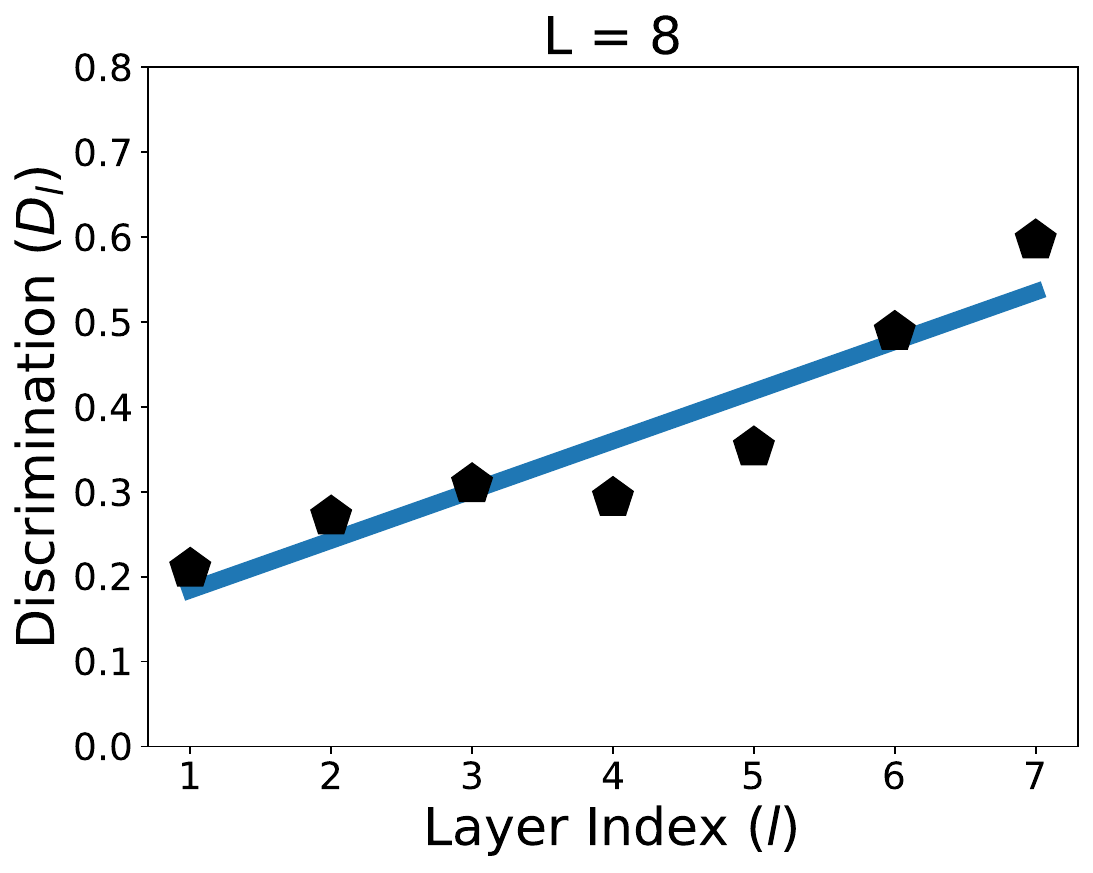} 
    \caption{An 8-layer MLP.} 
    \end{subfigure} 
    \hspace{1mm}
    \begin{subfigure}{0.49\textwidth}
    \includegraphics[width = 0.49\linewidth]{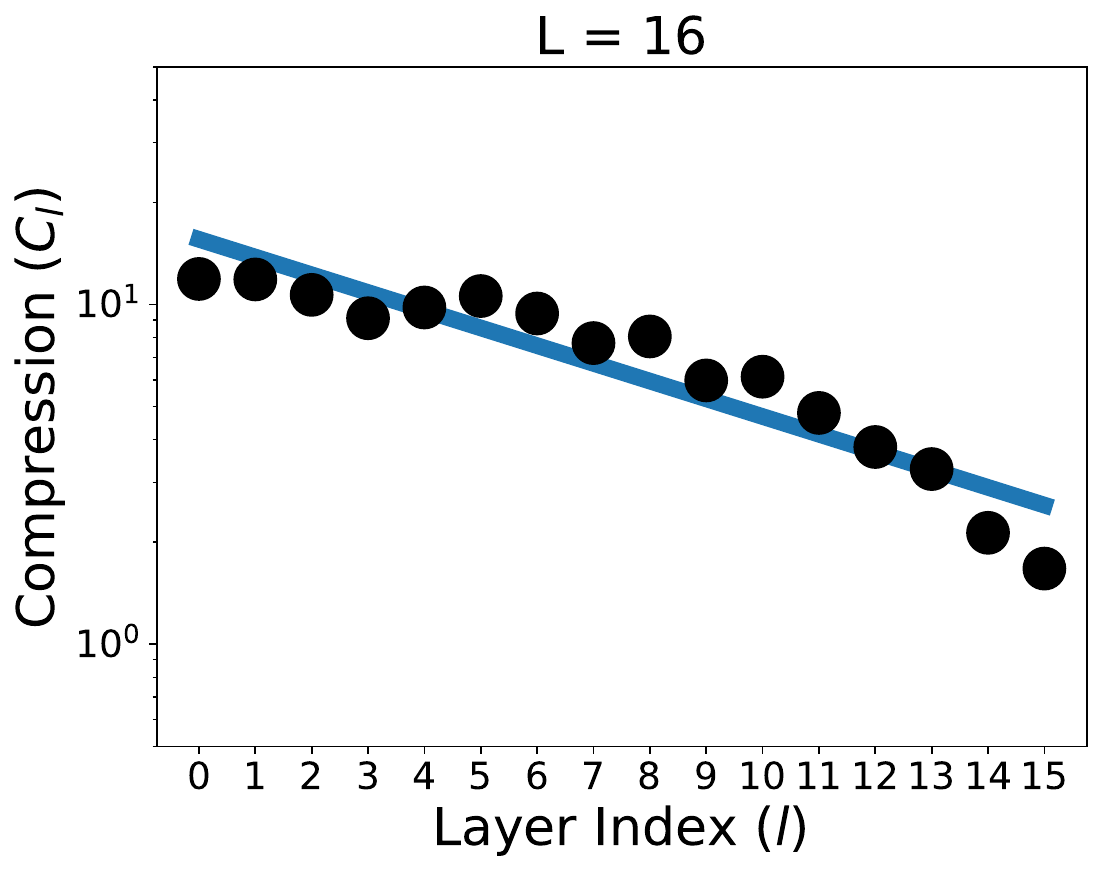}
    \includegraphics[width = 0.49\linewidth]{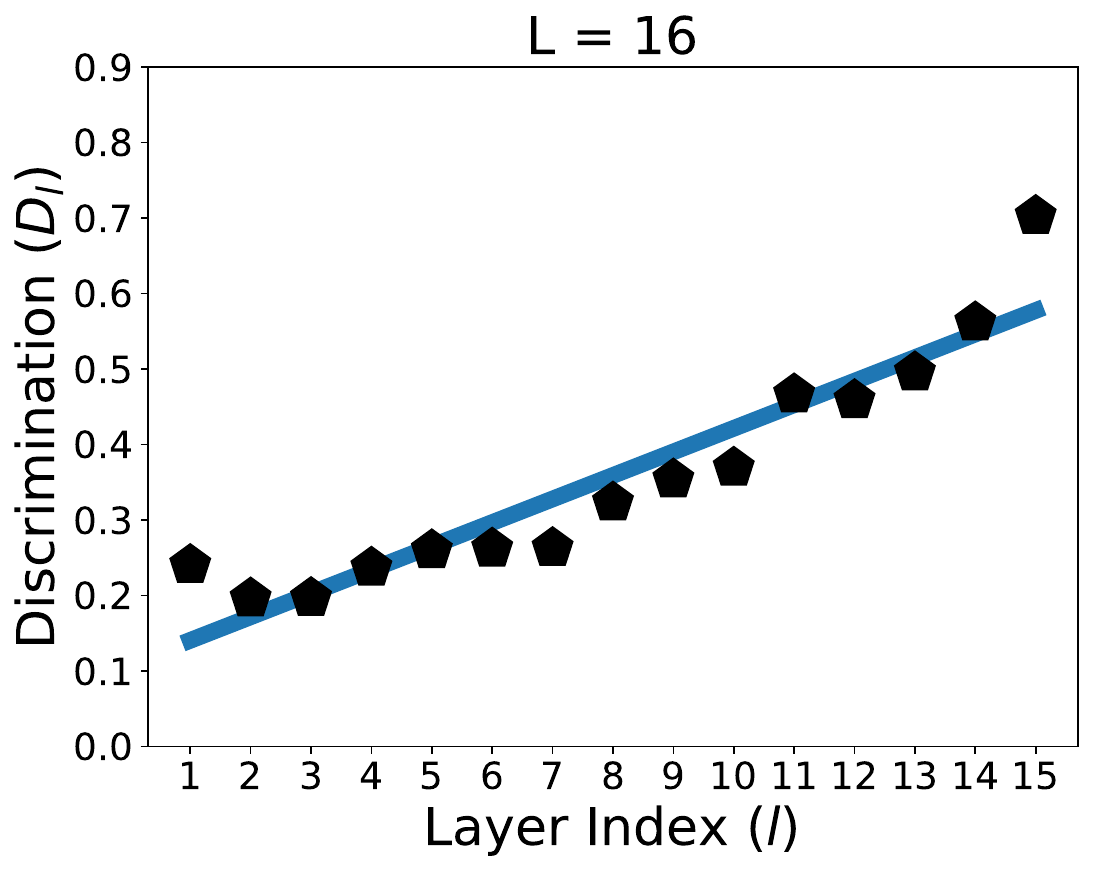}
    \caption{A 16-layer MLP.} 
    \end{subfigure} 

    \caption{\textbf{Progressive feature compression and discrimination on nonlinear networks.} We train MLP networks using the default orthogonal weight initialization and plot the metrics $C_l$ and $D_l$ against layer index. }
    \label{fig:test-nonlinear}
\end{figure*}
\begin{figure*}[t]
    \begin{subfigure}{0.49\textwidth}
    \includegraphics[width = 0.49\linewidth]{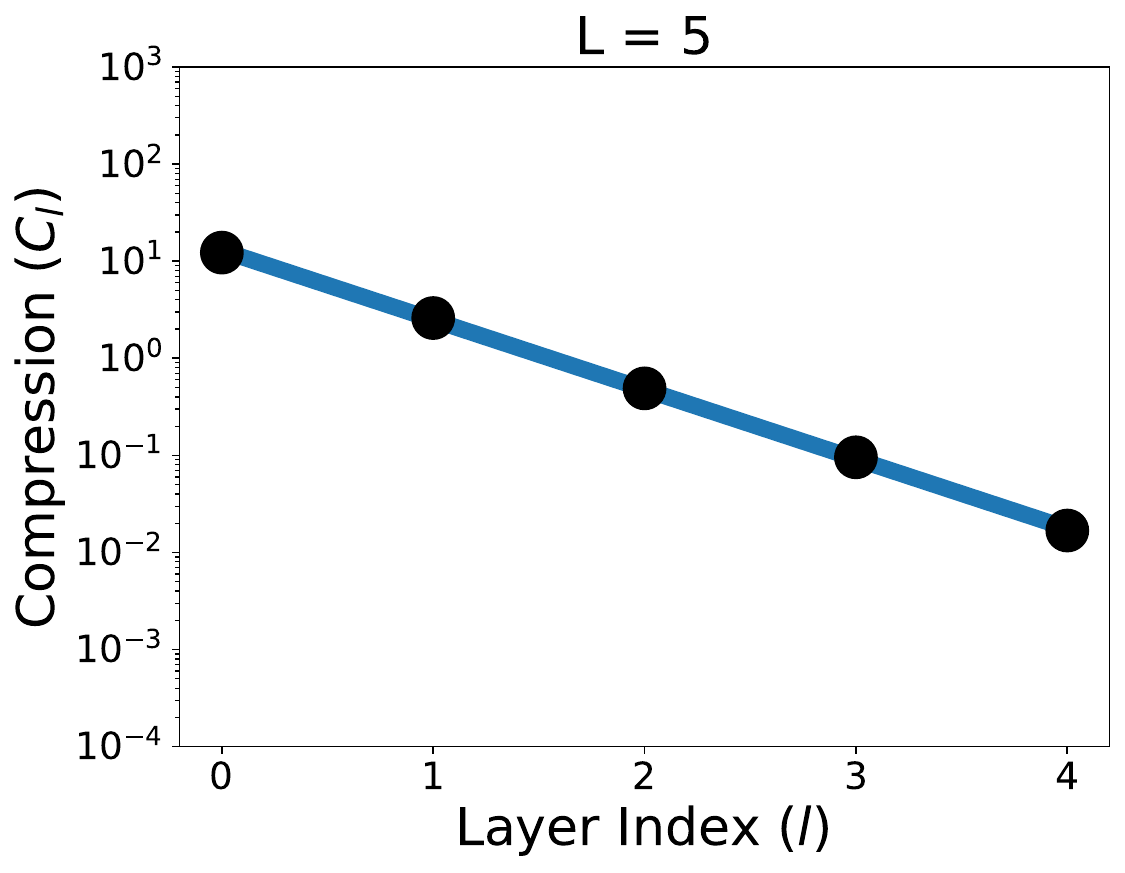}
    \includegraphics[width = 0.49\linewidth]{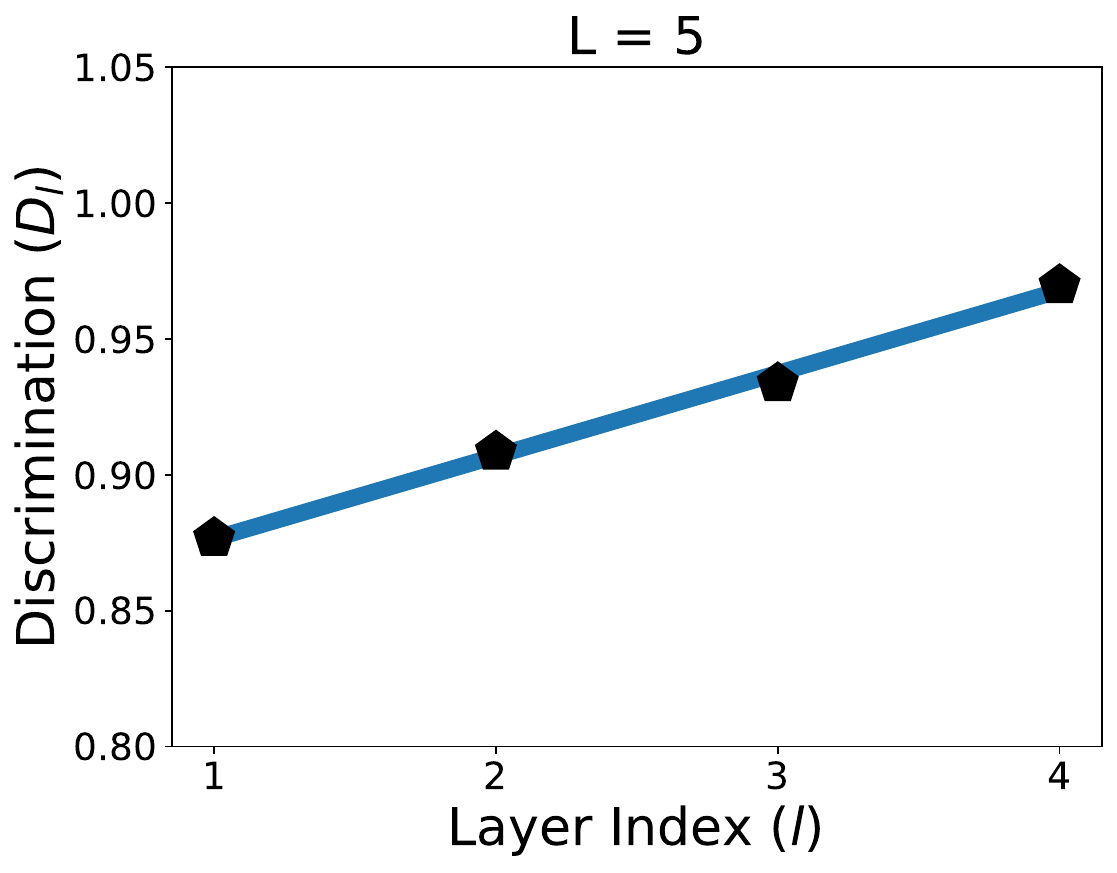} 
    \caption{A 5-layer DLN with Kaiming Init.} 
    \end{subfigure} 
    \hspace{1mm}
    \begin{subfigure}{0.49\textwidth}
    \includegraphics[width = 0.49\linewidth]{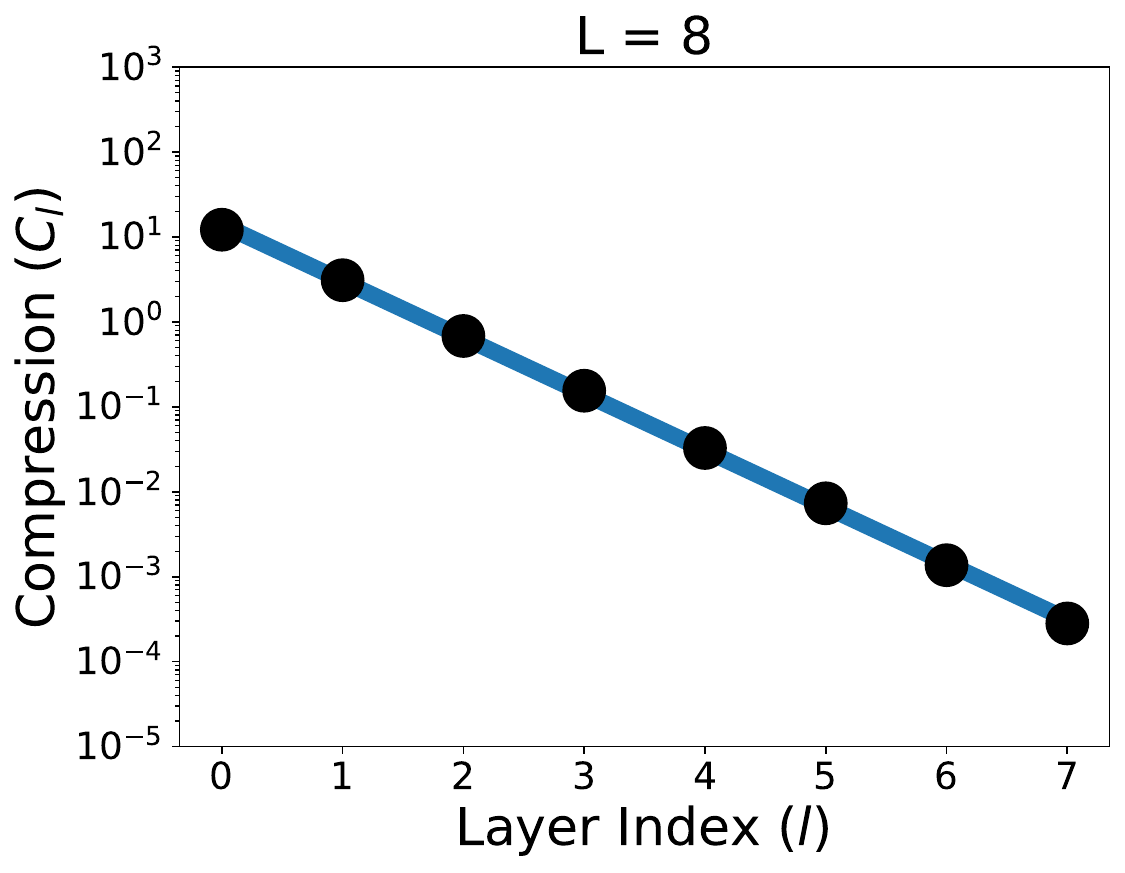}
    \includegraphics[width = 0.49\linewidth]{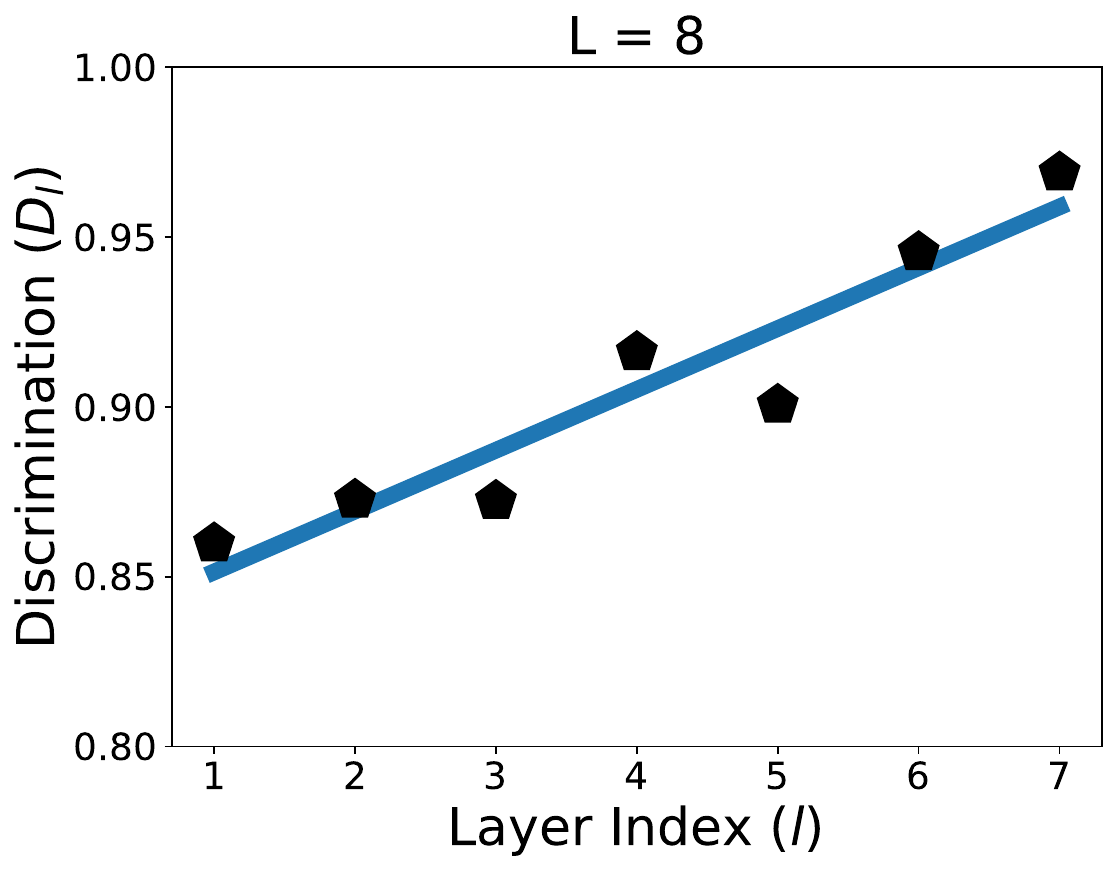}
    \caption{An 8-layer DLN with Kaiming Init.} 
    \end{subfigure} 
    \caption{\textbf{Progressive feature compression and discrimination on DLNs with default PyTorch initialization.} We train DLNs with uniform weight initialization and plot the dynamics of $C_l$ and $D_l$. We can observe progressive linear decay of $C_l$ still happens without the orthogonal initialization while the expanding pattern of $D_l$ disappears. }
    \label{fig:test-init}
\end{figure*}


\subsubsection{Implication on Transfer Learning}\label{subsec:TL}

In this subsection, we experimentally substantiate our claims on transfer learning in \Cref{sec:main} using practical nonlinear networks and real datasets. The results demonstrate that features before projection heads are less collapsed and exhibit better transferability. To support our argument, we conduct our experiments based on the following setup.   

\paragraph{Network architectures.} We employ a ResNet18 backbone architecture \citep{he2016deep}, incorporating $t \in \{1,2,3,4,5\}$ layers of projection heads between the feature extractor and the final classifier, respectively. Here, one layer of the projection head consists of a linear layer followed by a ReLU activation layer. These projection layers are only used in the pre-training phase. On the downstream tasks, they are discarded, and a new linear classifier is trained on the downstream dataset.
\paragraph{Training datasets and training methods.} We use the CIFAR-100 and CIFAR-10 dataset in the pre-training and fine-tuning tasks, respectively. We train the networks using the Rescaled-MSE loss \citep{hui2020evaluation}, with hyperparameters set to $k=5$ and $M=20$ for 200 epochs. During pre-training, we employed the SGD optimizer with a momentum of 0.9, a weight decay of $5 \times 10^{-4}$, and a dynamically adaptive learning rate ranging from $10^{-2}$ to $10^{-5}$, modulated by a CosineAnnealing learning rate scheduler \citep{loshchilov2017sgdr}. During the fine-tuning phase, we freeze all the parameters of the pre-trained model and only conduct linear probing. In other words, we only train a linear classifier on the downstream data for an additional 200 epochs. We run each experiment with 3 different random seeds.

\begin{figure*}[t] 
    \begin{subfigure}{0.50\textwidth}
    \centering
    \includegraphics[width = 0.90\linewidth]{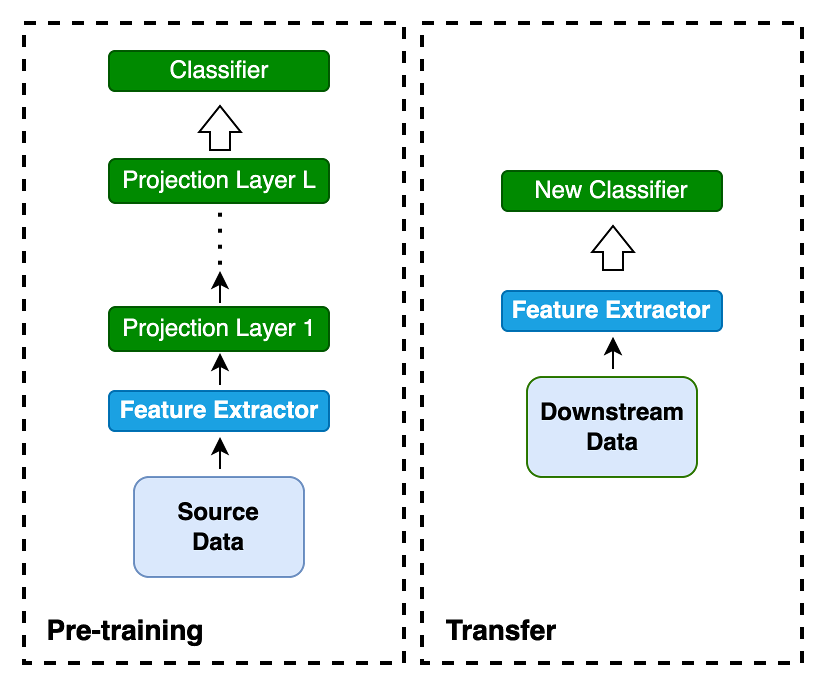}
    \caption{Illustration on the usage of projection heads} 
    \end{subfigure} 
    \begin{subfigure}{0.50\textwidth}
    \centering
    \includegraphics[width = 0.99\linewidth]{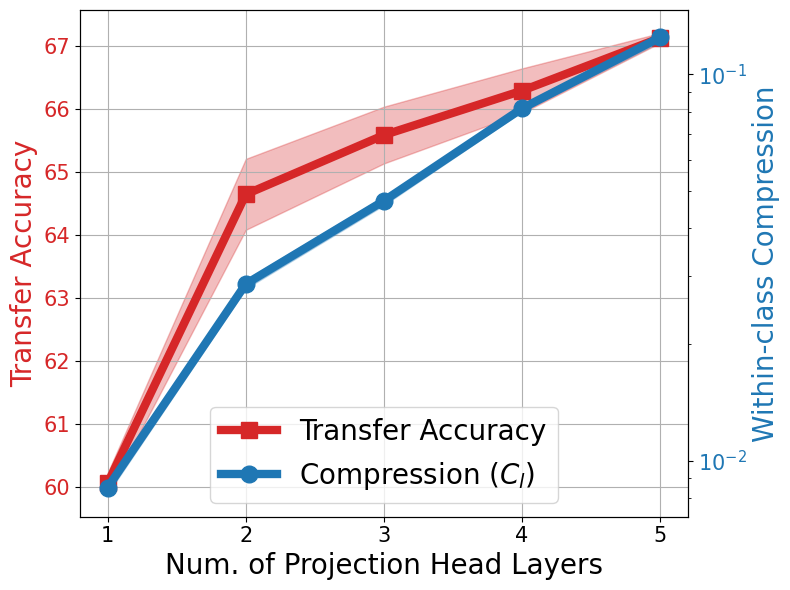}
    \caption{Transfer learning results} 
    \end{subfigure} 
    \caption{\textbf{Adding projection heads mitigates feature collapse and improves transfer accuracy.} We pre-train ResNet18 backbones with 1 to 5 layers of projection heads on the CIFAR-100 dataset. After pre-training, we drop the projection heads and compute the metric $C_l$ on the learned features. Finally, we linearly probe the pre-trained models on the CIFAR-10 dataset and plot the evolution of $C_l$ alongside the corresponding transfer learning performance against the number of projection head layers. The shaded area represents the standard deviation of $3$ random seeds. The plot shows a clear trend that adding more layers of projection heads mitigates feature collapse and improves transfer learning results.} 
    \label{fig:transfer}
\end{figure*} 

\paragraph{Experiments and observations.} \Cref{fig:transfer} illustrates the relationship between the number of layers in a projection head and two distinct metrics: (i) the compression metric $C_l$ of the learn features on the pre-trained dataset (depicted by the blue curve), and (ii) the transfer accuracy of pre-trained models on downstream tasks (depicted by the red curve). It is observed from \Cref{fig:transfer}(b) that an increase in the number of layers in the projection head leads to decreased feature compression and better transfer accuracy. 

These observations confirm our theoretical understanding that feature compression occurs progressively through the layers from shallow to deep and that the use of projection heads during the pre-training phase helps to prevent feature collapse at the feature extractor layer, thereby improving the model's transfer accuracy on new downstream data. Furthermore, adding more layers in the projection head tends to preserve diverse features of the pre-trained model, resulting in improved transfer learning performance.

\section{Conclusion}\label{sec:conclusion}

In this work, we studied hierarchical representations of deep networks by analyzing their intermediate features. In the context of training DLNs for solving multi-class classification problems, we examined how features evolve across layers through within-class feature compression and between-class feature discrimination. We showed that under mild assumptions on the input data and trained weights, each layer of a DLN progressively compresses within-class features at a geometric rate and discriminates between-class features at a linear rate w.r.t. the layer index. Moreover, we discussed the implications of our results for deep learning practices, including interpretation, architecture design, and transfer learning. Our extensive experimental results on synthetic and real data sets not only support our theoretical findings but also highlight their relevance to deep nonlinear networks. 

Our work opens several interesting directions for future work. First, our extensive experiments have demonstrated analogous patterns of progressive compression and discrimination phenomena in deep nonlinear networks. Extending our analysis to deep nonlinear networks emerges as a natural and promising direction for future research. Second, \cite{yu2023white,geshkovski2023emergence} demonstrated that the outputs of self-attention layers in transformers exhibit a similar progressive compression phenomenon. It would be interesting to study this phenomenon in transformers based on our proposed framework.  

\section*{Acknowledgment}
XL, CY, PW, and QQ acknowledge support from NSF CAREER CCF-2143904, NSF CCF-2212066, NSF CCF-2212326, NSF IIS 2312842, ONR N00014-22-1-2529, and an AWS AI Award. QQ also acknowledges the Google Research Scholar Award. PW and LB acknowledge support from DoE award DE-SC0022186, ARO YIP W911NF1910027, and NSF CAREER CCF-1845076. ZZ acknowledges support from NSF grants CCF-2240708,  IIS-2312840, and IIS-2402952. WH acknowledges support from the Google Research Scholar Program. Results presented in this paper were obtained using CloudBank, which is supported by the NSF under Award \#1925001. The authors acknowledge valuable discussions with Dr. Weijie Su (Upenn), Dr. John Wright (Columbia), Dr. Rene Vidal (Upenn), Dr. Hangfeng He (U. Rochester), Mr. Zekai Zhang (U. M), Dr. Yaodong Yu (UC Berkeley), and Dr. Yuexiang Zhai (UC Berkeley) at different stages of the work.


\newpage

\appendix

\section{Proofs of Main Results in \Cref{sec:main}}\label{sec:proof}

In this section, we provide formal proofs to show our main result in Theorem~\ref{thm:NC} concerning the behavior of within-class and between-class features. This involves establishing some key properties of the weight matrices of each layer under Assumptions \ref{AS:1} and \ref{AS:2}. The rest of this section is organized as follows: In \Cref{subsec:weight}, we establish some preliminary results for weight matrices under Assumptions \ref{AS:1} and \ref{AS:2}. In \Cref{subsec:compre}, we prove that within-class features are compressed across layers at a geometric rate. In \Cref{subsec:discr}, we prove that between-class features are discriminated across layers at a linear rate. Finally, we finish the proofs of Theorem~\ref{thm:NC} and Proposition~\ref{prop:AS} in \Cref{subsec:pf thm} and \ref{subsec:pf prop}, respectively.   

Before we proceed, let us introduce some further notation that will be used throughout this section. Using \eqref{eq:min norm}, $\bY = \bm{I}_K \otimes \bm{1}_n^T$, and Assumption \ref{AS:1}, we show that the rank of $\bW_l$ is at least $K$ for each $l \in [L]$; see Lemma \ref{lem:bound Wl}. For all $l \in [L-1]$, let
\begin{align}\label{eq:SVD Wl}
\bW_l = \bU_l\bm{\Sigma}_l\bV_l^T =  \begin{bmatrix}
\bU_{l,1} & \bU_{l,2} 
\end{bmatrix}\begin{bmatrix}
\bm{\Sigma}_{l,1} & \bm{0} \\
\bm{0} & \bm{\Sigma}_{l,2}
\end{bmatrix}\begin{bmatrix}
\bV_{l,1}^T \\ \bV_{l,2}^T
\end{bmatrix} = \bU_{l,1} \bm{\Sigma}_{l,1}\bV_{l,1}^T + \bU_{l,2}\bm{\Sigma}_{l,2}\bV_{l,2}^T, 
\end{align}   
be a singular value decomposition (SVD) of $\bW_l$, where $\bm{\Sigma}_l \in \R^{d\times d}$ is diagonal, $\bm{\Sigma}_{l,1} = \diag\left(\sigma_{l,1},\dots,\sigma_{l,K} \right)$ with $\sigma_{l,1} \ge \cdots \ge \sigma_{l,K} > 0 $ being the first $K$ leading singular values  of $\bW_l$, and $\bm{\Sigma}_{l,2} = 
\diag\left(\sigma_{l,K+1},\dots,\sigma_{l,d} \right) $ with $\sigma_{l,K+1} \ge \cdots \ge \sigma_{l,d} \ge 0 $ being the remaining singular values; $\bU_{l} \in \mO^{d }$ with $\bU_{l,1} \in \R^{d\times K}$, $\bU_{l,2} \in \R^{d \times (d-K)}$; $\bV_{l} \in \mO^{d}$ with $\bV_{l,1} \in \R^{d \times K}$, $\bV_{l,2} \in \R^{d \times (d-K)}$. Noting that $\bW_L \in \R^{K\times d}$, let
\begin{align}\label{eq:SVD WL}
\bW_L =  \bU_L\bm{\Sigma}_L\bV_L^T =  \bU_L \begin{bmatrix}
\bm{\Sigma}_{L,1} & \bm{0} 
\end{bmatrix}\begin{bmatrix}
\bV_{L,1}^T \\ \bV_{L,2}^T
\end{bmatrix} = \bU_{L}\bm{\Sigma}_{L,1}\bV_{L,1}^T,
\end{align} 
be an SVD of $\bm W_L$, where $\bm{\Sigma}_{L} \in \R^{K\times d}$, and $\bm{\Sigma}_{L,1} = \diag\left(\sigma_{L,1},\dots,\sigma_{L,K} \right)$ with $\sigma_{L,1} \ge \cdots \ge \sigma_{L,K} > 0 $ being the singular values; $\bU_{L} \in \mO^{K}$, and $\bV_{L} \in \mO^{d}$ with $\bV_{L,1} \in \R^{d\times K}$, $\bV_{L,2} \in \R^{d \times (d-K)}$.   
Moreover, we respectively denote the $k$-th class mean and global mean by 
\begin{align}\label{eq:xk}
\bar{\bm x}_k = \frac{1}{n}\sum_{i=1}^N \bm x_{k,i},\quad \bar{\bm x} = \frac{1}{K} \sum_{k=1}^K \bar{\bm x}_k,
\end{align}
and let
\begin{align}\label{eq:Xbar}
\bar{\bm X} = \left[\bar{\bm{x}}_1\ \bar{\bm{x}}_2\ \dots\ \bar{\bm{x}}_K\right] \otimes \bm{1}_n^T \in \R^{d\times N}. 
\end{align}
Note that the above notation will be used consistently throughout the rest of this paper.

\subsection{Properties of Weight Matrices}\label{subsec:weight}

In this subsection, we establish some properties of weight matrices $\{\bm W_l\}_{l=1}^L$ under Assumptions \ref{AS:1} and \ref{AS:2}. Towards this goal, we first prove some inequalities on the spectrum of the data matrix $\bm X$ and the distance between $\bm X$ and $\bar{\bm X}$ under Assumption \ref{AS:1}. These inequalities will serve as crucial components in the subsequent analysis of the spectral properties of the weight matrices. 

\begin{lemma}\label{lem:spec X}
Suppose that Assumption \ref{AS:1} holds. Then, we have 
\begin{align}
& \sqrt{1-\theta} \le \sigma_{\min}(\bm X) \le \sigma_{\max}(\bm X) \le \sqrt{1+\theta}, \label{eq:spec X}\\
& \|\bm X - \bar{\bm X}\| \le \sqrt{1+4\theta},\quad N - K - 4\theta \le \|\bm X - \bar{\bm X}\|_F^2 \le N - K + 4\theta.  \label{eq:X Xbar}
\end{align}
\end{lemma}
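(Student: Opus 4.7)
The three inequalities all reduce to elementary estimates on the Gram matrix $\bm X^T \bm X$, which by \Cref{AS:1} is an $N\times N$ matrix that differs from $\bm I_N$ entrywise by at most $\theta/N$ in absolute value. The plan is to first obtain a tight operator-norm bound on $\bm X^T\bm X - \bm I_N$, and then derive \eqref{eq:spec X}, \eqref{eq:X Xbar} from this bound together with the projection structure $\bar{\bm X} = \bm X \bm P$, where $\bm P := \tfrac{1}{n}\,\bm I_K \otimes \bm 1_n \bm 1_n^T \in \R^{N\times N}$.

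For \eqref{eq:spec X}, I would first note that by \Cref{AS:1} every entry of the symmetric matrix $\bm E := \bm X^T\bm X - \bm I_N$ is bounded in absolute value by $\theta/N$, so each row sum of $|\bm E|$ is at most $N \cdot \theta/N = \theta$. Applying Gershgorin's theorem (or equivalently the bound $\|\bm E\| \le \max_i \sum_j |E_{ij}|$ for symmetric $\bm E$) yields $\|\bm E\| \le \theta$, hence $1-\theta \le \lambda_{\min}(\bm X^T\bm X) \le \lambda_{\max}(\bm X^T\bm X) \le 1+\theta$. Since $d\ge N$, the nonzero singular values of $\bm X$ are the square roots of the eigenvalues of $\bm X^T\bm X$, giving \eqref{eq:spec X}.

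For the operator-norm half of \eqref{eq:X Xbar}, I would observe that $\bm P$ is an orthogonal projection (of rank $K$), so $\bm I_N - \bm P$ is also an orthogonal projection. Hence $\|\bm X - \bar{\bm X}\| = \|\bm X(\bm I_N - \bm P)\| \le \|\bm X\|\,\|\bm I_N - \bm P\| \le \sqrt{1+\theta} \le \sqrt{1+4\theta}$ by \eqref{eq:spec X}. For the Frobenius bounds, using $\bar{\bm X} = \bm X\bm P$ and $\bm P^2 = \bm P$, expanding gives
\begin{equation*}
\|\bm X - \bar{\bm X}\|_F^2 \;=\; \trace(\bm X^T\bm X) \;-\; \trace(\bm P\,\bm X^T\bm X).
\end{equation*}
The first trace is $\sum_i \|\bm x_i\|^2$, which lies in $[N-\theta, N+\theta]$ by \Cref{AS:1}. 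The second trace expands, via the block structure of $\bm P$, as $\tfrac{1}{n}\sum_{k=1}^K \sum_{i,j \in \text{class } k} \langle \bm x_{k,i}, \bm x_{k,j}\rangle$; separating the $n$ diagonal terms (each in $[1-\theta/N, 1+\theta/N]$) from the $n(n-1)$ off-diagonal terms (each bounded by $\theta/N$) and summing over the $K$ classes yields $\trace(\bm P\,\bm X^T\bm X) \in [K-\theta, K+\theta]$, using $nK = N$. Combining the two estimates gives \eqref{eq:X Xbar} (in fact with constant $2\theta$, which is within the stated slack of $4\theta$).

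\textbf{Main obstacle.} There is no conceptual difficulty; the only care needed is the bookkeeping in the last step, specifically keeping track of the number of diagonal versus off-diagonal pairs per class when bounding $\trace(\bm P\,\bm X^T\bm X)$ and making sure the cancellations produce a bound independent of $n$ through the identity $nK=N$.
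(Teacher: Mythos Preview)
Your proposal is correct. The core idea is the same as the paper's---everything is driven by the entrywise bound $|(\bm X^T\bm X - \bm I_N)_{ij}| \le \theta/N$---but your organization is cleaner in two places. For \eqref{eq:spec X} the paper bounds $\|\bm X^T\bm X - \bm I_N\|$ via the Frobenius norm, while you use Gershgorin/row sums; both give $\theta$. For \eqref{eq:X Xbar} the paper computes the entries of $(\bm X-\bar{\bm X})^T(\bm X-\bar{\bm X})$ directly and, for the Frobenius part, expands $\|\bm X-\bar{\bm X}\|_F^2 = \|\bm X\|_F^2 - 2\langle \bm X,\bar{\bm X}\rangle + \|\bar{\bm X}\|_F^2$ into three terms, each contributing a $\theta$-sized error (hence the constant $4\theta$). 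Your use of the projection identity $\bar{\bm X}=\bm X\bm P$ with $\bm P^2=\bm P$ collapses this to the two-term expression $\trace(\bm X^T\bm X) - \trace(\bm P\,\bm X^T\bm X)$, yielding the sharper constant $2\theta$ and a shorter argument; it also gives $\|\bm X-\bar{\bm X}\|\le \sqrt{1+\theta}$ immediately from $\|\bm I-\bm P\|=1$, which is again sharper than the paper's $\sqrt{1+4\theta}$. So your route is a mild but genuine simplification that buys tighter constants with less bookkeeping.
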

\begin{proof}
Using $\|\bm A\| \le \|\bm A\|_F$ for all $\bm A \in \R^{N\times N}$, we have
\begin{align}\label{eq1:lem spec X}
\|\bm X^T\bm X - \bm I_N\|^2 \le \|\bm X^T\bm X - \bm I_N\|_F^2  \le  \sum_{i=1}^N\left(\|\bm x_i\|^2 - 1 \right)^2 + \sum_{i=1}^N \sum_{j \neq i}^N \left(\bm x_i^T\bm x_j\right)^2  \le \theta^2,
\end{align}
where the last inequality follows from \eqref{eq:orth}. This, together with Weyl's inequality, implies $1-\theta \le \sigma_{\min}(\bm X^T\bm X) \le \sigma_{\max}(\bm X^T\bm X) \le 1 + \theta$.  Therefore, we obtain \eqref{eq:spec X}.  

Now, we prove \eqref{eq:X Xbar}. For ease of exposition, let $\bm A:= (\bm X - \bar{\bm X})^T(\bm X - \bar{\bm X}) = \bm X^T\bm X - \bm X^T\bar{\bm X} - \bar{\bm X}^T \bm X + \bar{\bm X}^T \bar{\bm X}$. First, we consider  the elements of diagonal blocks of $\bm A$. For each $i=(k-1)n+j$ and $j \in [n]$, we compute
\begin{align*}
 a_{ii}  & =  \left\|\bm x_{k,j} - \bar{\bm x}_k \right\|^2  = \left(1-\frac{1}{n}\right)^2 \|\bm x_{k,j}\|^2  + \frac{1}{n^2}\sum_{j^\prime \neq j}\|\bm x_{k,j^\prime}\|^2 - \\
 &\quad \frac{2}{n} \sum_{j \neq j^\prime} \langle \bm x_{k,j}, \bm x_{k,j^\prime} \rangle + \frac{1}{n^2} \sum_{j=1}^n\sum_{j^\prime \neq j} \langle \bm x_{k,j}, \bm x_{k,j^\prime} \rangle.  
\end{align*}
This, together with \eqref{eq:orth} in Assumption \ref{AS:1}, yields
\begin{align}\label{eq2:lem spec X}
1 - \frac{1}{n} - \frac{4\theta}{N} \le a_{ii} \le 1 - \frac{1}{n} + \frac{4\theta}{N}.
\end{align}
Using the similar argument, we compute for each $i=(k-1)n+j$, $i^\prime=(k-1)n+j^\prime$, and $j \neq j^\prime \in [n]$,
\begin{align}\label{eq3:lem spec X}
-\frac{1}{n}- \frac{4\theta}{N} \le a_{i,i^\prime}  \le -\frac{1}{n} + \frac{4\theta}{N}.  
\end{align}
Now, we consider the elements of off-diagonal blocks of $\bm A$. Similarly, we compute for each $i=(k-1)n+j$, $i^\prime=(k^\prime-1)n+j^\prime$, $k \neq k^\prime$, and $j,j^\prime \in [n]$, 
\begin{align}\label{eq4:lem spec X}
-\frac{4\theta}{N} \le a_{i,i^\prime} \le \frac{4\theta}{N}. 
\end{align}
This, together with \eqref{eq2:lem spec X} and \eqref{eq3:lem spec X}, yields $\bm A = \bm I_N - \frac{1}{n} \bm I_K \otimes \bm E_n + \bm \Delta$, where $|\delta_{ij}| \le {4\theta}/{N}$. Therefore, we have
\begin{align*}
\|\bm A\| \le \left\| \bm I_N - \frac{1}{n} I_K \otimes \bm E_n \right\| + \|\bm \Delta\|_F \le 1 + 4\theta,
\end{align*}
which implies the first inequality in \eqref{eq:X Xbar}. Moreover, we have 
\begin{align}\label{eq0:lem norm X}
 \|\bm X - \bar{\bm X}\|_F^2 = \|\bm X\|_F^2 - 2\langle \bm X, \bar{\bm X} \rangle + \|\bar{\bm X}\|_F^2. 
\end{align}
Then, we bound the above terms in turn. First, noting that $\|\bm X\|_F^2 = \sum_{i=1}^N \|\bm x_i\|^2$ and using \eqref{eq:orth} in Assumption \ref{AS:1} yield 
\begin{align}\label{eq2:lem norm X}
N - \theta \le \|\bm X\|_F^2  \le N + \theta.  
\end{align}
Second, it follows from \eqref{eq:xk} that 
\begin{align*}
\langle \bm X, \bar{\bm X} \rangle = \sum_{k=1}^K \sum_{i=1}^n  \langle \bm x_{k,i}, \bar{\bm x}_k \rangle = \sum_{k=1}^K \sum_{i=1}^n \frac{1}{n} \sum_{j=1}^n  \langle \bm x_{k,i}, \bm x_{k,j} \rangle = \sum_{k=1}^K \sum_{i=1}^n \frac{1}{n}   \left( \|\bm x_{k,i}\|^2 + \sum_{j\neq i}^n \langle \bm x_{k,i}, \bm x_{k,j} \rangle \right). 
\end{align*}
This, together with Assumption \ref{AS:1}, yields
\begin{align}\label{eq3:lem norm X}
K - \theta \le \langle \bm X, \bar{\bm X} \rangle  \le K + \theta. 
\end{align}
Finally, we compute 
\begin{align*}
\|\bar{\bm X}\|_F^2 = n \sum_{k=1}^K \|\bar{\bm x}_k\|^2 = \frac{1}{n}\sum_{k=1}^K \left\| \sum_{i=1}^n \bm x_{k,i} \right\|^2 = \frac{1}{n}\sum_{k=1}^K \left(\sum_{i=1}^n \|\bm x_{k,i}\|^2 +  \sum_{i \neq j}\langle \bm x_{k,i}, \bm x_{k,j} \rangle \right) 
\end{align*}
This, together with Assumption \ref{AS:1}, yields $K - \theta \le \|\bar{\bm X}\|_F^2 \le K + \theta$. Substituting this, \eqref{eq2:lem norm X}, and \eqref{eq3:lem norm X} into \eqref{eq0:lem norm X} yields the second inequality in \eqref{eq:X Xbar}. 
\end{proof} 

Next, we present a lemma that shows that each singular value of the first $L-1$ weight matrices is exactly the same, and each singular value of the last weight matrix is approximately equal to that of the first $L-1$ weight matrices under the balancedness \eqref{eq:bala} in Assumption \ref{AS:2}.   

\begin{lemma}\label{lem:singu bala}
Suppose that the weight matrices $\{\bW_l\}_{l=1}^L$ satisfy \eqref{eq:bala} and admit the SVD in \eqref{eq:SVD Wl} and \eqref{eq:SVD WL}. Then, it holds that 
\begin{align}\label{eq:Sigma l}
\bm \Sigma_{l+1} = \bm \Sigma_l,\ \forall l \in [L-2],\ \|\bm{\Sigma}_{L}^T\bm{\Sigma}_{L}  - \bm{\Sigma}_{l}^2\|_F \le \delta,\ \forall l \in [L-1]. 
\end{align} 
\end{lemma}
\begin{proof}
 Since $\bm W_{l+1}^T \bm W_{l+1} = \bm W_{l} \bm W_{l}^T$ for all $ l \in [L-2]$ and \eqref{eq:SVD Wl}, we have
    \begin{align}\label{eq:lem singu bala}
        \bm V_{l+1} \bm \Sigma_{l+1}^2 \bm V_{l+1}^T = \bm U_{l} \bm \Sigma_{l}^2 \bm U_{l}^T.
    \end{align}
For a given $l \in [L-2]$, the two sides of the above equation are essentially eigenvalue decomposition of the same matrix. This, together with the fact that $\bm \Sigma_{l+1}$ and $\bm \Sigma_{l}$ have non-increasing and non-negative diagonal elements by \eqref{eq:SVD Wl}, yields $\bm \Sigma_{l+1} = \bm \Sigma_{l}$. Using $\|\bm W_{L}^T \bm W_{L} - \bm W_{L-1}\bm W_{L-1}^T \|_F \le \delta$, \eqref{eq:SVD Wl}, and \eqref{eq:SVD WL}, we have  
\begin{align}\label{eq:lem UV}
        \|\bm V_{L} \bm \Sigma_{L}^T \bm \Sigma_{L} \bm V_{L}^T -  \bm U_{L-1} \bm \Sigma_{L-1}^2 \bm U_{L-1}^T \|_F \le \delta. 
    \end{align}
This, together with  \cite[Lemma 4]{arora2018convergence}, implies $\|\bm{\Sigma}_{L}^T\bm{\Sigma}_{L}  - \bm{\Sigma}_{L-1}^2\|_F \le \delta$. 
According to this and $\bm \Sigma_{l+1} = \bm \Sigma_{l}$ for all $l \in [L-2]$, implies \eqref{eq:Sigma l} for all $l \in [L-1]$.
\end{proof}

Based on the above two lemmas, we are ready to estimate the top $K$ singular values of each weight matrix $\bm W_l$ for all $l \in [L]$ under Assumptions \ref{AS:1} and \ref{AS:2}.  

\begin{lemma}\label{lem:bound Wl}
Suppose that Assumption \ref{AS:1} holds, and the weight matrices $\{\bW_l\}_{l=1}^L$ satisfy \eqref{eq:min norm} and \eqref{eq:bala} with 
\begin{align}\label{eq:delta}
\delta \le  \frac{(2n)^{1/L}}{30L^2}. 
\end{align}  
Then, the following statements hold: \\
(i) It holds that
\begin{align}\label{eq:bound WL1}
\frac{\sqrt{n}}{\sqrt{1+\theta}} \le \sigma_{K}(\bW_{L:1}) \le \sigma_{1}(\bW_{L:1}) \le \frac{\sqrt{n}}{\sqrt{1-\theta}},
\end{align}
(ii) It holds that 
\begin{align}\label{eq:bound Wl}
\left( \sqrt{\frac{n}{2}}\right)^{1/L} \le \sigma_K\left(\bW_l\right)  \le \sigma_1\left(\bW_l\right) \le \left( 2\sqrt{n} \right)^{1/L},\ \forall l \in [L],
\end{align}
and 
\begin{align}
& \left( \left( \frac{1}{1+\theta} - 4\delta\right) n \right)^{1/2L} \le \sigma_K(\bW_l) \le \sigma_1(\bW_l) \le \left( \left( \frac{1}{1-\theta} + 4\delta\right) n \right)^{1/2L},\ \forall l \in [L-1], \label{eq:bound Wl1}\\
&  \left( \left( \frac{1}{1+\theta} - 4\delta\right) n \right)^{1/2L} - \sqrt{\delta}  \le \sigma_K(\bW_L) \le \sigma_1(\bW_L) \le \left( \left( \frac{1}{1-\theta} + 4\delta\right) n \right)^{1/2L} + \sqrt{\delta}. \label{eq:bound Wl2}
\end{align}
\end{lemma}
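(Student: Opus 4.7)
The min-norm condition \eqref{eq:min norm} together with the invertibility of $\bm X^\top\bm X$ (guaranteed by $\sigma_{\min}(\bm X) \ge \sqrt{1-\theta}$ from \Cref{lem:spec X}) gives
\begin{align*}
\bW_{L:1}\bW_{L:1}^\top = \bY(\bm X^\top\bm X)^{-1}\bY^\top.
\end{align*}
Since $\bY = \bI_K \otimes \bm 1_n^\top$ satisfies $\bY\bY^\top = n\bI_K$ (so every singular value of $\bY$ equals $\sqrt{n}$) and the eigenvalues of $(\bm X^\top\bm X)^{-1}$ lie in $[1/(1+\theta),\,1/(1-\theta)]$ by \Cref{lem:spec X}, a direct sandwich argument yields $n/(1+\theta) \le \lambda_i(\bW_{L:1}\bW_{L:1}^\top) \le n/(1-\theta)$ for all $i \in [K]$, which is \eqref{eq:bound WL1}.

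\textbf{Plan for (ii), key power identity.} The central algebraic step is a clean ``power identity'' under exact balancedness: I would prove by induction on $l \in [L-1]$ that
\begin{align*}
\bW_{l:1}\bW_{l:1}^\top = (\bW_l\bW_l^\top)^l,
\end{align*}
the inductive step being $\bW_{l:1}\bW_{l:1}^\top = \bW_l(\bW_{l-1}\bW_{l-1}^\top)^{l-1}\bW_l^\top = \bW_l(\bW_l^\top\bW_l)^{l-1}\bW_l^\top = (\bW_l\bW_l^\top)^l$, using $\bW_l^\top\bW_l = \bW_{l-1}\bW_{l-1}^\top$ from \eqref{eq:bala}. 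Taking $l=L-1$ gives $\bW_{L:1}\bW_{L:1}^\top = \bW_L(\bW_{L-1}\bW_{L-1}^\top)^{L-1}\bW_L^\top$. Setting $\bm C := \bW_L^\top\bW_L$ and $\bm E := \bm C - \bW_{L-1}\bW_{L-1}^\top$ (so that $\|\bm E\| \le \|\bm E\|_F \le \delta$), and noting the exact identity $\bW_L\bm C^{L-1}\bW_L^\top = (\bW_L\bW_L^\top)^L$, a multinomial expansion of $(\bm C - \bm E)^{L-1}$ in the spectral norm gives
\begin{align*}
\bigl\|\bW_{L:1}\bW_{L:1}^\top - (\bW_L\bW_L^\top)^L\bigr\| \le \sigma_1(\bW_L)^2\bigl[(\sigma_1(\bW_L)^2 + \delta)^{L-1} - \sigma_1(\bW_L)^{2(L-1)}\bigr].
\end{align*}
Weyl's inequality then pins $\sigma_i(\bW_L)^{2L}$ to a neighborhood of $\sigma_i(\bW_{L:1})^2 \in [n/(1+\theta), n/(1-\theta)]$ for $i \in [K]$ by part (i).

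\textbf{Plan for (ii), transfer and main difficulty.} \Cref{lem:singu bala} already gives $\sigma_i(\bW_l) =: s_i$ independent of $l$ for $l \in [L-1]$, together with the diagonal consequence $|\sigma_i(\bW_L)^2 - s_i^2| \le \delta$ for $i \in [K]$. Hence the bound on $\sigma_i(\bW_L)^{2L}$ from the previous paragraph translates directly into \eqref{eq:bound Wl1} for the interior layers, while the last-layer refinement \eqref{eq:bound Wl2} follows from the elementary inequalities $\sqrt{s_i^2 \pm \delta} \lessgtr s_i \pm \sqrt{\delta}$ (valid once $s_i^2 \ge \delta$, which the interior bound ensures under \eqref{eq:delta}). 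The coarser unified bound \eqref{eq:bound Wl} is then obtained by inserting $\theta \in [0,1/4)$ and $\delta \le (2n)^{1/L}/(30L^2)$ and simplifying the constants inside the $2L$-th roots into $(\sqrt{n/2})^{1/L}$ and $(2\sqrt{n})^{1/L}$. The main obstacle is the perturbation bookkeeping at the last layer: \Cref{lem:Bern} (or a direct binomial estimate) must be used to show $(\sigma_1(\bW_L)^2 + \delta)^{L-1} - \sigma_1(\bW_L)^{2(L-1)}$ is of order $(L-1)\delta\,\sigma_1(\bW_L)^{2(L-2)}$, and the $L^2$ factor in the hypothesis $\delta \le (2n)^{1/L}/(30L^2)$ is calibrated exactly so that, after taking the $2L$-th root, the net distortion of $1/(1\pm\theta)$ is only the additive $4\delta$ claimed in \eqref{eq:bound Wl1}--\eqref{eq:bound Wl2}, rather than a quantity that blows up with $L$.
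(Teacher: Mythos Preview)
Your plan for part (i) is correct and essentially matches the paper: both arguments compute $\bW_{L:1}\bW_{L:1}^\top=\bY(\bX^\top\bX)^{-1}\bY^\top$ (equivalently, bound the singular values of $\bY\bX^\dagger$) and sandwich using \Cref{lem:spec X}. Your power identity $\bW_{l:1}\bW_{l:1}^\top=(\bW_l\bW_l^\top)^l$ under exact balancedness is also exactly the right engine for part (ii); the paper uses the transposed version $\bW_{l:1}^\top\bW_{l:1}=(\bW_1^\top\bW_1)^l$.

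There are, however, two genuine gaps in your plan for (ii). First, the perturbation bound you write down, $\sigma_1(\bW_L)^2\bigl[(\sigma_1(\bW_L)^2+\delta)^{L-1}-\sigma_1(\bW_L)^{2(L-1)}\bigr]$, depends on $\sigma_1(\bW_L)$ itself, so using it together with Weyl to ``pin $\sigma_i(\bW_L)^{2L}$'' is circular. The paper breaks this circularity by first importing the a~priori upper bound $\sigma_1(\bW_l)\le(2\sqrt{n})^{1/L}$ for all $l$ from \cite[Lemma~6]{arora2018convergence} (which uses $\delta$-balancedness together with \eqref{eq:bound WL1} and the hypothesis \eqref{eq:delta}); only \emph{then} does it plug this into the perturbation estimate. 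Your proposal reverses the order (derive \eqref{eq:bound Wl1}--\eqref{eq:bound Wl2} first, then \eqref{eq:bound Wl} ``by simplifying constants''), but under the lemma's sole hypothesis $\delta\le(2n)^{1/L}/(30L^2)$ there is no absolute bound on $\delta$, so e.g.\ $1/(1-\theta)+4\delta\le4$ need not hold, and the deduction of \eqref{eq:bound Wl} from \eqref{eq:bound Wl1} fails.

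Second, your choice to anchor at $\bW_L$ (the only layer where balancedness is merely $\delta$-approximate) forces you to expand $(\bm C-\bm E)^{L-1}$ and sum $L-1$ perturbation terms, yielding a bound of order $(L-1)\delta\cdot\sigma_1^{2(L-1)}$. The paper instead anchors at $\bW_1$: since layers $1,\dots,L-1$ are \emph{exactly} balanced, one has $(\bW_1^\top\bW_1)^L=\bW_{L-1:1}^\top\bW_{L-1}\bW_{L-1}^\top\bW_{L-1:1}$ identically, and therefore
\[
\bW_{L:1}^\top\bW_{L:1}-(\bW_1^\top\bW_1)^L=\bW_{L-1:1}^\top\bigl(\bW_L^\top\bW_L-\bW_{L-1}\bW_{L-1}^\top\bigr)\bW_{L-1:1},
\]
a \emph{single} error term of norm $\le\delta\,\sigma_1(\bW_1)^{2(L-1)}\le 4\delta n$. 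This is what produces the exact constant $4\delta$ in \eqref{eq:bound Wl1}; your route would only give $O(L\delta)$ there, which does not match the statement for general $L$. Re-anchor the comparison at $\bW_1$ (or any interior layer) and obtain the a~priori upper bound first, and the rest of your outline goes through.
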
 
\begin{proof}
According to \eqref{eq:spec X} in Lemma \ref{lem:spec X}, we obtain that $\bm X$ is of full column rank, and thus $\bm X^+ := (\bm X^T \bm X)^{-1}\bm X^T$ is of full row rank with $1/\sqrt{1+\theta} \le \sigma_{\min}(\bm X^+) \le \sigma_{\max}(\bm X^+) \le 1/\sqrt{1-\theta}$. This, together with $\bm Y = \bm I_K \otimes \bm{1}_n^T$, \eqref{eq:min norm}, and Lemma \ref{lem:mini singular}, yields  
\begin{align}\label{eq1:lem bound W1}
\frac{\sqrt{n}}{\sqrt{1+\theta}} \le \sigma_{K}(\bW_{L:1}) \le \sigma_{1}(\bW_{L:1}) \le \frac{\sqrt{n}}{\sqrt{1-\theta}},
\end{align}
which, together with $\theta \in [0,1/4]$, implies
\begin{align}
\frac{2\sqrt{n}}{\sqrt{5}} \le \sigma_{K}(\bW_{L:1}) \le \sigma_{1}(\bW_{L:1}) \le \frac{2\sqrt{n}}{\sqrt{3}}. 
\end{align} 
Using this, \eqref{eq:bala} with \eqref{eq:delta}, and invoking \cite[Lemma 6]{arora2018convergence}, we obtain $\sigma_1(\bm W_l) \le \left(2\sqrt{n}\right)^{1/L}$ for all $l \in [L]$. Using  $\bW_{l+1}^T\bW_{l+1} = \bW_l\bW_l^T$ for all $l \in [L-2]$ by \eqref{eq:bala}, we compute 
\begin{align}\label{eq3:lem bound W1}
    \|\bm W_{L:1}^T \bm W_{L:1}  - \left(\bm W_1^T \bm W_1\right)^L \|_F & = \|\bm W_{L:1}^T\bm W_{L:1}  - \bm W_{L-1:1}^T \bW_{L-1}\bW_{L-1}^T \bW_{L-1:1} \|_F \notag\\
    & = \|\bm W_{L-1:1}^T \left(\bW_L^T\bW_L -  \bW_{L-1}\bW_{L-1}^T\right) \bW_{L-1:1} \|_F \notag\\
    & \le \|\bW_L^T\bW_L -  \bW_{L-1}\bW_{L-1}^T\|_F \prod_{l=1}^{L-1} \|\bm W_l\|^2 \notag\\
    & \le \delta \left( 4n \right)^{(L-1)/L} \le \frac{2n}{15L^2},
\end{align}
where the second inequality uses \eqref{eq:bala}, \eqref{eq:delta}, and \eqref{eq:bound Wl} for all $l \in [L-1]$, and the last inequality follows from \eqref{eq:delta}. Using this, \eqref{eq1:lem bound W1}, and Weyl's inequality gives 
\begin{align*}
    \sigma_K\left(\left(\bm W_1^T \bm W_1\right)^L\right) & \ge \sigma_K\left(\bm W_{L:1}^T\bm W_{L:1}\right) - \|\bm W_{L:1}^T\bm W_{L:1}  - \left(\bm W_1^T \bm W_1\right)^L \|\\ 
 & \ge \frac{n}{1+\theta} - \|\bm W_{L:1}^T\bm W_{L:1}  - \left(\bm W_1^T \bm W_1\right)^L \|_F  \ge \left( \frac{1}{1+\theta} - \frac{2}{15L^2}\right) n, 
\end{align*} 
which implies $\sigma_K(\bW_1) \ge \left( \left(\frac{1}{1+\theta} - \frac{2}{15L^2}\right)n \right)^{1/2L}$. This, together with  Lemma \ref{lem:singu bala}, implies 
\begin{align}\label{eq2:lem bound W1}
\sigma_K(\bW_l) \ge \left( \left(\frac{1}{1+\theta} - \frac{2}{15L^2}\right)n \right)^{1/2L},\ \forall l \in [L-1].
\end{align}
 Using Weyl's inequality again and \eqref{eq:bala}, we have 
\begin{align*}
   \sigma_K\left( \bW_{L}^T\bW_{L} \right) & \ge  \sigma_K\left( \bW_{L-1}^T\bW_{L-1} \right) -\|\bW_{L}^T\bW_{L} - \bW_{L-1}\bW_{L-1}^T\| \\
   & \ge \left( \frac{1}{1+\theta} - \frac{2}{15L^2}  \right)^{1/L}n^{1/L} - \delta \ge \left( \frac{n}{2} \right)^{1/L},   
\end{align*}
where the last inequality follows from $\delta \le (2n)^{1/L}/(30L^2)$ and
\begin{align*}
\left( \frac{1}{1+\theta} - \frac{2}{15L^2}  \right)^{1/L} \ge \left(\frac{3}{4}\right)^{1/L}  \ge  \frac{1}{15L^2} + \left(\frac{1}{2}\right)^{1/L},\ \forall L \ge 1. 
\end{align*}
Here the first inequality uses $L \ge 2$, and the second inequality is due to $f(x) = (3/4)^x - (1/2)^x - x^2/15$ is increasing for $x \in (0,1]$ and $f(0) = 0$. Therefore, we have $\sigma_K(\bW_L) \ge \left( \frac{n}{2} \right)^{1/2L}$. This, together with \eqref{eq2:lem bound W1} and $\sigma_1(\bm W_l) \le \left(2\sqrt{n}\right)^{1/L}$ for all $l \in [L]$, yields \eqref{eq:bound Wl}.   

Next, we are devoted to proving \eqref{eq:bound Wl1} and \eqref{eq:bound Wl2}. It follows from \eqref{eq3:lem bound W1} that 
\begin{align}
\|\bm W_{L:1}^T \bm W_{L:1}  - \left(\bm W_1^T \bm W_1\right)^L \|_F \le 4 n\delta. 
\end{align}
This, together with Weyl's inequality and \eqref{eq1:lem bound W1}, yields 
\begin{align*}
\sigma_1\left(\left(\bm W_1^T \bm W_1\right)^L\right) & \le \sigma_1\left(\bm W_{L:1}^T\bm W_{L:1}\right) + \|\bm W_{L:1}^T\bm W_{L:1}  - \left(\bm W_1^T \bm W_1\right)^L \| \le \left( \frac{1}{1-\theta} + 4\delta\right) n, \\
    \sigma_K\left(\left(\bm W_1^T \bm W_1\right)^L\right) & \ge \sigma_K\left(\bm W_{L:1}^T\bm W_{L:1}\right) - \|\bm W_{L:1}^T\bm W_{L:1}  - \left(\bm W_1^T \bm W_1\right)^L \| \ge  \left( \frac{1}{1+\theta} - 4\delta\right) n. 
\end{align*} 
This, together with Lemma \ref{lem:singu bala}, directly implies \eqref{eq:bound Wl1}. Using Weyl's inequality and \eqref{eq:bala}, we have  
\begin{align*}
\sigma_1\left( \bm W_L^T \bm W_L \right) & \le \sigma_1\left(\bm W_{L-1}^T\bm W_{L-1}\right) + \|\bm W_{L}^T\bm W_{L}  - \bm W_{L-1} \bm W_{L-1}^T \| \le \left( \frac{1}{1-\theta} + 4\delta\right)^{1/L} n^{1/L} + \delta, \\
\sigma_K\left( \bm W_L^T \bm W_L \right) & \ge \sigma_K\left(\bm W_{L-1}^T\bm W_{L-1}\right) - \|\bm W_{L}^T\bm W_{L}  - \bm W_{L-1} \bm W_{L-1}^T \| \ge \left( \frac{1}{1-\theta} + 4\delta\right)^{1/L} n^{1/L} - \delta, 
\end{align*} 
which directly implies \eqref{eq:bound Wl2}. 
\end{proof}

Using this lemma and $\varepsilon \in (0,1)$, we conclude that the leading $K$ singular values of $\bW_l$ for all $l \in [L-1]$ are well separated from the remaining $d-K$ singular values according to \eqref{eq:singu}. Consequently, we can handle them with the associated singular vectors separately. Since $\bm \Sigma_{l+1} = \bm \Sigma_l$ for all $l \in [L-2]$ according to Lemma \ref{lem:singu bala}, suppose that $\bm \Sigma_{l,1}$ and $\bm\Sigma_{l,2}$ have distinct $p \ge 1$ and $q \ge 1$ singular values for all $l \in [L-1]$, respectively. Let $\sigma_1 > \dots > \sigma_p >  \sigma_{p+1} > \dots > \sigma_{p+q}$ be the distinct singular values with the corresponding multiplicities $h_1,h_2,\dots,h_{p+q} \in \mathbb{N}$. In particular, we have $\sum_{i=1}^p h_i = K$, $\sum_{i=p+1}^{p+q} h_i = d-K$, and $\sigma_{p+1} \le \varepsilon$ due to \eqref{eq:singu}. Then, we can write 
\begin{align}\label{eq:block S}
\bm{\Sigma}_{l } = \tilde{\bm\Sigma} = {\rm BlkDiag} ( \tilde{\bm\Sigma}_1, \tilde{\bm\Sigma}_2 ) = {\rm BlkDiag}\left(\sigma_{1}\bI_{h_1},\dots,\sigma_{p}\bI_{h_p},\dots,\sigma_{p+q}\bI_{h_{p+q}}\right).
\end{align}  
Using the same partition, we can also write
\begin{align}\label{eq:Ul Vl}
\bU_{l} =  \left[\bm{U}_{l}^{(1)},\dots,\bm{U}_{l}^{(p)},\dots,\bm{U}_{l}^{(p+q)}\right],\ \bV_{l} =  \left[\bm{V}_{l}^{(1)},\dots,\bm{V}_{l}^{(p)},\dots,\bm{V}_{l}^{(p+q)}\right],
\end{align}
where $\bm{U}_l^{(i)},\bm{V}_l^{(i)} \in \mO^{d\times h_i}$ for all $i \in [p+q]$. Using the above definitions, we can characterize the relationship between singular vector matrices $\bU_l$ and $\bV_{l+1}$. 

\begin{lemma}\label{lem:UV}
Suppose that Assumptions \ref{AS:1} and \ref{AS:2}  hold. Then, the following statements hold:\\
(i) For all $l \in [L-2]$, there exists orthogonal matrix $\bm Q_{l,i} \in \mO^{h_i}$ such that  
\begin{align}\label{eq:Ul=Vl}
\bm{U}_{l}^{(i)} = \bm{V}_{l+1}^{(i)}\bm Q_{l,i},\ \forall i \in [p+q].
\end{align}
(ii) It holds that 
\begin{align}\label{eq:UL=VL}
\|\bm{V}_{L,1}^T\bm{U}_{L-1,2}\|_F \le \frac{2\sqrt{\delta}\sqrt[4]{K}}{n^{1/2L}},\ \sigma_{\min}\left(\bm{V}_{L,1}^T\bm{U}_{L-1,1}\right) \ge 1 - \frac{2\sqrt{\delta}\sqrt[4]{K}}{  n^{1/2L}}.
\end{align}
\end{lemma}
\begin{proof}
(i) Using the first equation in \eqref{eq:Sigma l}, \eqref{eq:lem singu bala}, \eqref{eq:block S}, and \eqref{eq:Ul Vl}, we obtain
\begin{align*}
\sum_{i=1}^{p+q} \sigma_i^2 \bm V_{l+1}^{(i)} \bm V_{l+1}^{(i)^T} = \sum_{i=1}^{p+q} \sigma_i^2 \bm U_{l}^{(i)} \bm U_{l}^{(i)^T}
\end{align*}
Multiplying $\bm U_{l}^{(1)}$ on the both sides of the above equality, we obtain
\begin{align*}
\sum_{i=1}^{p+q} \sigma_i^2 \bm U_{l}^{(1)^T}\bm V_{l+1}^{(i)} \bm V_{l+1}^{(i)^T}\bm U_{l}^{(1)} = \sigma_1^2\bm{I}, 
\end{align*}
where the equality follows from the fact that $\bm U_l \in \mO^d$ takes the form of \eqref{eq:Ul Vl}. Therefore, for any $\bm{x} \in \R^{h_1}$ with $\|\bm{x}\|=1$, we have
\begin{align*}
\sigma_1^2 & = \sum_{i=1}^{p+q} \sigma_i^2  \left\| \bm V_{l+1}^{(i)^T}\bm U_{l}^{(1)}\bx \right\|^2 \le \sigma_1^2  \left\| \bm V_{l+1}^{(1)^T}\bm U_{l}^{(1)}\bx \right\|^2 + \sigma_2^2 \sum_{i = 2}^{p+q} \left\| \bm V_{l+1}^{(i)^T}\bm U_{l}^{(1)}\bx \right\|^2 \\
& = \sigma_1^2  \left\| \bm V_{l+1}^{(1)^T}\bm U_{l}^{(1)}\bx \right\|^2 + \sigma_2^2\left( 1 -  \left\| \bm V_{l+1}^{(1)^T}\bm U_{l}^{(1)}\bx \right\|^2  \right) = \sigma_2^2 + \left( \sigma_1^2 - \sigma_2^2\right)\left\| \bm V_{l+1}^{(1)^T}\bm U_{l}^{(1)}\bx \right\|^2,
\end{align*}
where the inequality follows from $\sigma_1 > \sigma_2 > \dots > \sigma_{p+q}$, and the second equality uses $\bm V_{l+1} \in \mO^d$. This, together with $\sigma_1 > \sigma_2 > 0$, implies $\left\| \bm V_{l+1}^{(1)^T}\bm U_{l}^{(1)}\bx \right\|^2 = 1$. Using this and $\|\bm{x}\|=1$, we have $\bm{U}_{l}^{(1)} = \bm{V}_{l+1}^{(1)}\bm Q_{l,1}$ for some $\bm Q_{l,1} \in \mO^{h_1}$. 
By repeatedly applying the same argument to $\bm U_l^{(i)}$ for all $i \ge 2$, we prove \eqref{eq:Ul=Vl}.
 
(ii) According to \eqref{eq:Sigma l} and the block structures of $\bm \Sigma_L, \bm \Sigma_{L-1}$, we obtain
\begin{align}\label{eq1:lem UV}
\|\bm{\Sigma}_{L-1,2}^2\|_F \le \delta. 
\end{align}
In addition, it follows from \eqref{eq:SVD Wl}, \eqref{eq:SVD WL}, and \eqref{eq:lem UV} that 
\begin{align*}
\|\bV_{L,1}\bm{\Sigma}^2_{L,1}\bV_{L,1}^T  - \bU_{L-1,1}\bm{\Sigma}^2_{L-1,1}\bU_{L-1,1}^T - \bU_{L-1,2}\bm{\Sigma}_{L-1,2}^2\bU_{L-1,2}^T\|_F \le \delta.  
\end{align*}
Using this, $\|\bm U^T \bm A \bm U\|_F \le \|\bm A \|_F$ for any $\bm U \in \mO^{d\times (d-K)}$, we further obtain  
    \begin{align*}
    \|\bU_{L-1,2}^T\bV_{L,1}\bm{\Sigma}^2_{L,1}\bV_{L,1}^T\bU_{L-1,2}   -  \bm{\Sigma}_{L-1,2}^2 \|_F \le \delta. 
    \end{align*} 
    This, together with \eqref{eq1:lem UV}, implies 
    \begin{align*}
    \|\bU_{L-1,2}^T\bV_{L,1}\bm{\Sigma}^2_{L,1}\bV_{L,1}^T\bU_{L-1,2}  \|_F \le 2\delta. 
    \end{align*}
Using Lemma \ref{lem:singular AA} with the fact that $\bm{\Sigma}_{L,1}\bV_{L,1}^T\bU_{L-1,2}$ is of rank $K$, we further have
    \begin{align}\label{eq3:lem UV}
    \|\bm{\Sigma}_{L,1}\bV_{L,1}^T\bU_{L-1,2}\|_F^2 \le  2\delta \sqrt{K}. 
    \end{align}
    Noting that $\|\bm{\Sigma}_{L,1}\bV_{L,1}^T\bU_{L-1,2}\|_F \ge \sigma_{\min}(\bm{\Sigma}_{L,1})\|\bV_{L,1}^T\bU_{L-1,2}\|_F$, we have 
    \begin{align}\label{eq4:lem UV}
        \|\bV_{L,1}^T\bU_{L-1,2}\|_F  \le \frac{\|\bm{\Sigma}_{L,1}\bV_{L,1}^T\bU_{L-1,2}\|_F}{\sigma_{\min}(\bm{\Sigma}_{L,1})} \le \frac{\sqrt{2\delta}\sqrt[4]{K}}{\left( n/2 \right)^{1/2L}} \le  \frac{2\sqrt{\delta}\sqrt[4]{K}}{  n^{1/2L}},
    \end{align}
    where the second inequality follows from \eqref{eq3:lem UV} and Lemma \ref{lem:bound Wl}. Now, we compute
\begin{align*}
\sigma_{\min}^2(\bV_{L,1}^T\bU_{L-1,1}) & =  \min_{\|\bx\|=1} \|\bU_{L-1,1}^T\bV_{L,1}\bx\|^2  =    \min_{\|\bx\|=1} \bx^T \bV_{L,1}^T\bU_{L-1,1}\bU_{L-1,1}^T\bV_{L,1}\bx  \\
& = \min_{\|\bx\|=1} \bx^T \bV_{L,1}^T\left(\bI - \bU_{L-1,2}\bU_{L-1,2}^T\right)\bV_{L,1}\bx = 1 - \max_{\|\bx\|=1} \|\bU_{L-1,2}^T\bV_{L,1}\bx\|^2 \\
& = 1 - \sigma_{\max}^2(\bU_{L-1,2}^T\bV_{L,1}) \ge 1 - \|\bU_{L-1,2}^T\bV_{L,1}\|_F^2,
\end{align*}
where the third equality follows from $ \bU_{L-1} \in \mO^d$. This, together with \eqref{eq4:lem UV}, implies
\begin{align*}
\sigma_{\min}(\bV_{L,1}^T\bU_{L-1,1}) \ge \sqrt{ 1 - \|\bU_{L-1,2}^T\bV_{L,1}\|_F^2} \ge 1 - \|\bU_{L-1,2}^T\bV_{L,1}\|_F \ge 1 - \frac{2\sqrt{\delta}\sqrt[4]{K}}{  n^{1/2L}}. 
\end{align*}
\end{proof} 

\subsection{Analysis of Progressive Within-Class Compression}\label{subsec:compre}

In this subsection, our goal is to prove progressive compression of within-class features across layers. Towards this goal, we study the behavior of $\mathrm{Tr}(\bm \Sigma_W)$ and $\mathrm{Tr}(\bm \Sigma_B)$ across layers, respectively. For ease of exposition, let 
\begin{align} 
& \bm{\Delta}_W := \left[\bm{\delta}_{1,1},\dots,\bm{\delta}_{1,n},\dots,\bm{\delta}_{K,1},\dots,\bm{\delta}_{K,n} \right] \in \R^{d\times N},\ \text{where}\ \bm{\delta}_{k,i} = \bm{x}_{k,i}-\bar{\bx}_k,\ \forall k,i, \label{eq:Delta W} \\
& \bm{\Delta}_B := \left[\bar{\bm{\delta}}_1,\dots,\bar{\bm{\delta}}_K \right] \in \R^{d\times K},\ \text{where}\ \bar{\bm{\delta}}_k = \bar{\bx}_k - \bar{\bx},\ \forall k \in [K], \label{eq:Delta B} 
\end{align} 
where $\bar{\bm x}_k$ and $\bar{\bm x}$ are defined in \eqref{eq:xk}. According to Definition \ref{def:nc1} and $n_k=n$ for all $k \in [K]$, one can verify 
\begin{align}\label{eq:tr WB}
\mathrm{Tr}(\bm \Sigma_W^l) = \frac{1}{N} \|\bm W_{l:1} \bm{\Delta}_W \|_F^2,\quad \mathrm{Tr}(\bm \Sigma_B^l) = \frac{1}{K} \|\bm W_{l:1} \bm{\Delta}_B \|_F^2. 
\end{align}
Recall from \eqref{eq:block S} that  $\tilde{\bm \Sigma} = {\rm BlkDiag}(\tilde{\bm \Sigma}_1, \tilde{\bm \Sigma}_2)$, where
\begin{align}\label{eq:Sigma 12}
\tilde{\bm \Sigma}_1 = 
{\rm BlkDiag}\left(\sigma_{1}\bI_{h_1},\dots,\sigma_{p}\bI_{h_p}\right),\ \tilde{\bm \Sigma}_2 = {\rm BlkDiag}\left(\sigma_{p+1}\bI_{h_{p+1}},\dots,\sigma_{p+q}\bI_{h_{p+q}}\right)
\end{align}
According to \eqref{eq:SVD Wl}, \eqref{eq:block S}, and \eqref{eq:Ul=Vl}, there exist block diagonal matrices $\bm{O}_{l,1} = \mathrm{Blkdiag}(\bm{Q}_{l,1},\dots,\bm{Q}_{l,p}) \in \R^{K\times K}$ and $\bm{O}_{l,2} = \mathrm{Blkdiag}(\bm{Q}_{l,p+1},\dots,\bm{Q}_{l,p+q}) \in \R^{(d-K)\times (d-K)}$ with $\bm{Q}_{l,i} \in \mO^{h_i}$ for all $i \in [p+q]$ such that for each $l \in [L-1]$,
\begin{align}\label{eq:Wl1}
\bm W_{l:1}  = \bm{U}_{l,1} \bm{O}_{l,1} \tilde{\bm \Sigma}^{l}_1 \bm{V}_{1,1}^T + \bm{U}_{l,2} \bm{O}_{l,2} \tilde{\bm \Sigma}^{l}_2 \bm{V}_{1,2}^T = \bm{U}_{l,1} \tilde{\bm \Sigma}^{l}_1\bm{O}_{l,1}  \bm{V}_{1,1}^T + \bm{U}_{l,2}  \tilde{\bm \Sigma}^{l}_2\bm{O}_{l,2} \bm{V}_{1,2}^T,
\end{align}
where $\bm U_{l,1}$, $\bm U_{l,2}$, $\bm V_{l,1}$, and $\bm V_{l,2}$ for ell $l \in [L-1]$ are defined in \eqref{eq:SVD Wl}. 

To bound $\mathrm{Tr}(\bm \Sigma_W^l)$ and $\mathrm{Tr}(\bm \Sigma_B^l)$ across layers, we present a lemma that establishes a relationship between the magnitudes of $\bm V_{1,1}^T\bm{\Delta}_W$ and $\bm V_{1,2}^T\bm{\Delta}_W$, as well as a relationship between the magnitudes of $\bm V_{1,1}^T\bm{\Delta}_B$ and $\bm V_{1,2}^T\bm{\Delta}_B$ based on the above setup. 

\begin{lemma}\label{lem:V DeltaW}
Suppose that Assumptions \ref{AS:1} and \ref{AS:2} hold with 
\begin{align}\label{eq:delta1}
\delta \le \frac{n^{1/L}}{64\sqrt{K}},\ n \ge 16. 
\end{align}
Then, the following statements hold: \\
(i) It holds that
\begin{align}\label{eq:V DeltaW}
\|\bm V_{1,1}^T\bm{\Delta}_W\|_F \le \frac{2\sqrt{2\delta}\sqrt[4]{K}\varepsilon^{L-1}}{\sqrt{n}} \left\|\bm{V}_{1,2}^T\bm{\Delta}_W\right\|_F. 
\end{align}
(ii) It holds that 
\begin{align}\label{eq:V DeltaB}
\|\bm{V}_{1,2}^T\bm \Delta_B\|_F \le 2\theta \|\bm V_{1,1}^T \bm \Delta_B\|_F. 
\end{align}
\end{lemma}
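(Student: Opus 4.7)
Both inequalities come from exact identities satisfied by the end-to-end map $\bm W_{L:1}=\bm Y(\bm X^\top\bm X)^{-1}\bm X^\top$ imposed by the minimum-norm condition \eqref{eq:min norm}. Since $\bm W_{L:1}\bm X=\bm Y$, averaging within each class gives $\bm W_{L:1}\bar{\bm X}=\bm Y$, hence $\bm W_{L:1}\bm\Delta_W=\bm 0$; similarly $\bm W_{L:1}\bm\Delta_B = \bm I_K-\tfrac{1}{K}\bm 1_K\bm 1_K^\top =:\bm H_K$. The strategy is to substitute the SVD expansion \eqref{eq:Wl1} and $\bm W_L=\bm U_L\bm\Sigma_{L,1}\bm V_{L,1}^\top$ into these identities, then disentangle the $\bm V_{1,1}^\top$ and $\bm V_{1,2}^\top$ projections by means of the spectral gap from Lemma \ref{lem:bound Wl} and the near-orthogonality of $\bm V_{L,1}$ to $\bm U_{L-1,2}$ from Lemma \ref{lem:UV}.

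For (i), substituting into $\bm W_{L:1}\bm\Delta_W = \bm 0$ and cancelling the orthogonal $\bm U_L$ together with the invertible $\bm\Sigma_{L,1}$ (via Lemma \ref{lem:bound Wl}) yields
\begin{equation*}
\bm V_{L,1}^\top\bm U_{L-1,1}\,\tilde{\bm\Sigma}_1^{L-1}\bm O_{L-1,1}\bm V_{1,1}^\top\bm\Delta_W \;=\; -\bm V_{L,1}^\top\bm U_{L-1,2}\,\tilde{\bm\Sigma}_2^{L-1}\bm O_{L-1,2}\bm V_{1,2}^\top\bm\Delta_W.
\end{equation*}
Under the assumption \eqref{eq:delta1}, Lemma \ref{lem:UV}(ii) guarantees $\sigma_{\min}(\bm V_{L,1}^\top\bm U_{L-1,1})\geq 1/\sqrt 2$, so that factor may be left-inverted. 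Taking Frobenius norms and combining $\sigma_{\min}(\tilde{\bm\Sigma}_1^{L-1})\geq \sigma_K(\bm W_1)^{L-1}\gtrsim n^{(L-1)/(2L)}$ from Lemma \ref{lem:bound Wl} with $\|\tilde{\bm\Sigma}_2^{L-1}\|\leq\varepsilon^{L-1}$ and $\|\bm V_{L,1}^\top\bm U_{L-1,2}\|_F\leq 2\sqrt{\delta}K^{1/4}/n^{1/(2L)}$ collapses the $n^{1/(2L)}$ factors to a single $\sqrt n$ in the denominator, yielding \eqref{eq:V DeltaW}.

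For (ii), I route through $\bar{\bm X}_c := [\bar{\bm x}_1,\dots,\bar{\bm x}_K]$. Since $\bm X\bm Y^\top = n\bar{\bm X}_c$, the identity $\bm W_{L:1}^\top = \bm X(\bm X^\top\bm X)^{-1}\bm Y^\top$ rearranges to $n\bar{\bm X}_c-\bm W_{L:1}^\top = -\bm X(\bm X^\top\bm X)^{-1}(\bm X^\top\bm X-\bm I)\bm Y^\top$. Combining $\|\bm X^\top\bm X-\bm I\|_F\leq\theta$ from the proof of Lemma \ref{lem:spec X} with the crucial operator-norm bound $\|\bm Y^\top\|=\sqrt n$ (rather than the loose $\|\bm Y^\top\|_F=\sqrt N$) bounds this perturbation by $\tfrac{\sqrt{1+\theta}}{1-\theta}\theta\sqrt n$. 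Separately, the SVD gives $\bm V_{1,2}^\top\bm W_{L:1}^\top = \bm O_{L-1,2}^\top\tilde{\bm\Sigma}_2^{L-1}\bm U_{L-1,2}^\top\bm W_L^\top$, whose Frobenius norm is $O(\sqrt{\delta}\,K^{1/4}\varepsilon^{L-1})$ by Lemmas \ref{lem:UV} and \ref{lem:bound Wl} and is negligible under \eqref{eq:delta1}. Dividing by $n$ produces $\|\bm V_{1,2}^\top\bar{\bm X}_c\|_F\lesssim\theta/\sqrt n$, and since $\bm\Delta_B=\bar{\bm X}_c\bm H_K$ with $\|\bm H_K\|_{\mathrm{op}}=1$, the same bound passes to $\|\bm V_{1,2}^\top\bm\Delta_B\|_F$. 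A short direct computation from Assumption \ref{AS:1} shows $\|\bar{\bm x}_k\|^2 = 1/n+O(\theta/n^2)$ and $|\langle\bar{\bm x}_k,\bar{\bm x}_{k'}\rangle|\leq\theta/(nK)$ for $k\neq k'$, hence $\|\bm\Delta_B\|_F^2\geq (K-1)/n\,(1-O(\theta))$. Finally, Pythagoras $\|\bm V_{1,1}^\top\bm\Delta_B\|_F^2 = \|\bm\Delta_B\|_F^2-\|\bm V_{1,2}^\top\bm\Delta_B\|_F^2$ provides the matching lower bound, and the ratio works out to at most $2\theta$ under $\theta\leq 1/4$ and $n\geq 16$.

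\paragraph{Main obstacle.} Part (i) is largely bookkeeping once the identity $\bm W_{L:1}\bm\Delta_W=\bm 0$ is exploited. The real challenge lies in the clean constant $2\theta$ in (ii): it demands two tight estimates simultaneously, namely the refined perturbation bound $\|n\bar{\bm X}_c-\bm W_{L:1}^\top\|_F=O(\theta\sqrt n)$ (which crucially uses $\|\bm Y^\top\|=\sqrt n$ instead of $\|\bm Y^\top\|_F=\sqrt N$) and the sharp lower bound $\|\bm\Delta_B\|_F^2\gtrsim (K-1)/n$ on the between-class spread; loosening either piece inflates the constant strictly beyond $2\theta$.
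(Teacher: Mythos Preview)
Your argument for part (i) is essentially the same as the paper's: both exploit $\bm W_{L:1}\bm\Delta_W=\bm 0$, cancel $\bm U_L\bm\Sigma_{L,1}$, and balance $\sigma_{\min}(\bm V_{L,1}^\top\bm U_{L-1,1})\sigma_K(\bm W_1)^{L-1}$ on the left against $\|\bm V_{L,1}^\top\bm U_{L-1,2}\|_F\,\varepsilon^{L-1}$ on the right via Lemmas \ref{lem:bound Wl} and \ref{lem:UV}.

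For part (ii) you take a genuinely different and more elementary route. The paper expands $\bm W_{L:1}^\top\bm W_{L:1}$ into four blocks, applies the Davis--Kahan theorem to show that the range projector $\bm U_1\bm U_1^\top$ of $\bm W_{L:1}^\top\bm W_{L:1}$ is close to $\bm V_{1,1}\bm V_{1,1}^\top$, and then combines this with $\tfrac{1}{n}\bm W_{L:1}^\top\bm W_{L:1}\bm\Delta_B\approx\bm\Delta_B$ and $\tfrac{1}{n}\bm W_{L:1}^\top\bm W_{L:1}\approx\bm U_1\bm U_1^\top$ through a three-term triangle inequality to bound $\|(\bm I-\bm V_{1,1}\bm V_{1,1}^\top)\bm\Delta_B\|_F$. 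Your approach sidesteps all of the perturbation theory: you work with $\bm W_{L:1}^\top$ itself (not its Gram matrix), observe that $n\bar{\bm X}_c=\bm X\bm Y^\top$ differs from $\bm W_{L:1}^\top$ by $\bm X(\bm X^\top\bm X)^{-1}(\bm X^\top\bm X-\bm I)\bm Y^\top$ of size $O(\theta\sqrt n)$, and note that $\bm V_{1,2}^\top\bm W_{L:1}^\top=\tilde{\bm\Sigma}_2^{L-1}\bm O_{L-1,2}^\top\bm U_{L-1,2}^\top\bm V_{L,1}\bm\Sigma_{L,1}\bm U_L^\top$ is tiny directly from \eqref{eq:Wl1} and Lemma \ref{lem:UV}. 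This is cleaner and avoids Davis--Kahan entirely; the paper's route has the advantage of producing the intermediate projector comparison $\bm V_{1,1}\bm V_{1,1}^\top\approx\bm U_1\bm U_1^\top$, which it does not reuse elsewhere.

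One small caveat on your constants: in the perturbation bound you quote $\|\bm X(\bm X^\top\bm X)^{-1}\|\le\frac{\sqrt{1+\theta}}{1-\theta}$, but the sharp value is $1/\sigma_{\min}(\bm X)\le 1/\sqrt{1-\theta}$. With your looser constant, the ratio $\|\bm V_{1,2}^\top\bm\Delta_B\|_F/\|\bm\Delta_B\|_F$ at $K=2$, $\theta=1/4$ comes out to roughly $0.53$, just above the threshold $2\theta/\sqrt{1+4\theta^2}\approx 0.45$ needed for Pythagoras to deliver the factor $2\theta$; with the sharp constant it drops to about $0.41$ and the bound goes through. (Also, $\|\bar{\bm x}_k\|^2=1/n+O(\theta/(nK))$, not $O(\theta/n^2)$, though this does not affect your lower bound on $\|\bm\Delta_B\|_F^2$.)
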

\begin{proof}
(i) It follows from \eqref{eq:min norm} that $\bm W_{L:1}\bm X = \bm Y$. This, together with $\bm Y = \bm I_K \otimes \bm{1}_n$ and \eqref{eq:xk}, implies $\bm{W}_{L:1} \bx_{k,i} = \bm{e}_k$ and $\bm{W}_{L:1}\bar{\bx}_k = \bm{e}_k$ for all $i \in [n]$ and $k \in [K]$. It follows from this and \eqref{eq:Delta W} that
\begin{align}\label{eq:W DeltaW}
\bm{W}_{L:1}\bm{\Delta}_W = \bm{0}. 
\end{align}  
Using this and \eqref{eq:Wl1},  there exist block diagonal matrices $\bm{O}_{1} = \mathrm{BlkDiag}(\bm{Q}_{1},\dots,\bm{Q}_{p})$ $\in \R^{K\times K}$ and $\bm{O}_{2} = \mathrm{BlkDiag}(\bm{Q}_{p+1},\dots,\bm{Q}_{p+q}) \in \R^{(d-K)\times (d-K)}$ with $\bm{Q}_{i} \in \mO^{h_i}$ for all $i \in [p+q]$ such that  
\begin{align*}
\bm U_L \bm{\Sigma}_{L,1} \bm V_{L,1}^T \left(\bm{U}_{L-1,1}\tilde{\bm \Sigma}_1^{L-1}\bm{O}_{1}\bm{V}_{1,1}^T + \bm{U}_{L-1,2}\tilde{\bm \Sigma}_2^{L-1}\bm{O}_{2}\bm{V}_{1,2}^T\right)\bm{\Delta}_W = \bm{0}.
\end{align*}
This, together with $\bm{U}_L \in \mO^K$ and \eqref{eq:bound Wl} in Lemma \ref{lem:bound Wl}, yields 
\begin{align}\label{eq0:W DeltaW}
\bm V_{L,1}^T \left(\bm{U}_{L-1,1}\tilde{\bm \Sigma}_1^{L-1}\bm{O}_{1}\bm{V}_{1,1}^T + \bm{U}_{L-1,2}\tilde{\bm \Sigma}_2^{L-1}\bm{O}_{2}\bm{V}_{1,2}^T\right)\bm{\Delta}_W = \bm{0}.
\end{align}
Therefore, we have 
\begin{align}\label{eq1:lem V DeltaW}
\left\|\bm V_{L,1}^T  \bm{U}_{L-1,1}\tilde{\bm \Sigma}_1^{L-1}\bm{O}_{1}\bm{V}_{1,1}^T\bm{\Delta}_W\right\|_F & = \left\| \bm V_{L,1}^T\bm{U}_{L-1,2}\tilde{\bm \Sigma}_2^{L-1}\bm{O}_{2}\bm{V}_{1,2}^T\bm{\Delta}_W\right\|_F \notag\\
& \le \| \bm V_{L,1}^T\bm{U}_{L-1,2}\|_F \|\tilde{\bm \Sigma}_2^{L-1}\| \left\|\bm{V}_{1,2}^T\bm{\Delta}_W\right\|_F \notag\\
& \le \frac{\sqrt{2\delta}\sqrt[4]{K}}{\left( n/2 \right)^{1/2L}} \varepsilon^{L-1} \left\|\bm{V}_{1,2}^T\bm{\Delta}_W\right\|_F,
\end{align}
where the last inequality follows from \eqref{eq:UL=VL} in Lemma \ref{lem:UV} and $\|\tilde{\bm \Sigma}_2\|\le \varepsilon$ due to \eqref{eq:singu} and \eqref{eq:Sigma 12}. On the other hand, we compute
\begin{align*}
\left\|\bm V_{L,1}^T  \bm{U}_{L-1,1}\tilde{\bm \Sigma}_1^{L-1}\bm{O}_{1}\bm{V}_{1,1}^T\bm{\Delta}_W\right\|_F & \ge \sigma_{\min}\left( \bm V_{L,1}^T  \bm{U}_{L-1,1} \right) \sigma_{\min}\left( \tilde{\bm \Sigma}_1^{L-1} \right)\left\|\bm{V}_{1,1}^T\bm{\Delta}_W\right\|_F \\
& \ge \left( 1 - \frac{\sqrt{2\delta}\sqrt[4]{K}}{\left( n/2 \right)^{1/2L}}\right) \left(\frac{n}{2}\right)^{\frac{L-1}{2L}}\left\|\bm{V}_{1,1}^T\bm{\Delta}_W\right\|_F,
\end{align*}
where the first inequality uses Lemma \ref{lem:mini singular}, and the last inequality follows from \eqref{eq:bound Wl} in Lemma \ref{lem:bound Wl} and \eqref{eq:UL=VL} in Lemma \ref{lem:UV}. This, together with   \eqref{eq1:lem V DeltaW}, yields  
\begin{align*}
\left\|\bm{V}_{1,1}^T\bm{\Delta}_W\right\|_F \le  \frac{\frac{\sqrt{2\delta}\sqrt[4]{K}}{\left( n/2 \right)^{1/2L}}\varepsilon^{L-1}}{\left( 1 - \frac{\sqrt{2\delta}\sqrt[4]{K}}{\left( n/2 \right)^{1/2L}}\right) \left(\frac{n}{2}\right)^{\frac{L-1}{2L}}} \left\|\bm{V}_{1,2}^T\bm{\Delta}_W\right\|_F \le \frac{2\sqrt{2\delta}\sqrt[4]{K}\varepsilon^{L-1}}{\sqrt{n}} \left\|\bm{V}_{1,2}^T\bm{\Delta}_W\right\|_F, 
\end{align*}
where the second inequality follows from  $(n/2)^{1/2L} - \sqrt{2\delta}\sqrt[4]{K} \ge  n^{1/2L}/\sqrt{2} $ due to \eqref{eq:delta1} and $n \ge 1$. 

 (ii) For ease of exposition, let 
\begin{align}\label{eq:P}
\bm{P}_1 := \bm{U}_{L-1,1}^T\bm{V}_{L,1} \in \R^{K\times K},\ \bm{P}_2 := \bm{U}_{L-1,2}^T\bm{V}_{L,1} \in \R^{(d-K)\times K}. 
\end{align}
According to \eqref{eq:Wl1},  there exist block diagonal matrices $\bm{O}_{1} = \mathrm{BlkDiag}(\bm{Q}_{1},\dots,\bm{Q}_{p})$ $\in \R^{K\times K}$ and $\bm{O}_{2} = \mathrm{BlkDiag}(\bm{Q}_{p+1},\dots,\bm{Q}_{p+q}) \in \R^{(d-K)\times (d-K)}$ with $\bm{Q}_{i} \in \mO^{h_i}$ for all $i \in [p+q]$ such that  
\begin{align*}
\bm{W}_{L-1:1} = \bm{U}_{L-1,1} \tilde{\bm \Sigma}^{L-1}_1\bm{O}_{1} \bm{V}_{1,1}^T + \bm{U}_{L-1,2}  \tilde{\bm \Sigma}^{L-1}_2\bm{O}_{2} \bm{V}_{1,2}^T. 
\end{align*}  
This, together with \eqref{eq:SVD WL} and \eqref{eq:P}, yields 
\begin{align}\label{eq4:lem V DeltaB}
\bm{W}_{L:1}^T\bm{W}_{L:1} & = \bm{W}_{L-1:1}^T \bW_L^T\bW_L \bm{W}_{L-1:1} = \underbrace{\bm V_{1,1}\bm{O}_{1}^T\tilde{\bm \Sigma}_1^{L-1}\bm P_1 \bm{\Sigma}_{L,1}^2 \bm P_1^T \tilde{\bm \Sigma}_1^{L-1} \bm{O}_{1}\bm V_{1,1}^T}_{=: \bm \Delta_1}  + \notag \\
& \underbrace{\bm V_{1,1}\bm{O}_{1}^T\tilde{\bm \Sigma}_1^{L-1}\bm P_1 \bm{\Sigma}_{L,1}^2 \bm P_2^T \tilde{\bm \Sigma}_2^{L-1} \bm{O}_{2}\bm V_{1,2}^T}_{=: \bm \Delta_2} + \underbrace{\bm V_{1,2}\bm{O}_2^T\tilde{\bm \Sigma}_2^{L-1}\bm P_2 \bm{\Sigma}_{L,1}^2 \bm P_1^T \tilde{\bm \Sigma}_1^{L-1} \bm{O}_1\bm V_{1,1}^T}_{=:\bm \Delta_3} + \notag\\
&  \underbrace{\bm V_{1,2}\bm{O}_2^T\tilde{\bm \Sigma}_2^{L-1}\bm P_2 \bm{\Sigma}_{L,1}^2 \bm P_2^T \tilde{\bm \Sigma}_2^{L-1} \bm{O}_2\bm V_{1,2}^T}_{=:\bm \Delta_4}. 
\end{align}
Then, we bound the above terms in turn. First, we compute
\begin{align}\label{eq2:lem V DeltaB}
\|\bm \Delta_2\|_F = \|\bm \Delta_3\|_F  & \le   \|\tilde{\bm \Sigma}_1^{L-1}\|\|\bm P_1\| \|\bm{\Sigma}_{L,1}^2\| \|\tilde{\bm \Sigma}_2^{L-1}\|\|\bm P_2\|_F\notag \\
&  \le \left( \sqrt{2n}\right)^{\frac{L+1}{L}} \varepsilon^{L-1} \frac{\sqrt{2\delta}\sqrt[4]{K}}{(n/2)^{1/2L}} \le 2^{1+\frac{1}{L}} \varepsilon^{L-1}\sqrt{n\delta}\sqrt[4]{K},
\end{align} 
where second inequality follows from \eqref{eq:singu}, \eqref{eq:bound Wl}, \eqref{eq:UL=VL}, and \eqref{eq:P}. Using the same argument, we compute
\begin{align}\label{eq3:lem V DeltaB}
\|\bm \Delta_4\|_F \le \|\bm{\Sigma}_{L,1}^2\| \|\tilde{\bm \Sigma}_2^{L-1}\|^2\|\bm P_2\|_F^2  \le (2n)^{\frac{1}{L}}\varepsilon^{2(L-1)} \frac{2\delta\sqrt{K}}{(n/2)^{1/L}} \le 2^{1+\frac{2}{L}}\varepsilon^{2(L-1)}\delta\sqrt{K}.
\end{align}
Before we proceed further, let
\begin{align}\label{eq:A}
\bm{A}_1 = \bm{O}_1^T\tilde{\bm \Sigma}_1^{L-1}\bm P_1 \bm{\Sigma}_{L,1}.
\end{align}
It follows from \eqref{eq:UL=VL} in Lemma \ref{lem:UV} and \eqref{eq:P} that 
\begin{align}
\sigma_{\min}\left(\bm P_1 \right) \ge 1 - \frac{\sqrt{2\delta}\sqrt[4]{K}}{(n/2)^{1/2L}} \ge \frac{3}{4},
\end{align}
where the second inequality uses \eqref{eq:delta1}.  Using $\bm{O}_1 \in \mO^K$, we have 
\begin{align*}
\sigma_{\min}\left(\bm A_1 \right) & = \sigma_{\min}\left(\tilde{\bm \Sigma}_1^{L-1}\bm P_1 \bm{\Sigma}_{L,1} \right) \ge \sigma_{\min}\left(\tilde{\bm \Sigma}_1^{L-1}\right) \sigma_{\min}\left(\bm P_1 \right) \sigma_{\min}\left(\bm{\Sigma}_{L,1} \right) \ge \frac{3}{4}\sqrt{\frac{n}{2}},
\end{align*}
where the first inequality uses Lemma \ref{lem:mini singular} and the second inequality follows from \eqref{eq:bound Wl} in Lemma \ref{lem:bound Wl}. This implies that $\bm{A}_1\bm{A}_1^T \in \R^{K\times K}$ is of full rank, and thus $\bm \Delta_1 = \bm V_{1,1}\bm{A}_1\bm{A}_1^T\bm V_{1,1}^T$ is of rank $K$. Then, let 
\begin{align}\label{eq6:lem V DeltaB}
\bm{\Delta}_1 = \bm{U}_{\Delta_1}\bm{\Sigma}_{\Delta_1}\bm{U}_{\Delta_1}^T  
\end{align}
be a compact eigenvalue decomposition of $\bm{\Delta}_1$, where $\bm{U}_{\Delta_1} \in \mO^{d\times K}$ and $\bm{\Sigma}_{\Delta_1} \in \R^{K\times K}$.  Using this and Lemma \ref{lem:QQ=UU} with $\bm{\Delta}_1 = \bm{V}_{1,1}\bm{A}_1\bm{A}_1^T\bm{V}_{1,1}^T$ by \eqref{eq4:lem V DeltaB}, we obtain  
\begin{align}\label{eq5:lem V DeltaB}
\bm{V}_{1,1} \bm{V}_{1,1}^T = \bm{U}_{\Delta_1,1}\bm{U}_{\Delta_1,1}^T. 
\end{align}
Noting that $\bm{W}_{L:1}$ is of rank $K$, let 
\begin{align}\label{eq8:lem V DeltaB}
\bm{W}_{L:1}^T\bm{W}_{L:1} =   \bm{U}_{1}\bm{\Sigma}_{1}\bm{U}_{1}^T,
\end{align}
be a compact eigenvalue decomposition of $\bm{W}_{L:1}^T\bm{W}_{L:1} $, where  $\bU_{1} \in \R^{d\times K}$ and $\bm{\Sigma}_{1} \in \R^{K\times K}$. According to \eqref{eq:orth} and \eqref{eq:spec X}, we have $\sigma_K(\bm{W}_{L:1}) \ge \sqrt{n/(1-\theta)}$. This, together with \eqref{eq4:lem V DeltaB}, \eqref{eq6:lem V DeltaB}, and Davis-Kahan Theorem (see, e.g., \cite[Theorem V.3.6]{stewart1990matrix}), yields  
\begin{align*}
\| \bm{U}_{\Delta_1,1} \bm{U}_{\Delta_1,1}^T - \bm{U}_1\bm{U}_1^T \|_F & \le \frac{\|\bm{\Delta}_2 + \bm{\Delta}_3 + \bm{\Delta}_4\|_F}{n(1-\theta)} \le \frac{ 4\varepsilon^{L-1}\sqrt{2n\delta}\sqrt[4]{K} + 4\varepsilon^{2(L-1)}\delta\sqrt{K}}{n(1-\theta)}\\
& \le \frac{8\varepsilon^{L-1}\sqrt{\delta}\sqrt[4]{K}}{\sqrt{n}},
\end{align*}
where the second inequality uses \eqref{eq2:lem V DeltaB} and \eqref{eq3:lem V DeltaB}, and the last inequality follows from $4\varepsilon^{L-1}\sqrt{\delta}\sqrt[4]{K} \le 4\sqrt{\delta}\sqrt[4]{K} \le (6-4\sqrt{2})\sqrt{n}$ due to $\varepsilon \le 1$ and \eqref{eq:delta1}. Using this and \eqref{eq5:lem V DeltaB} yields
\begin{align}\label{eq11:lem V DeltaB}
\| \bm{V}_{1,1} \bm{V}_{1,1}^T - \bm{U}_1\bm{U}_1^T \|_F \le \frac{8\varepsilon^{L-1}\sqrt{\delta}\sqrt[4]{K}}{\sqrt{n}}. 
\end{align}
It follows from \eqref{eq:min norm} that $\bm W_{L:1}\bm X = \bm Y$. This, together with \eqref{eq:X Y} and \eqref{eq:Delta B}, implies $\bm W_{L:1} \bm \Delta_B = \bm I_K - \bm E_K/K$. According to this and \eqref{eq:min norm}, we compute
\begin{align}\label{eq7:lem V DeltaB}
\bm W_{L:1}^T\bm W_{L:1} \bm \Delta_B = \bX(\bm X^T\bm X)^{-1}\bY^T\left( \bm I_K - \frac{\bm E_K}{K} \right) = \bX(\bm X^T\bm X)^{-1}\left( (\bm I_K - \bm E_K/K) \otimes \bm{1}_n \right)
\end{align} 
Noting that $\bm X \in \R^{d\times N}$ is of full column rank due to \eqref{eq:spec X} in Lemma \ref{lem:spec X}, let $\bm X = \bm U_X \bm \Sigma_X\bm V_X^T$ be an SVD of $\bm X$, where $\bm U_X \in \mO^{d\times N}$, $\bm \Sigma_X \in \R^{N\times N}$, and $\bm V_X \in \mO^{N\times N}$. Then, we obtain $\bm X(\bm X^T\bm X)^{-1} = \bm U_X \bm \Sigma_X^{-1} \bm V_X^T$.  Using this, \eqref{eq7:lem V DeltaB}, and $\bm \Delta_B = \bX\left( (\bm I_K -  {\bm E_K}/{K}) \otimes \bm 1_n \right)/n$, we compute
\begin{align}\label{eq9:lem V DeltaB}
\left\| \frac{1}{n}\bm W_{L:1}^T\bm W_{L:1} \bm \Delta_B - \bm \Delta_B \right\|_F & = \frac{1}{n}\left\| \bm U_X \left( \bm \Sigma_X^{-1} - \bm \Sigma_X \right) \bm V_X^T \left( (\bm I_K - \bm E_K/K) \otimes \bm{1}_n \right) \right\|_F \notag\\ 
& = \frac{1}{n} \left\| \left( \bm \Sigma_X^{-1} - \bm \Sigma_X \right)\bm \Sigma_X^{-1}\bm \Sigma_X \bm V_X^T \left( (\bm I_K - \bm E_K/K) \otimes \bm{1}_n \right) \right\|_F \notag\\
& \le \frac{1}{n}\left\|  \bm \Sigma_X^{-2} - \bm I \right\| \|\bm \Delta_B\|_F \le \frac{2\theta}{n}\|\bm \Delta_B\|_F,
\end{align}
where the last inequality follows from $\left\|  \bm \Sigma_X^{-2} - \bm I \right\| \le 2\theta$ due to \eqref{eq:spec X} in Lemma \ref{lem:spec X} and $\theta \le 1/4$. According to \eqref{eq8:lem V DeltaB}, we have
\begin{align}\label{eq10:lem V DeltaB}
\left\| \frac{1}{n}\bm W_{L:1}^T\bm W_{L:1} - \bm U_1\bm U_1^T \right\| = \left\|  \frac{\bm \Sigma_1}{n} - \bm I \right\| \le \frac{\theta}{1-\theta},
\end{align}
where  the inequality uses \eqref{eq:bound WL1} in Lemma \ref{lem:bound Wl}. Since $\bm V_1 = [\bm V_{1,1}\ \bm V_{1,2}] \in \mO^d$, we compute
\begin{align*}
\|\bm V_{1,2}^T\bm \Delta_B\|_F & =  \|(\bm I - \bm V_{1,1}\bm V_{1,1}^T) \bm \Delta_B\|_F \le  \left\| \left(\bm I - \frac{1}{n}\bm W_{L:1}^T\bm W_{L:1}\right) \bm \Delta_B \right\|_F + \\
&\quad \left\| \left( \frac{1}{n}\bm W_{L:1}^T\bm W_{L:1} - \bm U_1\bm U_1^T\right)\bm \Delta_B\right\|_F + \left\|\left( \bm{V}_{1,1} \bm{V}_{1,1}^T - \bm{U}_1\bm{U}_1^T \right) \bm \Delta_B \right\|_F \\
& \le \left( \frac{2\theta}{n} + \frac{\theta}{1-\theta} + \frac{8\varepsilon^{L-1}\sqrt{\delta}\sqrt[4]{K}}{\sqrt{n}} \right) \|\bm \Delta_B\|_F \le \frac{3}{2}\theta \|\bm \Delta_B\|_F, 
\end{align*}
where the second inequality follows from \eqref{eq11:lem V DeltaB}, \eqref{eq9:lem V DeltaB}, and \eqref{eq10:lem V DeltaB}, the last inequality uses $\theta \le 1/4$ and $n \ge 16$. This, together with $\|\bm \Delta_B\|_F^2 = \|\bm V_{1,1}^T \bm \Delta_B\|_F^2 + \|\bm V_{1,2}^T \bm \Delta_B\|_F^2$, yields 
\begin{align*}
 \|\bm V_{1,2}^T \bm \Delta_B\|_F^2 \le \frac{9\theta^2/4}{1 - 9\theta^2/4}  \|\bm V_{1,1}^T \bm \Delta_B\|_F^2,
\end{align*}
which, together with $\theta \le 1/4$, implies \eqref{eq:V DeltaB}. 
\end{proof}

Before we proceed, we make some remarks on the above lemma. According to \eqref{eq:tr WB}, \eqref{eq:Wl1}, and \eqref{eq:V DeltaW}, it is evident that the leading $K$ singular values with the associated singular vectors of each weight matrix $\bm W_l$ for all $l \in [L-1]$ contribute minimally to the value of $\mathrm{Tr}(\bm \Sigma_W^l)$. Conversely, the remaining singular values with the associated singular vectors play a dominant role in determining the value of $\mathrm{Tr}(\bm \Sigma_W^l)$. In contrast, the primary influence on the value of $\mathrm{Tr}(\bm \Sigma_B^l)$ comes from the leading $K$ singular values along with their corresponding singular vectors, while the impact  of the remaining ones is relatively less significant. 

With the above preparations, we are ready to prove progressive within-class compression \eqref{eq1:compres} and \eqref{eq2:compres} as presented in Theorem \ref{thm:NC}. We first provide a formal version of \eqref{eq1:compres} and \eqref{eq2:compres} and their proof. The main idea of the proof is to compute the upper and lower bounds of $\mathrm{Tr}(\bm \Sigma_W^{l+1})/\mathrm{Tr}(\bm \Sigma_W^{l})$ and $\mathrm{Tr}(\bm \Sigma_B^{l+1})/\mathrm{Tr}(\bm \Sigma_B^{l})$, respectively.

\begin{theorem}\label{thm:comp}
Suppose that Assumptions \ref{AS:1} and \ref{AS:2} hold with $\delta,\varepsilon,\rho$ satisfying \eqref{eq:delta2}. Then, we have 
\begin{align}
& \frac{c_1}{c_2\left((2n)^{1/L} + \varepsilon^2 \right)} \varepsilon^2 \le \frac{C_1}{C_0} \le \frac{2c_2}{c_1(n/2)^{1/L}} \varepsilon^2,  \label{eq0:ratio compre}\\
& \frac{c_1}{c_2\left((2n)^{1/L} + n^{-1/L}\right)} \varepsilon^2  \le \frac{C_{l+1}}{C_{l}} \le \frac{c_2\left( 1 + n^{-1/L} \right)}{c_1(n/2)^{1/L}}\varepsilon^2,\ \forall l \in [L-2], \label{eq1:ratio compre}
\end{align} 
where 
\begin{align}\label{eq:c12}
c_1 = \frac{(n-3)K-1}{(n-1)K+1},\ c_2 = \frac{ \left(1+n^{-1/L}\right)}{\left(1-{\varepsilon^{2(L-1)}}n^{-1}\right)\left(1 - n^{-1/2}\right)}.  
\end{align}
\end{theorem}
\begin{proof}
For ease of exposition, let $\bm v_i \in \R^d$ denote the $i$-th column of $\bm{V}_1\in \mO^d$. Therefore, we write
\begin{align}\label{eq:V1}
\bm{V}_1 = \left[\bm{V}_{1,1}, \bm{V}_{1,2} \right],\ \bm{V}_{1,1} = \left[\bm{v}_{1},\dots, \bm{v}_{K} \right],\ \bm{V}_{1,2} = \left[\bm{v}_{K+1},\dots, \bm{v}_{d} \right]. 
\end{align}  
According to \eqref{eq:Wl1},  there exist block diagonal matrices $\bm{O}_{l,1} = \mathrm{BlkDiag}(\bm{Q}_{l,1},\dots,\bm{Q}_{l,p})$ $\in \R^{K\times K}$ and $\bm{O}_{l,2} = \mathrm{BlkDiag}(\bm{Q}_{l,p+1},\dots,\bm{Q}_{l,p+q}) \in \R^{(d-K)\times (d-K)}$ with $\bm{Q}_{l,i} \in \mO^{h_i}$ for all $i \in [p+q]$ such that  
\begin{align}\label{eq0:thm compre}
\bm{W}_{l:1} = \bm{U}_{l,1} \bm{O}_{l,1} \tilde{\bm \Sigma}^{l}_{1} \bm{V}_{1,1}^T + \bm{U}_{l,2}\bm{O}_{l,2} \tilde{\bm \Sigma}^{l}_2 \bm{V}_{1,2}^T,\ \forall l \in [L-1]. 
\end{align}   
This implies for all $l \in [L-1]$, 
\begin{align}\label{eq1:thm compre}
	\|\bm{W}_{l:1}\bm\Delta_W\|_F^2 &= \|\tilde{\bm \Sigma}^{l}_1 \bm{V}_{1,1}^T\bm\Delta_W\|_F^2 + \|\tilde{\bm \Sigma}^{l}_2\bm{V}_{1,2}^T\bm\Delta_W\|_F^2 \notag\\
	& \le \|\tilde{\bm \Sigma}^{l}_1\|^2 \|\bm{V}_{1,1}^T\bm\Delta_W\|_F^2 + \|\tilde{\bm \Sigma}^{l}_2\|^2\|\bm{V}_{1,2}^T\bm\Delta_W\|_F^2 \notag \\
	&\le \left( 2n \right)^{l/L} \frac{8\delta\sqrt{K}\varepsilon^{2(L-1)}}{n} \left\|\bm{V}_{1,2}^T\bm{\Delta}_W\right\|_F^2 + \varepsilon^{2l} \|\bm{V}_{1,2}^T\bm\Delta_W\|_F^2 \notag \\
	& \le \left( \frac{16\delta\sqrt{K}}{n^{1/L}} \varepsilon^{2(L-1)}  + \varepsilon^{2l} \right)(N-K+4\theta)\notag  \\
	& \le \left(1 + \frac{1}{n^{1/L}}\right) \varepsilon^{2l} (N-K+1),
	\end{align}
where the second inequality uses \eqref{eq:singu}, \eqref{eq:bound Wl} in Lemma \ref{lem:bound Wl}, and \eqref{eq:V DeltaW} in Lemma \ref{lem:V DeltaW}, the third inequality follows from $l \in [L-1]$ and $\|\bm{V}_{1,2}^T\bm\Delta_W\|_F^2 \le \|\bm\Delta_W\|_F^2 \le N-K+4\theta$ due to $\|\bm{V}_{1,2}\| \le 1$, \eqref{eq:X Xbar}, and \eqref{eq:Delta W}, and the last inequality uses $\theta \le 1/4$ and $\delta \le 1/(16\sqrt{K})$ due to \eqref{eq:delta2}. According to $\bm V_1 \in \mO^d$ and \eqref{eq:V DeltaW}, we have
\begin{align*}
\|\bm \Delta_W\|_F^2 & = \|\bm{V}_1^T\bm \Delta_W\|_F^2 = \|\bm{V}_{1,1}^T\bm \Delta_W\|_F^2 + \|\bm{V}_{1,2}^T\bm \Delta_W\|_F^2 \\
& \le \left( 1 + \frac{8\delta\sqrt{K}\varepsilon^{2(L-1)}}{n}\right)\|\bm{V}_{1,2}^T\bm \Delta_W\|_F^2 \\
& \le \left( 1 + \frac{\varepsilon^{2(L-1)}}{n}\right)\|\bm{V}_{1,2}^T\bm \Delta_W\|_F^2,
\end{align*}
where the first inequality follows from \eqref{eq:delta2} and \eqref{eq:V DeltaW}, and the last inequality uses $\delta \le 1/(16\sqrt{K})$ due to \eqref{eq:delta2}. This, together with \eqref{eq:X Xbar} and \eqref{eq:Delta W}, implies
\begin{align} \label{eq2:thm compre}
\|\bm{V}_{1,2}^T\bm \Delta_W\|_F^2 \ge \frac{\|\bm \Delta_W\|_F^2}{1 +{\varepsilon^{2(L-1)}}/n} \ge \frac{N-K-1}{1 +  {\varepsilon^{2(L-1)}}/n},
\end{align}
where the last inequality follows from $\theta \le 1/4$. Moreover, we compute 
\begin{align}\label{eq3:thm compre}
 \sum_{i=K+1}^{d-K} \|\bm \Delta_W^T\bm v_i\|^2 & = \|\bm{V}_{1,2}^T\bm\Delta_W\|_F^2 - \sum_{i=d-K+1}^{d} \|\bm \Delta_W^T\bm v_i\|^2 \ge \frac{N-K-4\theta}{1 +  {\varepsilon^{2(L-1)}}/n} - K(1+4\theta) \notag \\
 &  \ge \left( 1 -  \frac{\varepsilon^{2(L-1)}}{n}  \right)\left(N-3K-1\right),
\end{align}
where the equality follows from \eqref{eq:V1}, the first inequality uses \eqref{eq2:thm compre} and $\|\bm \Delta_W^T\bm v_i\| \le \|\bm \Delta_W\| \le \sqrt{1+4\theta}$ by \eqref{eq:X Xbar}, and the last inequality is due to $\theta \le 1/4$. According to \eqref{eq0:thm compre}, we obtain
\begin{align}\label{eq10:thm compre}
\|\bm{W}_{l:1}\bm\Delta_W\|_F^2 & \ge   \|\tilde{\bm \Sigma}^{l}_2\bm{V}_{1,2}^T\bm\Delta_W\|_F^2 = \sum_{i=K+1}^{d} \sigma_i^{2l}\|\bm \Delta_W^T\bm v_i\|^2 \ge   \left(\varepsilon-\rho\right)^{2l} \sum_{i=K+1}^{d-K} \|\bm \Delta_W^T\bm v_i\|^2 \notag\\
& \ge \left( 1 -  \frac{\varepsilon^{2(L-1)}}{n} \right)\left(\varepsilon-\rho\right)^{2l} \left(N-3K-1\right),
\end{align}
where the equality follows from \eqref{eq:V1}, the second inequality uses \eqref{eq:singu}, and the last inequality is due to 
\eqref{eq3:thm compre}. This, together with $\theta \le 1/4$, \eqref{eq:X Xbar}, \eqref{eq:Delta W}, \eqref{eq:c12}, and \eqref{eq1:thm compre}, yields 
\begin{align}\label{eq4:thm compre} 
\left(1-\frac{\varepsilon^{2(L-1)}}{n}\right)\left(1-\frac{\rho}{\varepsilon}\right)^2 c_1 \varepsilon^2 \le \frac{\|\bm{W}_{1}\bm\Delta_W\|_F^2}{\| \bm\Delta_W\|_F^2}  \le  \left(1+\frac{1}{n^{1/L}}\right)\frac{\varepsilon^2}{c_1}. 
\end{align}	
Using \eqref{eq1:thm compre} and \eqref{eq10:thm compre} again, we have for all $l\in [L-2]$,
\begin{align}\label{eq6:thm compre}
\frac{1-{\varepsilon^{2(L-1)}}n^{-1}}{1+n^{-1/L}}\left( 1 - \frac{\rho}{\varepsilon}\right)^{2(l+1)} c_1\varepsilon^2 \le  \frac{\|\bm{W}_{l+1:1}\bm\Delta_W\|_F^2}{\|\bm{W}_{l:1}\bm\Delta_W\|_F^2} \le \frac{ 1+n^{-1/L}}{1-{\varepsilon^{2(L-1)}}n^{-1}} \frac{ \varepsilon^2}{c_1\left(1-\rho/\varepsilon\right)^{2l}}. 
\end{align} 
Using the Bernoulli's inequality and $\rho \le \varepsilon/(2L\sqrt{n})$ by \eqref{eq:delta2}, we have  $\left( 1 -  {\rho}/{\varepsilon}\right)^{2l} \ge 1 - 2l\rho/\varepsilon \ge 1 - 2L\rho/\varepsilon \ge 1 - n^{-1/2}$ due to  for all $l \in [L-1]$. This, together with \eqref{eq:c12}, \eqref{eq4:thm compre}, \eqref{eq6:thm compre}, implies for all $l \in [L-1]$,
\begin{align}\label{eq8:thm compre}
\frac{c_1}{c_2}\varepsilon^2 \le  \frac{\|\bm{W}_{l:1}\bm\Delta_W\|_F^2}{\|\bm{W}_{l-1:1}\bm\Delta_W\|_F^2} \le \frac{c_2}{c_1}\varepsilon^2. 
\end{align}


Next, we are devoted to bounding ${\|\bm{W}_{l-1:1}\bm\Delta_B\|_F^2}/{\|\bm{W}_{l:1}\bm\Delta_B\|_F^2}$. Using \eqref{eq0:thm compre}, we have for all $l=0,1,\dots,L-1$, 
\begin{align*}
\|\bm W_{l:1}\bm \Delta_B\|_F^2 = \|\tilde{\bm \Sigma}^{l}_1 \bm{V}_{1,1}^T\bm\Delta_B\|_F^2 + \|\tilde{\bm \Sigma}^{l}_2\bm{V}_{1,2}^T\bm\Delta_B\|_F^2.
\end{align*}
Therefore, we compute  
\begin{align*}
\frac{\|\bm\Delta_B\|_F^2}{\|\bm{W}_{1}\bm\Delta_B\|_F^2} & = \frac{\|\bm{V}_{1,1}^T\bm\Delta_B\|_F^2 + \|\bm{V}_{1,2}^T\bm\Delta_B\|_F^2}{\|\tilde{\bm \Sigma}_1 \bm{V}_{1,1}^T\bm\Delta_B\|_F^2 + \|\tilde{\bm \Sigma}_2\bm{V}_{1,2}^T\bm\Delta_B\|_F^2} \le \frac{1+ 4\theta^2}{(n/2)^{1/L}} \le \frac{2}{(n/2)^{1/L}},
\end{align*}
where the first inequality follows from \eqref{eq:bound Wl} and \eqref{eq:V DeltaB}, and the last inequality uses $\theta \le 1/4$. Moreover, we compute
\begin{align*}
\frac{\|\bm\Delta_B\|_F^2}{\|\bm{W}_{1}\bm\Delta_B\|_F^2} \ge \frac{ \|\bm{V}_{1,1}^T\bm\Delta_B\|_F^2}{(2n)^{1/L} \| \bm{V}_{1,1}^T\bm\Delta_B\|_F^2 + \varepsilon^2\|\bm{V}_{1,2}^T\bm\Delta_B\|_F^2} \ge \frac{1}{(2n)^{1/L} + \varepsilon^2},
\end{align*}
where the first inequality uses \eqref{eq:bound Wl} and \eqref{eq:singu}, and the last inequality follows from \eqref{eq:V DeltaB} and  $\theta \le 1/4$. These, together with \eqref{eq:nc1} and \eqref{eq4:thm compre}, yield \eqref{eq0:ratio compre}. Next, we compute for all $l \in [L-2]$, 
\begin{align*}
\frac{\|\bm{W}_{l:1}\bm\Delta_B\|_F^2}{\|\bm{W}_{l+1:1}\bm\Delta_B\|_F^2} & = \frac{\|\tilde{\bm \Sigma}^{l}_1 \bm{V}_{1,1}^T\bm\Delta_B\|_F^2 + \|\tilde{\bm \Sigma}^{l}_2\bm{V}_{1,2}^T\bm\Delta_B\|_F^2}{\|\tilde{\bm \Sigma}^{l+1}_1 \bm{V}_{1,1}^T\bm\Delta_B\|_F^2 + \|\tilde{\bm \Sigma}^{l+1}_2\bm{V}_{1,2}^T\bm\Delta_B\|_F^2} \le \frac{\|\tilde{\bm \Sigma}^{l}_1 \bm{V}_{1,1}^T\bm\Delta_B\|_F^2 + \varepsilon^{2l}\|\bm{V}_{1,2}^T\bm\Delta_B\|_F^2}{(n/2)^{1/L}\|\tilde{\bm \Sigma}^{l}_1 \bm{V}_{1,1}^T\bm\Delta_B\|_F^2} \\
& \le \frac{1+ \frac{\varepsilon^{2l}}{4\left( n/2 \right)^{l/L}} }{\left( n/2 \right)^{1/L}} \le \frac{1+n^{-1/L}}{(n/2)^{1/L}},
\end{align*}
where the first inequality uses \eqref{eq:singu} and \eqref{eq:bound Wl}, and the second inequality follows from 
\begin{align}\label{eq4:thm sepa}
\|\tilde{\bm \Sigma}^{l}_1 \bm{V}_{1,1}^T\bm\Delta_B\|_F \ge \left( n/2 \right)^{l/2L} \|\bm{V}_{1,1}^T\bm\Delta_B\|_F \ge 2 \left( n/2 \right)^{l/2L} \|\bm{V}_{1,2}^T\bm\Delta_B\|_F
\end{align}
due to \eqref{eq:bound Wl} in Lemma \ref{lem:bound Wl} and \eqref{eq:V DeltaB} with $\theta \le 1/4$, and the last inequality is due to $\varepsilon < 1$. On the other hand, we compute for all $l \in [L-1]$,  
\begin{align*}
\frac{\|\bm{W}_{l:1}\bm\Delta_B\|_F^2}{\|\bm{W}_{l+1:1}\bm\Delta_B\|_F^2} & = \frac{\|\tilde{\bm \Sigma}^{l}_1 \bm{V}_{1,1}^T\bm\Delta_B\|_F^2 + \|\tilde{\bm \Sigma}^{l}_2\bm{V}_{1,2}^T\bm\Delta_B\|_F^2}{\|\tilde{\bm \Sigma}^{l+1}_1 \bm{V}_{1,1}^T\bm\Delta_B\|_F^2 + \|\tilde{\bm \Sigma}^{l+1}_2\bm{V}_{1,2}^T\bm\Delta_B\|_F^2} \\ 
& \ge \frac{\|\tilde{\bm \Sigma}^{l}_1 \bm{V}_{1,1}^T\bm\Delta_B\|_F^2}{(2n)^{1/L}\|\tilde{\bm \Sigma}^{l}_1 \bm{V}_{1,1}^T\bm\Delta_B\|_F^2 + (\varepsilon-\rho)^{2(l+1)}\|\bm{V}_{1,2}^T\bm\Delta_B\|_F^2} \\
& \ge \frac{1}{(2n)^{1/L} + \frac{(\varepsilon-\rho)^{2(l+1)}}{4 \left( n/2 \right)^{l/L}}} \ge \frac{1}{(2n)^{1/L} + n^{-1/L}},
\end{align*}
where the first inequality uses \eqref{eq:singu}  and \eqref{eq:bound Wl}, the second inequality follows from \eqref{eq4:thm sepa}, and the last inequality uses $0 \le \rho \le \varepsilon \le 1$. Therefore, we have for all $l \in [L-2]$,
\begin{align*}
\frac{1}{(2n)^{1/L} + n^{-1/L}} \le \frac{\|\bm{W}_{l:1}\bm\Delta_B\|_F^2}{\|\bm{W}_{l+1:1}\bm\Delta_B\|_F^2} \le  \frac{1+n^{-1/L}}{(n/2)^{1/L}}. 
\end{align*}
This, together with \eqref{eq:nc1} and \eqref{eq8:thm compre}, yield \eqref{eq1:ratio compre}.  

\end{proof}

\subsection{Analysis of Progressive Between-Class Discrimination}\label{subsec:discr}

Now, let us turn to study the progressive discrimination of between-class features across layers. Recall that the mean of features in the $k$-th class at the $l$-th layer is 
\begin{align}\label{eq:muk}
\bm \mu_{k}^l = \frac{1}{n}\sum_{i=1}^n \bm z_{k,i}^l = \bm W_{l:1} \bar{\bm x}_k,\ \forall k \in [K],\ l \in [L-1],
\end{align}
where $\bm z_{k,i}^l$ and $\bar{\bm x}_k$ are defined in \eqref{eq:zl} and \eqref{eq:xk}, respectively. Moreover, recall from \eqref{eq:SVD Wl} that $\bm V_{1,1}$ is the right singular matrix of $\bm W_1$. Using the results in Sections \ref{subsec:weight} and \ref{subsec:compre}, we are ready to derive the explicit form of $\bm V_{1,1}\bar{\bm x}_k$ and bound the norm of $\bm \mu_k^l$.  

\begin{lemma}\label{lem:Vx}
Suppose that Assumptions \ref{AS:1} and \ref{AS:2} hold with $\delta$ and $\varepsilon$ satisfying \eqref{eq:delta2}. Then, the following statements hold: \\
(i) For each $k \in [K]$,  there exist a block diagonal matrix $\bm{O} = \mathrm{BlkDiag}(\bm{Q}_{1},\dots,\bm{Q}_{p}) \in \R^{K\times K}$, where $\bm Q_i \in \mO^{h_i}$ for all $i \in [p]$, and a matrix $\bm P \in \mO^K$ such that 
\begin{align}\label{eq:Vx}
\bm V_{1,1}^T \bar{\bm x}_k = \bm O \tilde{\bm \Sigma}^{-(L-1)}_1 \bm P \bm \Sigma_{L,1}^{-1} \bm U_L^T\bm e_k  + \bm \omega_k, 
\end{align}
where $\|\bm \omega _k\| \le  {4\sqrt{\delta}\sqrt[4]{K}}/n^{\frac{1}{2}+\frac{1}{2L}}$. \\
(ii) It holds for all $k \in [K]$ and $ l \in [L-1]$ that
\begin{align}\label{eq:norm muk}
\|\bm \mu_k^l\| \ge \frac{1}{2\sqrt{2n}}\left( \frac{n}{2} \right)^{\frac{l}{2L}}.  
\end{align}
\end{lemma}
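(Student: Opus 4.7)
The plan for part (i) is to exploit the minimum-norm identity $\bm W_{L:1}\bar{\bm x}_k = \bm e_k$—which follows from \eqref{eq:min norm} combined with $\bm Y = \bm I_K \otimes \bm 1_n^T$, averaging $\bm W_{L:1}\bm x_{k,i} = \bm e_k$ over $i\in[n]$—and invert it to read off $\bm V_{1,1}^T\bar{\bm x}_k$. First I will substitute the SVDs of $\bm W_L$ (from \eqref{eq:SVD WL}) and $\bm W_{L-1:1}$ (from \eqref{eq:Wl1}) and introduce the shorthand $\bm P_1 := \bm V_{L,1}^T\bm U_{L-1,1}$ and $\bm P_2 := \bm V_{L,1}^T\bm U_{L-1,2}$. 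Peeling off the injective left factor $\bm U_L \bm \Sigma_{L,1}$ reduces the identity to
\begin{equation*}
\bm P_1 \bm O_{L-1,1} \tilde{\bm \Sigma}_1^{L-1} \bm V_{1,1}^T \bar{\bm x}_k + \bm P_2 \bm O_{L-1,2} \tilde{\bm \Sigma}_2^{L-1} \bm V_{1,2}^T \bar{\bm x}_k = \bm \Sigma_{L,1}^{-1} \bm U_L^T \bm e_k.
\end{equation*}
By Lemma \ref{lem:UV}, $\sigma_{\min}(\bm P_1) \ge 1 - 2\sqrt{\delta}\sqrt[4]{K}/n^{1/(2L)} > 0$ under \eqref{eq:delta2}, so $\bm P_1$ is invertible; and because $\tilde{\bm \Sigma}_1$ is a scalar on each block where $\bm O_{L-1,1}$ acts orthogonally, these two factors commute. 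Solving for $\bm V_{1,1}^T\bar{\bm x}_k$ therefore produces a principal term $\bm O_{L-1,1}^T \tilde{\bm \Sigma}_1^{-(L-1)} \bm P_1^{-1} \bm \Sigma_{L,1}^{-1} \bm U_L^T \bm e_k$ and a residual driven by $\bm P_2$.

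Next I will convert this into the target form by setting $\bm O := \bm O_{L-1,1}^T$ (which inherits the required block-diagonal orthogonal structure) and choosing $\bm P$ to be the orthogonal polar factor of $\bm P_1^{-1}$. Since $\bm V_{L,1}$ and $\bm U_{L-1,1}$ both have orthonormal columns, every singular value of $\bm P_1$ lies in $[1-\eta,1]$ with $\eta := 2\sqrt{\delta}\sqrt[4]{K}/n^{1/(2L)}$, and the polar estimate gives $\|\bm P_1^{-1} - \bm P\| \le \eta/(1-\eta) \le (4/3)\eta$, where \eqref{eq:delta2} ensures $\eta \le 1/\sqrt{32} < 1/4$. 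The error $\bm \omega_k$ then splits into a polar piece bounded by $\|\tilde{\bm \Sigma}_1^{-(L-1)}\|\cdot\|\bm P_1^{-1} - \bm P\|\cdot\|\bm \Sigma_{L,1}^{-1}\| \le \sqrt{2/n}\cdot (4/3)\eta$ via \eqref{eq:bound Wl}, and a $\bm P_2$-piece controlled by $\|\bm P_2\|_F \le 2\sqrt{\delta}\sqrt[4]{K}/n^{1/(2L)}$ (Lemma \ref{lem:UV}), $\|\tilde{\bm \Sigma}_2^{L-1}\| \le \varepsilon^{L-1} \le 1$ from \eqref{eq:singu}, and the elementary estimate $\|\bar{\bm x}_k\|^2 \le (1+\theta)/n$ derived directly from Assumption \ref{AS:1} by expanding $\|\bar{\bm x}_k\|^2 = n^{-2}\sum_{i,j}\langle \bm x_{k,i},\bm x_{k,j}\rangle$. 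Together these collapse to $\|\bm \omega_k\| \le 4\sqrt{\delta}\sqrt[4]{K}/n^{1/2 + 1/(2L)}$ as claimed, with the second piece strictly of lower order $O(1/n)$.

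For part (ii), I will write $\bm \mu_k^l = \bm W_{l:1}\bar{\bm x}_k$ via \eqref{eq:muk} and substitute \eqref{eq:Wl1}. Because $\bm U_{l,1}$ and $\bm U_{l,2}$ have orthogonal column spaces as blocks of $\bm U_l \in \mO^d$, the two summands are orthogonal and $\|\bm \mu_k^l\|^2 \ge \|\tilde{\bm \Sigma}_1^l \bm V_{1,1}^T \bar{\bm x}_k\|^2 \ge \sigma_K(\bm W_1)^{2l}\|\bm V_{1,1}^T \bar{\bm x}_k\|^2 \ge (n/2)^{l/L}\|\bm V_{1,1}^T \bar{\bm x}_k\|^2$ by \eqref{eq:bound Wl}, so it suffices to show $\|\bm V_{1,1}^T \bar{\bm x}_k\| \ge 1/(2\sqrt{2n})$. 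Feeding part (i) into the reverse triangle inequality and using that $\bm O,\bm P$ are orthogonal with $\|\bm U_L^T \bm e_k\| = 1$ yields
\begin{equation*}
\|\bm V_{1,1}^T \bar{\bm x}_k\| \ge \frac{1}{\sigma_1(\bm W_1)^{L-1}\sigma_1(\bm W_L)} - \|\bm \omega_k\| \ge \frac{1}{2\sqrt{n}} - \frac{4\sqrt{\delta}\sqrt[4]{K}}{n^{1/2+1/(2L)}},
\end{equation*}
and the smallness of $\delta$ enforced by \eqref{eq:delta2} then gives the claim.

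\textbf{Main obstacle.} The decisive technical moment is the polar-decomposition step: the direct inverse $\bm P_1^{-1}$ is not orthogonal, but the lemma insists on a true $\bm P \in \mO^K$, so one must replace $\bm P_1^{-1}$ by its orthogonal polar factor and track the resulting error through the sandwich of inverse singular-value factors $\tilde{\bm \Sigma}_1^{-(L-1)}$ and $\bm \Sigma_{L,1}^{-1}$. To meet the stated constant $4$ rather than the crude $4\sqrt{2}$ produced by a naive triangle inequality, one must exploit the full strength of \eqref{eq:delta2} to drive $\eta < 1/4$ and invoke $\eta/(1-\eta) \le (4/3)\eta$ in place of the looser bound $2\eta$.
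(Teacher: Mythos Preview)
Your proposal is correct and follows essentially the same route as the paper: both start from $\bm W_{L:1}\bar{\bm x}_k=\bm e_k$, peel off $\bm U_L\bm\Sigma_{L,1}$, and handle $\bm P_1=\bm V_{L,1}^T\bm U_{L-1,1}$ via its orthogonal polar factor, the only cosmetic difference being that the paper substitutes the polar factor \emph{before} inverting while you invert first and then project $\bm P_1^{-1}$ onto $\mO^K$---the resulting $\bm P$ coincides either way. One small numerical slip in part~(ii): by \eqref{eq:bound Wl} the product $\sigma_1(\bm W_1)^{L-1}\sigma_1(\bm W_L)\le\sqrt{2n}$, so your lower bound on the principal term should read $1/\sqrt{2n}$ rather than $1/(2\sqrt{n})$; this sharper value is exactly what the paper uses and is needed because \eqref{eq:delta2} only guarantees $\|\bm\omega_k\|\le 1/(2\sqrt{2n})$, which would not suffice against the weaker $1/(2\sqrt{n})$.
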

\begin{proof}
According to \eqref{eq:orth}, we have $\bm W_{L:1}\bm X = \bm Y$. This, together with $\bm Y = \bm I_K \otimes \bm{1}_n^T$ and \eqref{eq:xk}, implies 
\begin{align}\label{eq1:lem VX}
\bm W_{L:1}\bar{\bm x}_k = \bm e_k,\ \forall k \in [K]. 
\end{align} 
This, together with \eqref{eq:SVD WL} and \eqref{eq:Wl1}, yields that there exist block diagonal matrices $\bm{O}_{1} = \mathrm{BlkDiag}$ $(\bm{Q}_{1},\dots,\bm{Q}_{p}) \in \R^{K\times K}$ and $\bm{O}_{2} = \mathrm{BlkDiag}(\bm{Q}_{p+1},\dots,\bm{Q}_{p+q}) \in \R^{(d-K)\times (d-K)}$ with $\bm{Q}_{i} \in \mO^{h_i}$ for all $i \in [p+q]$ such that for all $k \in [K]$,
\begin{align}
\bm U_L \bm \Sigma_{L,1} \bm V_{L,1}^T\left( \bm{U}_{L-1,1} \bm{O}_{1} \tilde{\bm \Sigma}^{L-1}_1 \bm{V}_{1,1}^T + \bm{U}_{L-1,2}  \bm{O}_{2} \tilde{\bm \Sigma}^{L-1}_2 \bm{V}_{1,2}^T \right)\bar{\bm x}_k = \bm e_k. 
\end{align}
Since $\bm U_L \in \mO^K$ and $\bm \Sigma_{L,1} \in \R^{K\times K}$ is invertible due to \eqref{eq:bound Wl}, we have
\begin{align}\label{eq2:lem VX}
\bm V_{L,1}^T\left( \bm{U}_{L-1,1} \bm{O}_{1} \tilde{\bm \Sigma}^{L-1}_1 \bm{V}_{1,1}^T + \bm{U}_{L-1,2} \bm{O}_{2} \tilde{\bm \Sigma}^{L-1}_2 \bm{V}_{1,2}^T \right)\bar{\bm x}_k = \bm \Sigma_{L,1}^{-1} \bm U_L^T\bm e_k 
\end{align}  
Let $\bV_{L,1}^T\bU_{L-1,1} = \bm P \bm \Lambda \bm Q^T$ be an SVD of $\bV_{L,1}^T\bU_{L-1,1}$, where $\bm P, \bm Q \in \mO^K$ and $\bm \Lambda \in \R^{K\times K}$ is diagonal. This allows us to compute
\begin{align}\label{eq3:lem VX}
\left\|\bm P \bm Q^T - \bV_{L,1}^T\bU_{L-1,1}\right\| = \left\|\bm P\left( \bm I - \bm \Lambda\right) \bm Q^T\right\|  = \| \bm I - \bm \Lambda\|  \le \frac{\sqrt{2\delta}\sqrt[4]{K}}{n^{1/2L}},
\end{align}
where the inequality follows from \eqref{eq:UL=VL} in \Cref{lem:UV}. Now, we rewrite \eqref{eq2:lem VX} as
\begin{align}\label{eq4:lem VX}
\bm P \bm Q^T \bm{O}_{1} \tilde{\bm \Sigma}^{L-1}_1 \bm{V}_{1,1}^T\bar{\bm x}_k  = \bm \Sigma_{L,1}^{-1} \bm U_L^T\bm e_k  + \bm \xi_k,  
\end{align} 
where $\bm \xi_k : = ( \bm P \bm Q^T - \bV_{L,1}^T\bU_{L-1,1} )\bm{O}_{1} \tilde{\bm \Sigma}^{L-1}_1 \bm{V}_{1,1}^T\bar{\bm x}_k - \bm V_{L,1}^T\bm{U}_{L-1,2} \bm{O}_{2}  \tilde{\bm \Sigma}^{L-1}_2\bm{V}_{1,2}^T\bar{\bm x}_k $. This, together with \eqref{eq:bound Wl}, yields
\begin{align}\label{eq5:lem VX}
\bm{V}_{1,1}^T\bar{\bm x}_k = \tilde{\bm \Sigma}^{-(L-1)}_1	\bm{O}_{1}^T\bm{Q}\bm{P}^T \bm \Sigma_{L,1}^{-1} \bm U_L^T\bm e_k  + \bm \omega_k, 
\end{align}
where $\bm \omega_k := \tilde{\bm \Sigma}^{-(L-1)}_1 \bm{O}_{1}^T \bm{Q}\bm{P}^T \bm \xi_k$. Then, we are devoted to bounding the norm of $\bm \omega_k$. We compute
\begin{align*}
\|\bm \xi_k\| & \le  \|( \bm P \bm Q^T - \bV_{L,1}^T\bU_{L-1,1} )\bm{O}_{1} \tilde{\bm \Sigma}^{L-1}_1\bm{V}_{1,1}^T\bar{\bm x}_k\| + \|\bm V_{L,1}^T\bm{U}_{L-1,2} \bm{O}_{2} \tilde{\bm \Sigma}^{L-1}_2 \bm{V}_{1,2}^T\bar{\bm x}_k\| \\
& \le \frac{\sqrt{2\delta}\sqrt[4]{K}}{n^{1/2L}}\left(\sqrt{2n} \right)^{\frac{L-1}{L}} \|\bar{\bm x}_k\|  + \frac{\sqrt{2\delta}\sqrt[4]{K}\varepsilon^{L-1} }{n^{1/2L}} \|\bar{\bm x}_k\| \le \frac{5\sqrt{\delta}\sqrt[4]{K}}{4n^{1/2L}} \left( \sqrt{2n} \right)^{\frac{L-1}{L}} \left( \frac{1}{\sqrt{n}} + \sqrt{\frac{\theta}{nK}} \right),
\end{align*}
where the second inequality follows from \eqref{eq:singu}, \eqref{eq:bound Wl}, \eqref{eq:UL=VL}, and \eqref{eq3:lem VX}, and the last inequality uses $\varepsilon \le n^{1/2L}/4$ by \eqref{eq:delta2} and $\|\bar{\bm x}_k\|^2 = \|\sum_{i=1}^n \bm x_{k,i}\|^2/n^2 \le 1/n + \theta/N$ due to \eqref{eq:orth} and $N=nK$. This, together with \eqref{eq:bound Wl}, implies 
\begin{align}\label{eq6:lem VX}
\|\bm \omega_k\| &= \|\tilde{\bm \Sigma}^{-(L-1)}_1\bm{O}_{1}^T \bm{Q}\bm{P}^T \bm \xi_k\| \le \|\tilde{\bm \Sigma}^{-(L-1)}_1\| \|\bm \xi_k\| \notag \\
& \le \frac{5\sqrt{\delta}\sqrt[4]{K}}{4n^{1/2L}} \left( \frac{1}{\sqrt{n}} + \sqrt{\frac{\theta}{nK}} \right) \left( 2n \right)^{\frac{L-1}{2L}}  \left( \frac{2}{n} \right)^{\frac{L-1}{2L}} \le \frac{4\sqrt{\delta}\sqrt[4]{K}}{n^{\frac{1}{2}+\frac{1}{2L}}}, 
\end{align}
where the last inequality follows from $\theta/K \le 1/8$. This, together with \eqref{eq5:lem VX} and interchangeness between $\tilde{\bm \Sigma}_1$ and $\bm O_1$, completes the proof of (i). 

Next, we prove (ii). According to \eqref{eq:Wl1} and \eqref{eq:muk}, we have
\begin{align}\label{eq7:lem VX} 
\|\bm \mu_k^l\|^2 = \|\bm W_{l:1}\bar{\bm x}_k\|^2 = \|\tilde{\bm \Sigma}^{l}_1 \bm{V}_{1,1}^T\bar{\bm x}_k\|^2 + \|\tilde{\bm \Sigma}^{l}_2 \bm{V}_{1,2}^T\bar{\bm x}_k\|^2.
\end{align}
According to \eqref{eq:Vx}, we compute
\begin{align*}
\|\tilde{\bm \Sigma}^{l}_1 \bm{V}_{1,1}^T\bar{\bm x}_k\| & = \left\|\tilde{\bm \Sigma}^{l}_1\left(\bm{O} \tilde{\bm \Sigma}^{-(L-1)}_1\bm Q \bm \Sigma_{L,1}^{-1} \bm U_L^T\bm e_k  + \bm \omega_k\right)\right\| \\
& \ge  \sigma_{\min}\left(\tilde{\bm \Sigma}^{l}_1\right) \left(\left\| \tilde{\bm \Sigma}^{-(L-1)}_1\bm Q \bm \Sigma_{L,1}^{-1} \bm U_L^T\bm e_k\right\|  - \|\bm \omega_k \|\right)  \\
& \ge  \left( \frac{n}{2} \right)^{\frac{l}{2L}}\left( \frac{1}{\sqrt{2n}} - \frac{4\sqrt{\delta}\sqrt[4]{K}}{n^{\frac{1}{2}+\frac{1}{2L}}} \right) \ge \frac{1}{2\sqrt{2n}}\left( \frac{n}{2} \right)^{\frac{l}{2L}},
\end{align*}
where the first inequality uses the triangular inequality and the fact that $\bm O$ and $\tilde{\bm \Sigma}_1$ can be interchanged, the second inequality follows from \eqref{eq:bound Wl} and \eqref{eq6:lem VX}, and the last inequality follows from \eqref{eq:delta2}. This, together with \eqref{eq5:lem VX}, implies \eqref{eq:norm muk}.  
\end{proof}

With the above preparations, we are ready to show \eqref{eq:discri} in Theorem \ref{thm:NC}. We now provide a formal version of \eqref{eq:discri} and its proof. 

\begin{theorem}\label{thm:disc}
Suppose that Assumptions \ref{AS:1} and \ref{AS:2} hold with $\delta,\varepsilon,\rho$ satisfying \eqref{eq:delta2}. Then, we have
\begin{align}\label{eq:Dl}
D_l \ge 1 - 32 \left( \theta + 4\delta \right) \left(2-\frac{l+1}{L}\right) - n^{-\Omega(1)},\ \forall l \in [L-1]. 
\end{align} 
\end{theorem}
\begin{proof}
According to \eqref{eq:Wl1} and \eqref{eq:muk}, we compute for all $k \neq k^\prime$,
\begin{align}\label{eq2:thm disc}
 \langle \bm \mu_{k}^l, \bm \mu_{k^\prime}^l \rangle  & =  \bar{\bm x}_k^T \bm W_{l:1}^T\bm W_{l:1} \bar{\bm x}_{k^\prime} =  \bar{\bm x}_k^T  \bm V_{1,1} \tilde{\bm \Sigma}_1^{2l} \bm V_{1,1}^T\bar{\bm x}_{k^\prime}   + \bar{\bm x}_{k} ^T\bm V_{1,2} \tilde{\bm \Sigma}_2^{2l} \bm V_{1,2}^T   \bar{\bm x}_{k^\prime}. 
\end{align}
First, substituting \eqref{eq:Vx} in Lemma \ref{lem:Vx} into the above first term yields 
\begin{align}\label{eq0:thm disc}
 \bar{\bm x}_k^T  \bm V_{1,1} \tilde{\bm \Sigma}_1^{2l} \bm V_{1,1}^T\bar{\bm x}_{k^\prime} 
& \le \left|\bm e_k^T\bm U_L\bm \Sigma_{L,1}^{-1}\bm P^T\tilde{\bm \Sigma}^{-2(L-1)+2l}_1 \bm P \bm \Sigma_{L,1}^{-1} \bm U_L^T\bm e_{k^\prime}\right| +  \left|\bm \omega_{k^\prime}^T \tilde{\bm \Sigma}_1^{2l}\bm{O} \tilde{\bm \Sigma}^{-(L-1)}_1\bm P \right.   \notag \\
& \left. \bm \Sigma_{L,1}^{-1} \bm U_L^T\bm e_{k}\right| + \left|\bm \omega_k^T \tilde{\bm \Sigma}_1^{2l}\bm O \tilde{\bm \Sigma}^{-(L-1)}_1\bm P \bm \Sigma_{L,1}^{-1} \bm U_L^T\bm e_{k^\prime}\right| + \left|\bm \omega_k^T \tilde{\bm \Sigma}_1^{2l}\bm \omega_{k^\prime}\right|,
\end{align}
where $\bm P, \bm O$ are defined in (i) of Lemma \ref{lem:Vx}.  Now, we bound the above terms in turn. 
According to \eqref{eq:bound Wl1}, we compute for all $l \in [L-1]$,
\begin{align}\label{eq1:thm disc}
&\quad \left\| \tilde{\bm \Sigma}^{-2(L-1)+2l}_1 - n^{\frac{-(L-1)+l}{L}} \bm I \right\| \notag \\
& \le  n^{\frac{-(L-1)+l}{L}}\max \left\{ 1- \left( \frac{1}{1-\theta} + 3\delta \right)^{\frac{-(L-1)+l}{L}}, \left( \frac{1}{1+\theta} - 3\delta \right)^{\frac{-(L-1)+l}{L}} - 1 \right\}  \notag \\
& \le n^{\frac{-(L-1)+l}{L}} \frac{2\left( \theta + 4\delta \right)}{L}(L-l-1),
\end{align}
where the last inequality follows from Bernoulli's inequality. Using the similar argument, it follows from \eqref{eq:bound Wl2} that  
\begin{align}\label{eq4:thm disc}
\|\bm \Sigma_{L,1}^{-2} -  n^{-\frac{1}{L}} \bm I  \| \le 2n^{-\frac{1}{L}}(\theta+4\delta).
\end{align}
Then, we compute 
\begin{align*}
& \qquad \left|\bm e_k^T\bm U_L\bm \Sigma_{L,1}^{-1}\bm P^T\tilde{\bm \Sigma}^{-2(L-1)+2l}_1 \bm P \bm \Sigma_{L,1}^{-1} \bm U_L^T\bm e_{k^\prime}\right| \\
& \le \left|\bm e_k^T\bm U_L \bm \Sigma_{L,1}^{-1}\bm P^T\left( \tilde{\bm \Sigma}^{-2(L-1)+2l}_1 - n^{\frac{-(L-1)+l}{L}} \bm I\right) \bm P \bm \Sigma_{L,1}^{-1}  \bm U_L^T\bm e_{k^\prime}\right| +  \\
&\qquad  \left| n^{\frac{-(L-1)+l}{L}}  \bm e_k^T\bm U_L  \left( \bm\Sigma_{L,1}^{-2} - n^{-\frac{1}{L}} \bm I \right)  \bm U_L^T\bm e_{k^\prime}\right| \\
& \le \|\bm \Sigma_{L,1}\|^{-2} \left\| \tilde{\bm \Sigma}^{-2(L-1)+2l}_1 - n^{\frac{-(L-1)+l}{L}} \bm I \right\|  +  n^{\frac{-(L-1)+l}{L}}\|\bm \Sigma_{L,1}^{-2} -  n^{-\frac{1}{L}} \bm I  \|  \\	
& \le 2^{1+1/L}\left( \theta + 4\delta \right) n^{\frac{-L+l}{L}} \frac{L-l-1}{L} + 2(\theta + 4\delta)n^{\frac{-L+l}{L}},
\end{align*}
where  the second inequality follows from \eqref{eq1:thm disc} and \eqref{eq4:thm disc}. Next, we compute 
\begin{align*}
\left|\bm \omega_{k^\prime}^T \tilde{\bm \Sigma}_1^{2l}\bm{O} \tilde{\bm \Sigma}^{-(L-1)}_1\bm P \bm \Sigma_{L,1}^{-1} \bm U_L^T\bm e_{k}\right| & \le \|\bm \omega_{k^\prime}\| \|\tilde{\bm \Sigma}_1\|^{-L+2l+1} \|\bm \Sigma_{L,1}\|^{-1} \\
& \le 8 \sqrt{\delta}\sqrt[4]{K} n^{\frac{-L+l}{L}} n^{-\frac{1}{2L}},
\end{align*}
where the second inequality follows from (i) in Lemma \ref{lem:Vx} and \eqref{eq:bound Wl}. Using the same argument, we have 
\begin{align*}
\left|\bm \omega_k^T \tilde{\bm \Sigma}_1^{2l}\bm O \tilde{\bm \Sigma}^{-(L-1)}_1\bm P \bm \Sigma_{L,1}^{-1} \bm U_L^T\bm e_{k^\prime}\right| \le 8\sqrt{\delta}\sqrt[4]{K} n^{\frac{-L+l}{L}} n^{-\frac{1}{2L}}. 
\end{align*}
and 
\begin{align*}
\left|\bm \omega_k^T \tilde{\bm \Sigma}_1^{2l}\bm \omega_{k^\prime}\right| \le 16\delta \sqrt{K}n^{\frac{-L+l}{L}} n^{-\frac{1}{L}} \le \sqrt{\delta}\sqrt[4]{K} n^{\frac{-L+l}{L}} n^{-\frac{1}{2L}}. 
\end{align*}
where the last inequality follows from \eqref{eq:delta2}. These, together with \eqref{eq0:thm disc}, yield
\begin{align}\label{eq3:thm disc}
 \bar{\bm x}_k^T  \bm V_{1,1} \tilde{\bm \Sigma}_1^{2l} \bm V_{1,1}^T\bar{\bm x}_{k^\prime}  & \le 2\sqrt{2} \left( \theta + 4\delta \right) n^{\frac{-L+l}{L}} \frac{L-l-1}{L} + 2(\theta + 4\delta)n^{\frac{-L+l}{L}} + \notag \\
&\qquad 9\sqrt{\delta}\sqrt[4]{K} n^{\frac{-L+l}{L}} n^{-\frac{1}{2L}}.  
\end{align}
Moreover, we compute
\begin{align*}
 \bar{\bm x}_{k} ^T\bm V_{1,2} \tilde{\bm \Sigma}_2^{2l} \bm V_{1,2}^T   \bar{\bm x}_{k^\prime}  \le \|\bar{\bm x}_k\|\| \bar{\bm x}_{k^\prime}\| \| \tilde{\bm \Sigma}_2^{2l} \| \le \left( \frac{1}{n} + \frac{\theta}{N} \right)\varepsilon^{2l}. 
\end{align*}
where the second inequality follows from $\|\bar{\bm x}_k\|^2 \le \|\sum_{i=1}^n \bm x_{k,i}/n\|^2 \le 1/n + \theta/N$ due to Assumption \ref{AS:1}. This, together with \eqref{eq:norm muk} in Lemma \ref{lem:Vx}, \eqref{eq2:thm disc}, and \eqref{eq3:thm disc}, yields for all $k,l$,
\begin{align*}
\frac{ \langle \bm \mu_{k}^l, \bm \mu_{k^\prime}^l \rangle }{\|\bm \mu_{k}^l\| \|\bm \mu_{k^\prime}^l\|} & \le 32 \left( \theta + 4\delta \right) \left(2-\frac{l+1}{L}\right)  \\
&\quad + 144\sqrt{\delta}\sqrt[4]{K} n^{-\frac{1}{2L}} + 32n^{-\frac{l}{L}} \left( 1+  \frac{\theta}{K} \right)\varepsilon^{2l}.  
\end{align*} 
This, together with \eqref{eq:nc1}, yields \eqref{eq:Dl}. 
\end{proof}

\subsection{Proof of \Cref{thm:NC}}\label{subsec:pf thm}

Based on \Cref{thm:comp} and \Cref{thm:disc}, we can directly obtain \Cref{thm:NC}. 

\subsection{Proof of \Cref{prop:AS}}\label{subsec:pf prop}

Now, we study the convergence behavior of gradient flow with the initialization \eqref{eq:init} for solving Problem \eqref{eq:obj}. 

\begin{proof} 
Since GD converges to a global optimal solution, we have $\bm W_{L:1}\bm 
X = \bm Y$. This, together with the fact that $\bm X$ is square and orthogonal, yields \eqref{eq:min norm}. 
According to \cite{du2018algorithmic,arora2018optimization}, the iterates of gradient flow for solving Problem \eqref{eq:obj} satisfy \eqref{eq:gf}. This, together with the initialization \eqref{eq:init}, yields for all $t \ge 0$, 
\begin{align*}
    & \|\bm W_{l+1}(t)^T \bm W_{l+1}(t) - \bm W_{l}(t)\bm W_{l}(t)^T \|_F = 0, \forall l \in [L-2],\\
    & \| \bm W_L(t)^T \bm W_L(t) - \bm W_{L-1}(t) \bm W_{L-1}(t)^T\|_F = \xi^2\sqrt{d-K}. 
\end{align*}
This implies that \eqref{eq:bala} holds with $\delta = \xi^2\sqrt{d-K}$. Moreover, using \cite[Theorem 1]{yaras2023law}, we obtain that \eqref{eq:singu} holds with
$\varepsilon=\xi$ and $\rho=0$. Then, the proof is completed. 
\end{proof}

 \begin{figure}[t]
     \begin{subfigure}{0.5\textwidth}
     \includegraphics[width = 0.9\linewidth]{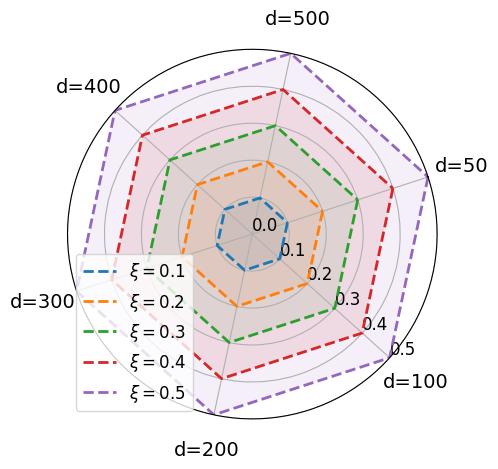}
     \caption{$\min_{l \in [L-1], K+1\leq i\leq d-K} \; \sigma_i(\bm W_l) $} \label{fig:a}
     \end{subfigure} 
     \begin{subfigure}{0.5\textwidth}
     \includegraphics[width = 0.9\textwidth]{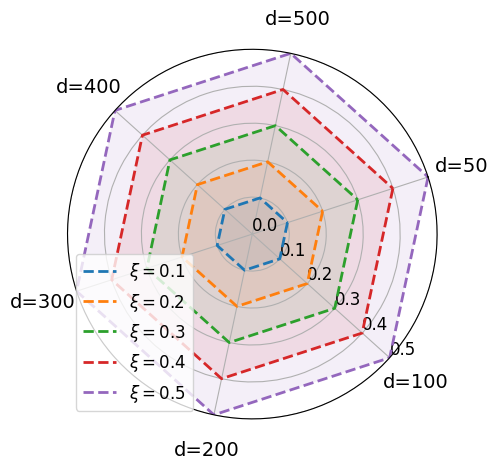}
     \caption{$\max_{l \in [L-1], K+1\leq i\leq d-K} \; \sigma_i(\bm W_l)$}\label{fig:b}
     \end{subfigure} 
     \caption{\textbf{Approximate low-rankness hold universally across different training and network configurations.} The experimental setup is deferred to \Cref{subsec:exp-assump}. For each figure, the vertex of each polygon denotes the singular value of the weights for each $d$ and $\xi$. As we observe, for each fixed $\xi$, the polygon is a hexagon, implying that the singular values are approximately equal and independent of the network width $d$. Second, the singular values grow when the initialization scale $\xi$ increases and also confirms \Cref{prop:AS} that $\varepsilon = \xi$. Moreover, we conclude that $\sigma_{\min} \approx \sigma_{\max}$ when comparing (a) and (b). Therefore, we can see that both the minimum and maximum singular values remain unchanged, demonstrating that the GD operates only within a 2K-dimensional invariant subspace. }
     \label{fig:assump_2_num3}
 \end{figure}

\section{Extra Experiments}\label{app:extra-exp}

\begin{figure*}[t]
    \begin{subfigure}{0.48\textwidth}
    \includegraphics[width = 0.95\linewidth]{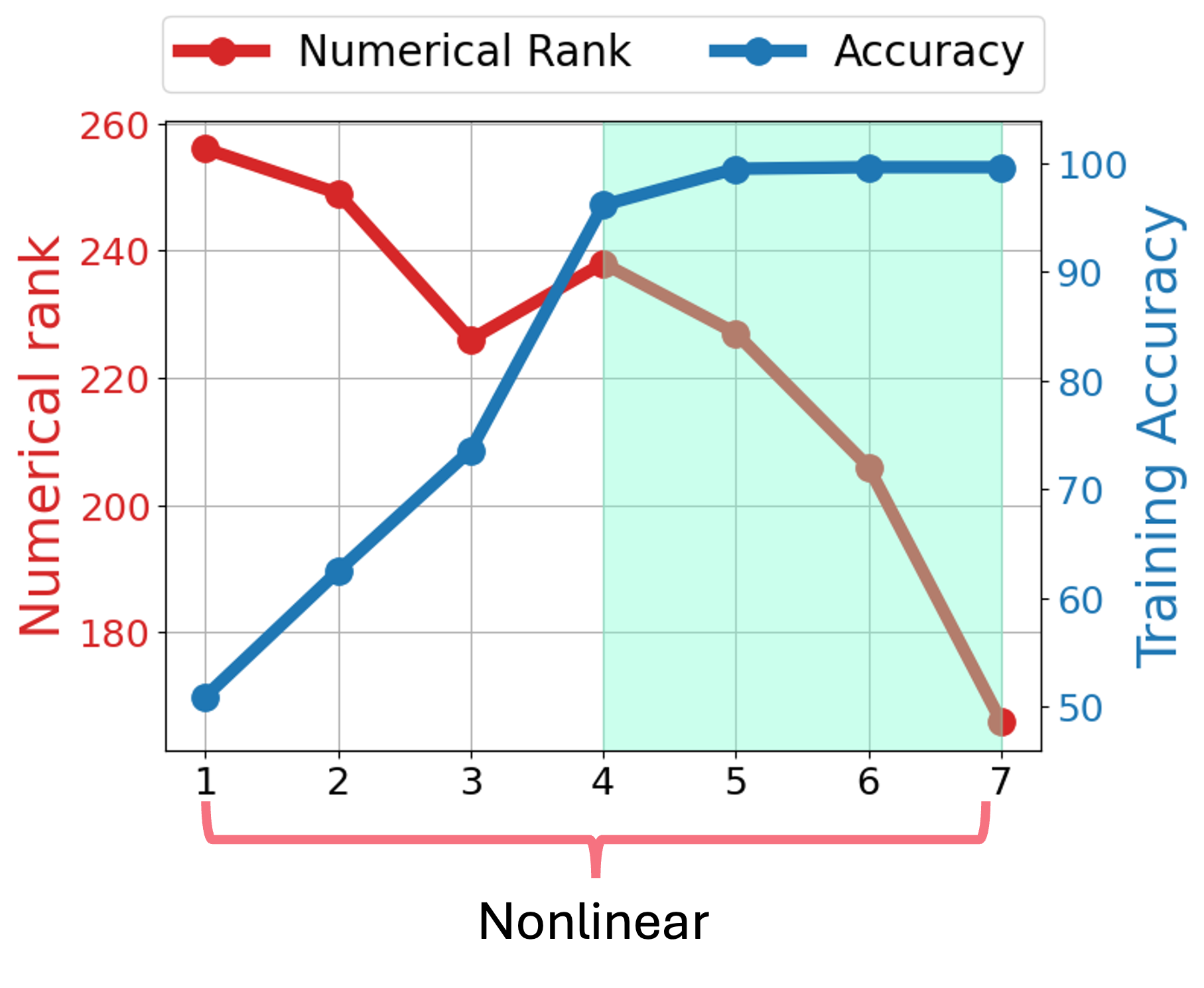}
    \caption{A 7-layer MLP network with ReLU} 
    \end{subfigure} 
    \begin{subfigure}{0.48\textwidth}
    \includegraphics[trim={0 0 0 1cm}, width = 0.94\linewidth]{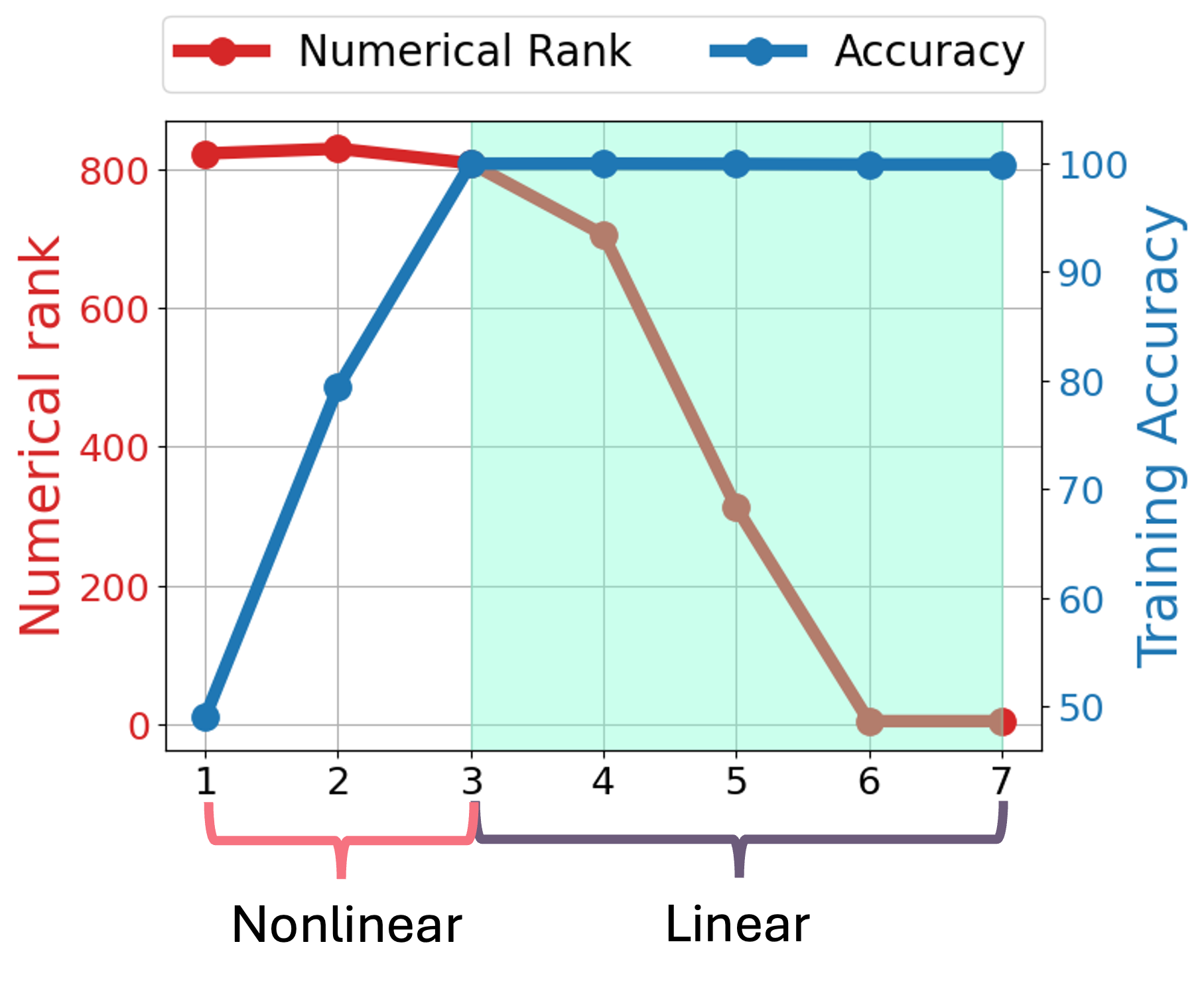}
    \caption{A 7-layer hybrid network} 
    \end{subfigure} 
   \caption{\textbf{Numerical rank and training accuracy across layers on the SST-5 language dataset.} We train two networks with different architectures on the SST-5 dataset: (a) A 7-layer multilayer perceptron (MLP) network with ReLU activation, (b) A hybrid network formed by 3-layer MLP with ReLU activation followed by a 4-layer linear network. For each figure, we plot the numerical rank of the features of each layer and training accuracy by applying linear probing to the output of each layer against the number of layers, respectively. The green shade indicates that the features at these layers are {\em approximately} linearly separable, as evidenced by the near-perfect accuracy achieved by a linear classifier. } 
    \label{fig:nr_acc_language}
\end{figure*}

\begin{figure*}[t]
\begin{center}
       \begin{subfigure}{0.48\textwidth}
    \includegraphics[width = 0.95\linewidth]{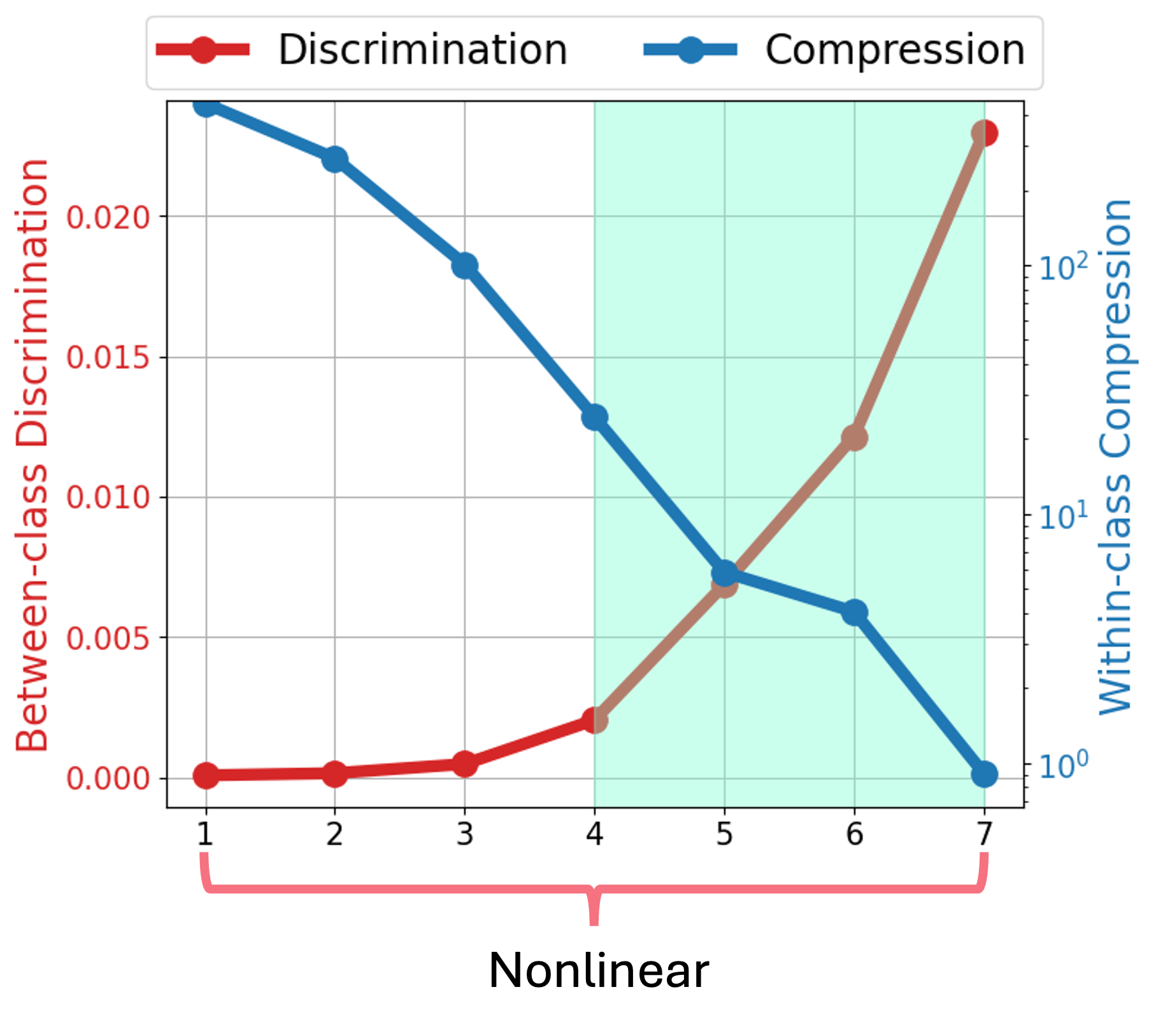}
    \caption{A 7-layer MLP network with ReLU} 
    \end{subfigure} 
    \hfill 
    \begin{subfigure}{0.48\textwidth}
    \includegraphics[trim={0 0 0 1cm}, width = 0.94\linewidth]{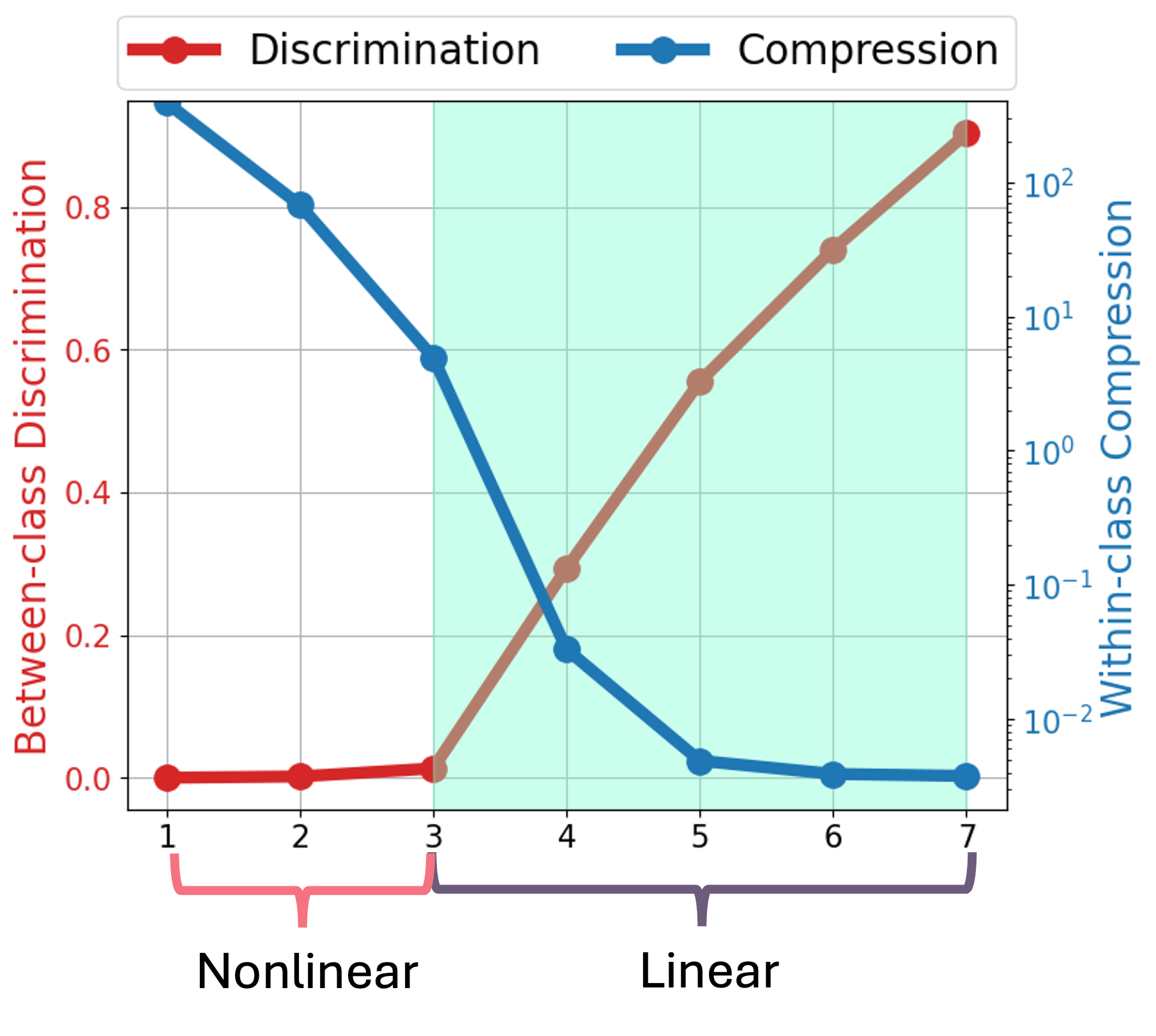}
    \caption{A 7-layer hybrid network} 
    \end{subfigure} 
\end{center}
\vspace{-0.2in}
    \caption{\textbf{Within-class compression and between-class discrimination across layers on the SST-5 text dataset.} We use the same setup as in Figure \ref{fig:nr_acc_language} and plot the metrics of within-class compression and between-class discrimination of the two networks across layers after training, respectively.} 
    \label{fig:comp_disc_language}
\end{figure*}

\subsection{Assumption Discussion}
Here, we provide more experimental results for validating the low-rankness property in \Cref{AS:2} of trained weights in DLN. 
To validate the low-rankness assumption, we train 4-layer DLNs using the orthogonal initialization with width $d \in \{50,100,200,300,400,500\}$ and initialization scale $\xi \in \{0.1,0.2,0.3,0.4,0.5\} $. The training setting is the same with \Cref{subsec:theory_and_assump}. For the weights $\{\bm W_l\}_{l=1}^{L-1}$ of each network that have been trained until convergence, we plot 
  \begin{align*}
      \text{minimum singular value}\quad \sigma_{\min} \;&:=\; \min_{ K+1\leq i\leq d-K, \;l \in [L-1]} \sigma_i(\bm W_l), \\
      \text{maximum singular value}\quad \sigma_{\max} ;&:=\; \max_{ K+1\leq i\leq d-K,\;l \in [L-1]} \sigma_i(\bm W_l) .
  \end{align*}
 The results are shown in \Cref{fig:assump_2_num3}.
  Here, for each $l \in [L-1]$, we assume that $\sigma_1(\bm W_l) \geq \sigma_2(\bm W_l) \geq \cdots \geq \sigma_d(\bm W_l) $. For each figure, the vertex of each polygon denotes the singular value of the weights for each $d$ and $\xi$. As we observe, for each fixed $\xi$, the polygon is a hexagon, implying that the singular values are approximately equal and independent of the network width $d$. Second, the singular values grow when the initialization scale $\xi$ increases and also confirms \Cref{prop:AS} that $\varepsilon = \xi$. Moreover, we conclude that $\sigma_{\min} \approx \sigma_{\max}$ when comparing \Cref{fig:a} and \Cref{fig:b}. Therefore, we can see that both the minimum and maximum singular values remain unchanged, demonstrating that the GD operates only within a 2K-dimensional invariant subspace.

\subsection{Exploration on Other Datasets}\label{app:extra-exp-modality}
In the main body of the paper, we primarily present results on the CIFAR and FashionMNIST datasets. In this subsection, we extend our experiments to a broader range of datasets.


\paragraph{Experiments on Language Dataset.} To further illustrate the analogous roles of deep linear and nonlinear layers across different data modalities, we reproduce the experiments from \Cref{fig:intro1} using the SST-5 text dataset \citep{socher2013recursive}, keeping all experimental settings unchanged except for the dataset. As shown in \Cref{fig:nr_acc_language}, we observe a similar trend: early nonlinear layers enhance linear separability, while deeper layers—regardless of their nonlinearity—progressively compress the features. We also use this experimental setup to verify \Cref{thm:NC}, with results presented in \Cref{fig:comp_disc_language}. As observed, within-class compression and between-class discrimination respectively exhibit progressively collapsing and increasing trends, further confirming that our theoretical insights generalize beyond the image domain.


\paragraph{Experiments on ImageNet.} Due to resource constraints, we use the PyTorch pre-trained VGG11 network \citep{simonyan2014very} on ImageNet \citep{deng2009imagenet} to extract features and evaluate within-class compression and between-class discrimination. The results are presented in \Cref{fig:imagenet}. We observe that the compression metric continues to exhibit an approximately geometric decay, and the discrimination metric shows increasing pattern, thereby validating our theory on the large-scale benchmark dataset.

 \begin{figure}[t]
     \begin{center}
         \includegraphics[width=0.5\linewidth]{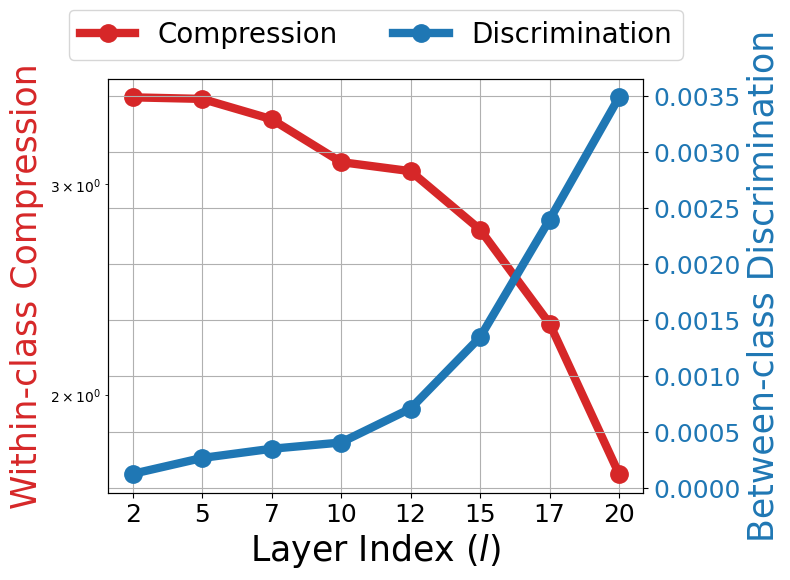}
     \end{center}
     \vspace{-0.2in}
     \caption{\textbf{Within-class compression and between-class discrimination of features from a VGG11 network pre-trained on ImageNet.} We use the PyTorch-provided pre-trained VGG11 model and extract features from layers 2, 5, 7, 10, 12, 15, 17, 20.}
     \label{fig:imagenet}
 \end{figure}


\section{Auxiliary Results}\label{app:theorem}

 \begin{lemma}\label{lem:singular AA}
Given a matrix $\bA \in \R^{m\times n}$ of rank $r \in \mathbb{N}_+$, we have
\begin{align}\label{eq:singular AA}
 \|\bA^T\bA\|_F \le \|\bA\|_F^2 \le \sqrt{r}\|\bA^T\bA\|_F. 
\end{align}
\end{lemma}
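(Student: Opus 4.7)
The plan is to reduce both inequalities to elementary inequalities on the singular values of $\bA$ by passing through the singular value decomposition. Let $\sigma_1, \dots, \sigma_r > 0$ denote the nonzero singular values of $\bA$. Since the Frobenius norm is unitarily invariant, I have the identities
\begin{align*}
\|\bA\|_F^2 \;=\; \sum_{i=1}^{r} \sigma_i^2, \qquad \|\bA^T\bA\|_F \;=\; \sqrt{\sum_{i=1}^{r} \sigma_i^4},
\end{align*}
where the second identity uses that $\bA^T\bA$ is symmetric PSD with eigenvalues $\sigma_i^2$ (and zeros).

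For the left-hand inequality $\|\bA^T\bA\|_F \le \|\bA\|_F^2$, I would use the fact that $\sum_i \sigma_i^4 \le (\sum_i \sigma_i^2)^2$, which follows from nonnegativity of the cross terms $2\sigma_i^2\sigma_j^2 \ge 0$ for $i \ne j$. Taking square roots yields the bound.

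For the right-hand inequality $\|\bA\|_F^2 \le \sqrt{r}\,\|\bA^T\bA\|_F$, I would apply the Cauchy--Schwarz inequality to the vectors $(\sigma_1^2, \dots, \sigma_r^2)$ and $(1, \dots, 1) \in \R^r$, yielding
\begin{align*}
\sum_{i=1}^{r} \sigma_i^2 \;\le\; \sqrt{r}\,\sqrt{\sum_{i=1}^{r} \sigma_i^4}.
\end{align*}
Neither direction presents a real obstacle; the only subtlety worth mentioning is that the factor $\sqrt{r}$ uses the \emph{rank} of $\bA$ (i.e., the number of nonzero singular values) rather than $\min\{m,n\}$, which is important for tightness when $\bA$ is low-rank.
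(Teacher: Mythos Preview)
Your proof is correct and follows essentially the same approach as the paper: reduce to the singular values via the SVD and then use elementary inequalities on $\sum \sigma_i^2$ and $\sum \sigma_i^4$. The only cosmetic difference is that the paper cites the AM--QM inequality for the right-hand bound where you invoke Cauchy--Schwarz, but these are equivalent here.
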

\begin{proof}
Let $\bA = \bU\bm{\Sigma}\bV^T$ be a singular value decomposition of $\bA$, where
\begin{align*}
\bm{\Sigma} = \begin{bmatrix}
\tilde{\bm{\Sigma}} & \bm{0} \\
\bm{0} & \bm{0} 
\end{bmatrix},\ \tilde{\bm{\Sigma}} = \diag(\sigma_1,\dots,\sigma_r),
\end{align*} 
$\bU \in \mO^m$, and $\bV \in \mO^n$. Then we compute 
\begin{align*}
 \|\bA^T\bA\|_F^2 = \|\bm{\Sigma}^T\bm{\Sigma}\|_F^2 =  \sum_{i=1}^r \sigma_i^4,\ \|\bA\|_F^2 = \|\bm{\Sigma}\|_F^2 = \sum_{i=1}^r \sigma_i^2,
\end{align*} 
which, together with the AM-QM inequality, directly implies \eqref{eq:singular AA}. 
\end{proof}

\begin{lemma}\label{lem:mini singular}
For arbitrary matrices $\bm{A},\bm{B} \in \R^{n \times n}$, if $\bm{A}$ has full column rank and $\bm{B} \neq \bm{0}$ or $\bm{B}$ has full row rank and $\bm{A} \neq \bm{0}$, it holds that 
\begin{align}
\sigma_{\min}\left(\bm A\bm B\right) \ge \sigma_{\min}\left(\bm A\right) \sigma_{\min}\left(\bm B\right). 
\end{align}
\end{lemma}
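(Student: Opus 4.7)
The plan is to exploit the variational characterization $\sigma_{\min}(\bm M) = \min_{\|\bm x\|=1}\|\bm M\bm x\|$, which is valid for a square matrix $\bm M$. Since $\bm A$ and $\bm B$ are $n\times n$, the stated hypotheses ($\bm A$ full column rank, or $\bm B$ full row rank) are equivalent to invertibility of the corresponding factor, and in particular they guarantee that $\sigma_{\min}(\bm A)\sigma_{\min}(\bm B)>0$ so that the bound is nontrivial.

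First I would prove the elementary inequality $\|\bm M\bm y\|\ge \sigma_{\min}(\bm M)\|\bm y\|$ for every $\bm y\in\R^n$ and every square matrix $\bm M$. This follows from
\begin{equation*}
\|\bm M\bm y\|^2 \;=\; \bm y^T\bm M^T\bm M\bm y \;\ge\; \lambda_{\min}(\bm M^T\bm M)\,\|\bm y\|^2 \;=\; \sigma_{\min}(\bm M)^2\|\bm y\|^2.
\end{equation*}
Applying this once with $\bm M=\bm A$ and $\bm y=\bm B\bm x$ and once with $\bm M=\bm B$ and $\bm y=\bm x$ yields, for any $\bm x$ with $\|\bm x\|=1$,
\begin{equation*}
\|\bm A\bm B\bm x\|\;\ge\;\sigma_{\min}(\bm A)\,\|\bm B\bm x\|\;\ge\;\sigma_{\min}(\bm A)\,\sigma_{\min}(\bm B).
\end{equation*}
Taking the minimum over the unit sphere on the left-hand side gives exactly $\sigma_{\min}(\bm A\bm B)\ge\sigma_{\min}(\bm A)\sigma_{\min}(\bm B)$.

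There is no real obstacle here; the only point worth being careful about is that the variational formula $\sigma_{\min}(\bm M) = \min_{\|\bm x\|=1}\|\bm M\bm x\|$ uses the smallest singular value indexed by the matrix dimension, which for square matrices coincides with the smallest nonzero singular value precisely when $\bm M$ is invertible. The assumed rank conditions on $\bm A$ or $\bm B$ ensure that at least one of $\sigma_{\min}(\bm A),\sigma_{\min}(\bm B)$ is strictly positive, so the bound is informative; if neither factor is invertible the claimed inequality still holds but degenerates to $\sigma_{\min}(\bm A\bm B)\ge 0$.
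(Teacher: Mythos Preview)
Your proof is correct. The paper states this lemma as an auxiliary result without proof, so there is no argument to compare against; your use of the variational characterization $\sigma_{\min}(\bm M)=\min_{\|\bm x\|=1}\|\bm M\bm x\|$ and the chain $\|\bm A\bm B\bm x\|\ge \sigma_{\min}(\bm A)\|\bm B\bm x\|\ge \sigma_{\min}(\bm A)\sigma_{\min}(\bm B)$ is the standard way to establish it for square matrices.
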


\begin{lemma}\label{lem:QQ=UU}
Suppose that $\bm{Q} \in \mO^{d\times K}$ is an orthonormal matrix and $\bm{A} \in \R^{K\times K}$ is a symmetric matrix of full rank, where $d > K$. Let $\bm{U}\bm{\Lambda}\bm{U}^T = \bm{Q}\bm{A}\bm{Q}^T$ be a compact singular value decomposition of $\bm{Q}\bm{A}\bm{Q}^T$, where $\bm{U} \in \mO^{d\times K}$ and $\bm{\Lambda} \in \R^{K\times K}$ is a diagonal matrix. Then, it holds that 
\begin{align}\label{eq:QQ=UU}
\bm{Q}\bm{Q}^T = \bm{U}\bm{U}^T.
\end{align}    
\end{lemma}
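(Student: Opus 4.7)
The plan is to prove the identity by showing that both $\bm Q\bm Q^T$ and $\bm U\bm U^T$ are the orthogonal projector onto the same $K$-dimensional subspace of $\R^d$, namely $\range(\bm Q\bm A\bm Q^T)$. Since $\bm Q$ has orthonormal columns, $\bm Q\bm Q^T$ is the orthogonal projector onto $\range(\bm Q)$; similarly $\bm U\bm U^T$ is the orthogonal projector onto $\range(\bm U)$. Two orthogonal projectors are equal if and only if their ranges coincide, so it suffices to establish $\range(\bm Q)=\range(\bm U)$.

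To identify each range with $\range(\bm Q\bm A\bm Q^T)$, I would argue as follows. On the one hand, $\range(\bm Q\bm A\bm Q^T)\subseteq\range(\bm Q)$ trivially. For the reverse inclusion, note that $\bm A$ has full rank $K$ and $\bm Q$ has rank $K$, so by Lemma~\ref{lem:mini singular} (or simply by the submultiplicativity of the rank together with the fact that $\rank(\bm Q\bm A\bm Q^T)\le K$ and cannot drop) we have $\rank(\bm Q\bm A\bm Q^T)=K=\dim\range(\bm Q)$. Combined with the inclusion above, this forces $\range(\bm Q\bm A\bm Q^T)=\range(\bm Q)$. On the other hand, since $\bm U\bm\Lambda\bm U^T$ is a compact SVD, the diagonal matrix $\bm\Lambda\in\R^{K\times K}$ has nonzero diagonal entries (equal to the singular values of $\bm Q\bm A\bm Q^T$, which are nonzero because $\rank(\bm Q\bm A\bm Q^T)=K$). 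Hence $\bm\Lambda$ is invertible, and the same reasoning gives $\range(\bm U\bm\Lambda\bm U^T)=\range(\bm U)$.

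Using the hypothesis $\bm U\bm\Lambda\bm U^T=\bm Q\bm A\bm Q^T$, the two equalities yield $\range(\bm Q)=\range(\bm U)$, from which $\bm Q\bm Q^T=\bm U\bm U^T$ follows.

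I do not anticipate any genuine obstacle: the only slightly delicate point is justifying that $\bm\Lambda$ has nonzero diagonal (so that its inverse exists on $\range(\bm U)$), which relies on the definition of the compact SVD and the full rank of $\bm A$. Everything else is a direct application of the correspondence between orthogonal projectors and their ranges.
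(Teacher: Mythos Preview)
Your proof is correct but takes a different route from the paper. The paper's argument is purely algebraic: from $\bm U\bm\Lambda\bm U^T=\bm Q\bm A\bm Q^T$ it left-multiplies by $\bm Q^T$ and then by $\bm Q$ to obtain $\bm Q\bm Q^T\bm U\bm\Lambda=\bm Q\bm A\bm Q^T\bm U=\bm U\bm\Lambda\bm U^T\bm U=\bm U\bm\Lambda$, and then cancels the invertible $\bm\Lambda$ to get $\bm Q\bm Q^T\bm U=\bm U$, which (since $\bm U$ has orthonormal columns) yields $\bm Q\bm Q^T=\bm U\bm U^T$. Your approach is geometric: you recognize both sides as orthogonal projectors and show they project onto the common subspace $\range(\bm Q\bm A\bm Q^T)$ via a rank count. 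Both arguments ultimately rest on the same fact (full rank of $\bm A$ forces $\rank(\bm Q\bm A\bm Q^T)=K$, so $\bm\Lambda$ is invertible) and are of comparable length; yours makes the underlying geometry explicit, while the paper's stays at the level of matrix identities. One minor remark: the citation of Lemma~\ref{lem:mini singular} is a stretch, since $\bm Q\bm A\bm Q^T$ is $d\times d$ of rank $K<d$ and hence has $\sigma_{\min}=0$ in the usual sense; the clean justification of $\rank(\bm Q\bm A\bm Q^T)=K$ is simply that $\bm Q^T$ is surjective, $\bm A$ is bijective, and $\bm Q$ is injective, so $\range(\bm Q\bm A\bm Q^T)=\bm Q(\R^K)=\range(\bm Q)$.
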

\begin{proof}
It follows from $\bm{U}\bm{\Lambda}\bm{U}^T = \bm{Q}\bm{A}\bm{Q}^T$, $\bm{Q} \in \mO^{d\times K}$, and $\bm{U} \in \mO^{d\times K}$ that 
\begin{align}
\bm{Q}^T\bm{U} \bm{\Lambda} = \bm{A}\bm{Q}^T\bm{U}. 
\end{align}
Now, left-multiplying both sides by $\bm{Q}$ yields
\begin{align*}
\bm{Q}\bm{Q}^T\bm{U}\bm{\Lambda} = \bm{Q}\bm{A}\bm{Q}^T\bm{U} = \bm{U}\bm{\Lambda}\bm{U}^T\bm{U} = \bm{U}\bm{\Lambda}.
\end{align*} 
Since $\bm{A}$ is a symmetric matrix of full rank, $\bm{\Lambda}$ is a diagonal matrix with non-zero entries. We can multiply both sides by $\bm{\Lambda}^{-1}$ and obtain $\bm{Q}\bm{Q}^T\bm{U} = \bm{U}$, which implies \eqref{eq:QQ=UU} by $\bm U \in \mO^{d\times K}$. 

\end{proof}

\begin{lemma}\label{lem:Bern}
For every real number $1 \le \theta \le 1$ and $x \ge -1$, it holds that
\begin{align*}
(1+x)^\theta \le 1 + \theta x. 
\end{align*}
\end{lemma}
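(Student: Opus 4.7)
The statement as written contains an apparent typo (the hypothesis $1 \le \theta \le 1$ forces $\theta = 1$, making the inequality trivial); the intended hypothesis is $0 \le \theta \le 1$, giving the standard concave form of Bernoulli's inequality.

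My plan is to prove it by convex-analytic means: the map $g(t) = t^\theta$ is concave on $[0,\infty)$ for $\theta \in [0,1]$, so its graph lies below any tangent line. Taking the tangent at $t = 1$, where $g(1) = 1$ and $g'(1) = \theta$, gives $t^\theta \le 1 + \theta(t-1)$ for all $t \ge 0$, and substituting $t = 1 + x$ with $x \ge -1$ yields the claim. This is a two-line proof once concavity is invoked.

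Since the paper is self-contained and may prefer not to cite concavity as a black box, I would alternatively give an elementary single-variable argument. Define $f(x) := 1 + \theta x - (1+x)^\theta$ on $[-1,\infty)$. Then $f(0) = 0$, and for $x > -1$,
\begin{equation*}
f'(x) \;=\; \theta - \theta(1+x)^{\theta-1} \;=\; \theta\bigl(1 - (1+x)^{\theta-1}\bigr).
\end{equation*}
Because $\theta - 1 \le 0$, the function $x \mapsto (1+x)^{\theta-1}$ is non-increasing on $(-1,\infty)$ and equals $1$ at $x=0$. Therefore $f'(x) \le 0$ on $(-1, 0]$ and $f'(x) \ge 0$ on $[0,\infty)$, so $f$ attains its minimum on $[-1,\infty)$ at $x = 0$, giving $f(x) \ge f(0) = 0$, which is exactly $(1+x)^\theta \le 1 + \theta x$. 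The boundary case $x = -1$ is handled by continuity (or directly, since $(1+x)^\theta = 0 \le 1 - \theta$).

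There is no real obstacle here; the only subtlety to flag is interpreting $(1+x)^\theta$ at $x=-1$ when $\theta \in (0,1)$ (define it as $0$) and noting that the endpoint cases $\theta = 0$ and $\theta = 1$ reduce the inequality to an equality. I would present the monotonicity argument above as the main proof since it requires no external machinery beyond elementary calculus.
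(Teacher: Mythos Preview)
Your proof is correct. The paper itself does not prove this lemma; it is simply stated as an auxiliary result (it is the standard Bernoulli inequality for exponents in $[0,1]$) and invoked later without justification. Your observation about the typo in the hypothesis is accurate, and both of your arguments---the concavity tangent-line argument and the elementary monotonicity computation for $f(x) = 1 + \theta x - (1+x)^\theta$---are valid and complete.
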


\vskip 0.2in
{\small 
\bibliographystyle{ieeetr}
\bibliography{pregressive_NC}
}

\end{document}